\documentclass{article}

\PassOptionsToPackage{numbers}{natbib}

\usepackage[preprint]{neurips_2026}
\usepackage{bm}

\usepackage[utf8]{inputenc} 
\usepackage[T1]{fontenc}    
\usepackage[hypertexnames=false]{hyperref}       
\usepackage{url}            
\usepackage{booktabs}       
\usepackage{amsfonts}       
\usepackage{nicefrac}       
\usepackage{microtype}      
\usepackage{xcolor}         

\usepackage{amsmath, amssymb}
\usepackage{enumitem}
\usepackage{amsmath}
\usepackage{amsthm}
\usepackage{bm}
\usepackage{thm-restate}
\usepackage{thmtools}
\usepackage{graphicx}

\newtheorem{theorem}{Theorem}[section]
\newtheorem{proposition}[theorem]{Proposition}
\newtheorem{lemma}{Lemma}[section]
\newtheorem{corollary}{Corollary}[section]
\theoremstyle{definition}
\newtheorem{definition}{Definition}[section]
\newtheorem{assumption}{Assumption}[section]
\theoremstyle{remark}
\newtheorem{remark}{Remark}[section]

\newcommand{\Real}{\mathbb{R}}
\newcommand{\vect}[1]{\bm{#1}}
\newcommand{\vtheta}{\vect{\theta}}
\newcommand{\f}[2]{f\left(#1;#2\right)}
\newcommand{\g}[1]{g\left(#1\right)}
\newcommand{\ginv}[1]{g|^{-1}_{\vtheta_0}\left(#1\right)}
\newcommand{\J}[1]{J\left(#1\right)}
\newcommand{\K}[1]{K\left(#1\right)}
\newcommand{\gf}[1]{\phi_{\mathrm{gf}}\left(#1,t\right)}
\newcommand{\orbit}[1]{\mathcal{O}_{\mathrm{flow}}\left(#1\right)}
\newcommand{\M}[1]{\mathcal{M}_\mathrm{flow}\left(#1\right)}
\newcommand{\fibre}[1]{\Theta\left(#1\right)}
\newcommand{\Loss}[2]{\mathcal{L}\left(#1,#2\right)}
\newcommand{\norm}[1]{\left\lVert#1\right\rVert}
\newcommand{\dist}[4]{d^{#1}_{#2}\left(#3, #4\right)}
\newcommand{\vu}{\vect{u}}

\title{Canonical Regularisation of Wide Feature-Learning Neural Networks}

\author{%
  George Whittle$^\dagger$, Pranav Vaidhyanathan, Juliusz Ziomek, Natalia Ares, Maike A. Osborne \\
  Department of Engineering Science\\
  University of Oxford\\
  $^\dagger$ Corresponding Author: \texttt{george.whittle@reuben.ox.ac.uk} \\
}

\begin{document}

\maketitle

\begin{abstract}
Wide neural networks in the feature-learning regime drive modern deep learning, and yet they remain far less studied than their kernel-regime counterparts. We consider a critical yet under-explored difference between these two regimes: the regulariser and prior implied by gradient flow training. This \emph{canonical regularisation} property is well-studied in kernel regime networks --- of all the infinite global minima, gradient flow selects exactly the vanishing ridge solution --- and underpins the celebrated NN-GP correspondence, precisely allowing the modelling of noise during training. However, we prove ridge regularisation biases gradient flow in feature-learning regime networks, even in the infinitesimal limit of vanishing regularisation. Over training, ridge distorts the inductive bias of the network, with a particular damage done to pretrained networks where the implicit prior is informative. We resolve this by axiomatising the canonical regulariser as a regime-agnostic function-space energy and lift, which uniquely identifies ridge in the kernel regime, and crucially generalises to the feature-learning regime. By studying the Riemannian geometry of feature-learning networks, we derive \emph{geodesic ridge} from our framework, generalising ridge to the feature-learning regime. Correspondingly, we prove the canonical function-space prior is a \emph{Riemannian Gibbs Process}, generalising the more familiar Gaussian Process. As a practical contribution, we propose \emph{arc ridge} as a minimax-robust, scalable surrogate to geodesic ridge, revealing a deep relationship between early stopping and canonical regularisation across learning regimes. Finally, we demonstrate the consequences of our theory empirically on both image processing and NLP transfer-learning problems.
\end{abstract}

\section{Introduction}
\label{sec:introduction}

\begin{figure}
  \centering
  \begin{minipage}{0.49\textwidth}
    \centering
    \includegraphics[width=\linewidth,trim={0 1.5cm 0 3cm},clip]{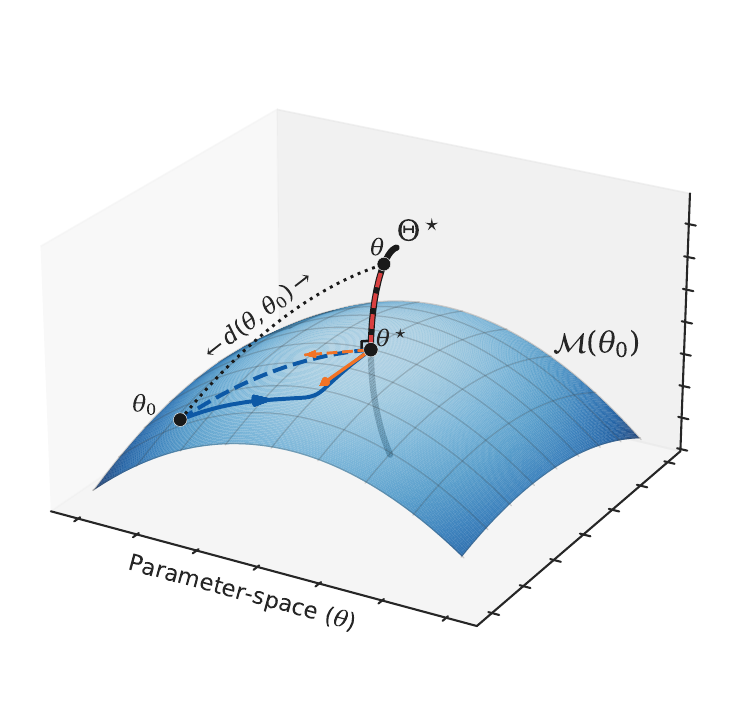}
\end{minipage}
\hfill
\begin{minipage}{0.49\textwidth}
    \centering
    \includegraphics[width=\linewidth,trim={0 1.5cm 0 3cm},clip]{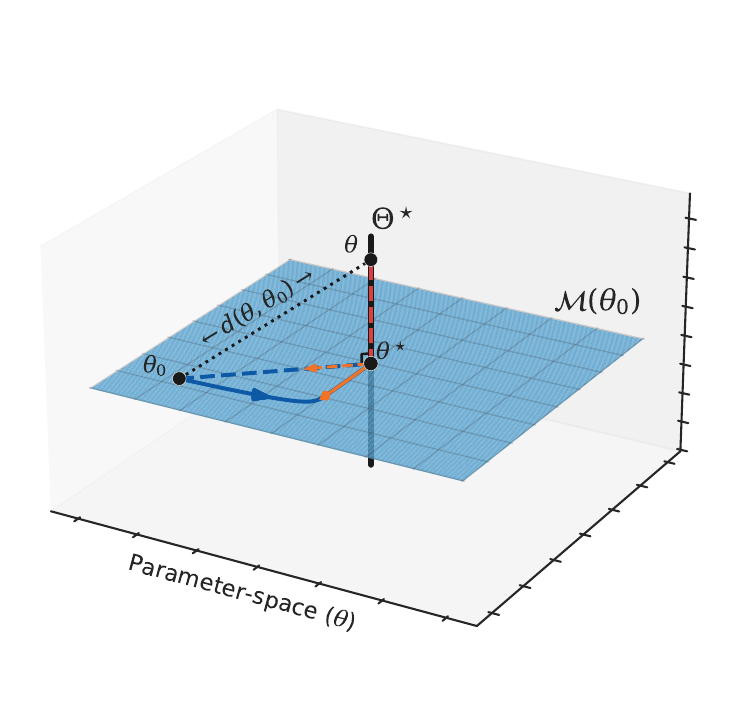}
\end{minipage}
  \caption{\emph{Left}: feature-learning regime geometry. The flow manifold $\M{\vtheta_0}$ (blue surface) is a curved $n$-dimensional submanifold near $\vtheta^\star$; $\Theta^\star$ (solid black curve) is the interpolating set, that is, global minima. The gradient flow trajectory (solid blue curve) lies on $\M{\vtheta_0}$ and converges to $\vtheta^\star$. The geodesic distance $\dist{}{\mathrm{flow}}{\vtheta_0}{\vtheta^\star}$ on $\M{\vtheta_0}$ (dashed blue curve) underpins the canonical regulariser. The regularisation gradients at $\vtheta^\star$, $\nabla_{\vtheta} \dist{2}{}{\vtheta}{\vtheta_0}$ (dashed) and $\nabla_{\vtheta} L^2(\gamma_\text{train})$ (solid), are shown in orange; both lie in $T_{\vtheta^\star}\M{\vtheta_0}$ along the realised gradient flow direction, and so do not bias gradient flow as standard and anchored ridge do. \emph{Right}: kernel regime geometry. The flow manifold is an affine subspace---geodesics are straight lines, and the canonical regulariser reduces to anchored ridge.}
  \label{fig:geometry}

  \vspace{-0.5cm}
\end{figure}

Wide neural networks now drive nearly every advance in modern deep learning, yet our theoretical understanding of these networks operating in the regime in which they actually \emph{learn features}---rather than evolving as effectively fixed kernel machines---remains strikingly limited. The kernel regime, formalised by the Neural Tangent Kernel (NTK) of \citet{jacot2018neural} and elaborated by \citet{lee2019wide} and \citet{du2019gradient}, supports a complete account of training dynamics, generalisation, and Bayesian inference; the feature-learning regime supports no comparable toolkit. This asymmetry would be of mainly theoretical interest were it not that many practices inherited from the kernel-regime continue to be applied in the feature-learning regime by default, despite the foundations on which they rest no longer holding. This paper confronts one such case---\emph{which regulariser, and which function-space prior, gradient-flow training implicitly favours}---and identifies the geometric reason.

The kernel-regime picture is well understood. An overparametrised model has an entire continuum of zero-loss interpolators, yet gradient flow selects a single, well-defined point from this continuum: the unique vanishing-regularisation limit of \emph{anchored ridge}, $\norm{\vtheta-\vtheta_0}^2$, the squared parameter-space distance from initialisation~\citep{osband2018randomized,jacot2018neural,he2020bayesian,ordonez2026gaussian}. This identification runs deeper than coincidence: vanishing-regularisation alone is not sharp enough to pin out anchored ridge from its uncountably-many quadratic competitors (Proposition~\ref{prop:naturality-nogo}); a pair of natural axioms on the function-space energy (Theorem~\ref{thm:output-energy-uniqueness}) is needed to single it out. Anchored ridge equals the squared Reproducing Kernel Hilbert Space (RKHS) norm of the change in network output under the NTK over training, so the same penalty governs both parameter and function space, and the corresponding implicit prior over functions is Gaussian---the celebrated Neural Network--Gaussian Process correspondence. From this single equivalence flow virtually all the analytical conveniences of kernel-regime networks: principled treatment of label noise, optimal shrinkage, calibrated uncertainty, predictable transfer behaviour. Anchored ridge is, in this sense, the \emph{canonical} regulariser of the kernel regime. Standard ridge $\norm{\vtheta}^2$~\citep{hoerl1970ridge,tikhonov1963solution} recovers the same selection only when $\vtheta_0=\vect{0}$; for any non-zero initialisation it already biases the gradient-flow limit \emph{within} the kernel regime (Remark~\ref{rmk:weight-decay-fails}, Figure~\ref{fig:trajectories}, right).

Outside the kernel regime, the same picture decomposes. \citet{yang2021tensor} establish a sharp dynamical dichotomy: in the regime in which networks actually learn features and transfer between tasks, the Jacobian evolves throughout training, the kernel depends on optimisation history, parameter trajectories leave any single affine plane, and the function-space identity underlying anchored ridge's central role breaks. In this regime, formalised by the Tensor Programs framework~\citep{yang2019wide,yang2020tensor,yang2020tensor3,yang2021tensor} and complemented by mean-field analyses~\citep{chizat2018global,mei2018mean}, no analogous canonical regulariser has been identified. Practitioners apply weight decay~\citep{hoerl1970ridge} or anchored ridge by default, even though the kernel-regime arguments justifying these choices no longer apply. The consequences are not abstract: weight decay is implicated in the grokking phenomenon~\citep{power2022grokking,xu2026grok} and in a documented degradation of pretrained representations during fine-tuning~\citep{kumar2022fine,lauditi2025adaptive}.

We argue that the gap is geometric. Behind every overparametrised network lie three natural parameter-space objects (Section~\ref{subsec:geometry}): the \emph{flow orbit}, the region of parameter space gradient flow can ever explore from a given initialisation; the \emph{gauge-fixed flow manifold}, the canonical $n$-dimensional surface within this orbit that contains the realised trajectory and stands in one-to-one correspondence with output values near the interpolator---in effect, a parameter-space chart of function space; and the \emph{output fibres}, directions transverse to the flow manifold along which predictions on training points do not change, but along which predictions at unseen inputs can change drastically. In the kernel regime these collapse: orbit, flow manifold, and the affine plane $\vtheta_0+\mathrm{Im}(J^\top)$ coincide, the trajectory is confined to it, and the fibres meet it orthogonally everywhere. Anchored ridge, the squared chord from $\vtheta_0$ to $\vtheta$, lives entirely within this plane (Figure~\ref{fig:geometry}, right); its gradient never deflects parameters along the fibres, so the gradient-flow limit is preserved as the regularisation strength shrinks. In the feature-learning regime the flow orbit can grow beyond $n$ dimensions, gauge-fixing extracts a curved $n$-dimensional manifold (Figure~\ref{fig:geometry}, left), and the trajectory bends through this surface. Anchored ridge, still measured along Euclidean chords, no longer respects this curvature: at the gradient-flow limit its gradient acquires a component along the output fibres, pulling parameters along directions invisible to the training loss but consequential at test points. The resulting bias is silent at the loss level, persists in the limit as regularisation shrinks to zero, and is what we prove formally (Theorem~\ref{thm:ridge-biases}).

The same geometry suggests the cure. The intrinsic counterpart of squared Euclidean distance on a curved manifold is squared geodesic distance, and the corresponding intrinsic counterpart of anchored ridge is \emph{geodesic ridge} (Proposition~\ref{prop:geodesic-ridge}): squared geodesic distance from initialisation along the flow manifold, plus an analogous on-fibre term. Its gradient at the gradient-flow limit lies tangent to the flow manifold along the realised flow direction (Figure~\ref{fig:geometry}, left, dashed orange), and through the standard MAP correspondence it yields a function-space prior that generalises the NTK Gaussian Process to curved geometry: the \emph{Riemannian Gibbs Process}. In the kernel regime the manifold flattens, geodesics straighten into chords, and both objects reduce to anchored ridge and the NTK GP. The progression standard ridge $\to$ anchored ridge $\to$ geodesic ridge thus captures a sequence of geometric corrections: one Euclidean for initialisation, then one Riemannian for manifold curvature.

Geodesic distances are computationally inaccessible at scale. Our final contribution identifies a tractable upper bound that retains the alignment property: \emph{arc ridge}, the squared total path length of the training trajectory itself. Arc ridge requires no manifold solvers and no second-order information; it is a single scalar accumulated over training updates. It nevertheless inherits the no-bias property of geodesic ridge along the realised gradient-flow direction, sandwiches geodesic ridge from above against an anchored-ridge lower bound, sits at the minimax-optimal point within this sandwich, and reduces formally to early stopping under exact gradient flow. The combination places it within reach of any standard training pipeline.

\paragraph{Contributions.}
\begin{itemize}[nosep]
    \item \textbf{A geometric framework for wide feature-learning networks.} We characterise the parameter-space geometry of any overparametrised network satisfying a single uniform-conditioning assumption (Assumption~\ref{ass:sigma-min}) via the orbit / gauge-fixed flow manifold / output fibre triple of Section~\ref{sec:problem-setting}. The kernel regime is recovered as the affine special case in which all three coincide with $\vtheta_0+\mathrm{Im}(J^\top)$.
    \item \textbf{Both standard and anchored ridge provably bias gradient flow.} Standard ridge fails in both regimes (Remark~\ref{rmk:weight-decay-fails}, Figure~\ref{fig:trajectories}, right): in the kernel regime by ignoring initialisation, in the feature-learning regime additionally through curvature. Anchored ridge---the canonical kernel-regime regulariser---fails in the feature-learning regime because manifold curvature deflects its gradient along output fibres invisible to the training loss, biasing the gradient-flow limit by an amount that does not vanish as regularisation shrinks (Theorem~\ref{thm:ridge-biases}, Figure~\ref{fig:trajectories}, left). This challenges the kernel-regime foundations on which much common regularisation practice rests.
    \item \textbf{A first-principles characterisation that uniquely identifies anchored ridge in the kernel regime and uniquely generalises it.} We axiomatise the canonical function-space energy as the minimum control-theoretic energy required to transport network outputs from initialisation to a target (Definition~\ref{def:canonical-energy}), and prove uniqueness in the kernel regime under invariance and orthogonal-additivity axioms (Theorem~\ref{thm:output-energy-uniqueness}), recovering anchored ridge and the NTK Gaussian Process exactly. The same construction lifts to parameter space (Definition~\ref{def:canonical-lift}) and extends to the feature-learning regime, identifying \emph{geodesic ridge} (Proposition~\ref{prop:geodesic-ridge}) as the canonical regulariser and the \emph{Riemannian Gibbs Process} as the canonical function-space prior, with the unbiased gradient-flow selection property preserved (Theorem~\ref{thm:canonical-characterisation}).
    \item \textbf{Arc ridge: a scalable, minimax-robust surrogate.} Squared training-path length upper-bounds geodesic ridge along $\M{\vtheta_0}$ (Theorem~\ref{thm:path-length-upper}), retains the no-bias property along the realised gradient-flow direction, sits at the minimax-optimal point of the sandwich between anchored ridge below and arc ridge above (Proposition~\ref{prop:minimax-robust}), and is formally equivalent to early stopping under exact gradient flow. Empirically (Section~\ref{sec:experiments}), arc ridge recovers the predicted advantages on transfer-learning settings where the implicit prior carries information.
\end{itemize}

\section{Problem Setting}
\label{sec:problem-setting}

We study overparametrised supervised models $\f{\vect{x}}{\vtheta} : \mathcal{X} \times \Theta \to \mathcal{Y}$ with scalar output $\mathcal{Y}\subseteq\Real$, trained on some dataset $\mathcal{D} = (X, Y) \in (\mathcal{X} \times \mathcal{Y})^n$ with $n \ll m=\dim\Theta$ finite and distinct inputs $X_i \ne X_j$ for $i \ne j$. The input space $\mathcal{X}$ is arbitrary. We assume $f$ is expressible in the \emph{Tensor Programs Framework}~\citep{yang2019wide,yang2020tensor,yang2020tensor3,yang2021tensor}, and possesses only smooth non-linearities. We write $\g{\vtheta} : \Theta \to \mathcal{Y}^n$ for the stacked output vector with $\g{\vtheta}_i = \f{X_i}{\vtheta}$. The model-output Jacobian is $\J{\vtheta} \in \Real^{n \times m}$ with $\J{\vtheta}_{ij} = \frac{\partial \g{\vtheta}_i}{\partial \vtheta_j}$, and the instantaneous NTK Gram matrix is $\K{\vtheta} = \J{\vtheta}\J{\vtheta}^\top \in \Real^{n \times n}$.

We study supervised training under gradient flow dynamics with the canonical SSE loss $\Loss{\g{\vtheta}}{Y}=\norm{\g{\vtheta}-Y}^2$, i.e. $-\J{\vtheta}^\top\nabla_{\g{\vtheta}}\Loss{\g{\vtheta}}{Y}=-2\J{\vtheta}^\top\left(\g{\vtheta}-Y\right)$, corresponding to function-space dynamics $\dot g(\vtheta) = -2\K{\vtheta}\left(\g{\vtheta}-Y\right)$. We denote the value of $\vtheta$ after running gradient flow from $\vtheta_0$ for a time $t$ as $\gf{\vtheta_0}$. We assume $\mathcal{L}$ is minimised by an \emph{interpolating} solution, achieved by any $\vtheta^\star\in\Theta^\star=\{\vtheta : \g{\vtheta} = Y\}$. Note further that extensions of our results to strictly convex, smooth losses are deferred to the appendix.

Given the above setup, our results rest on only a single assumption:

\begin{assumption}[Uniform Jacobian conditioning]
\label{ass:sigma-min}
There exist an open neighbourhood $U \supset \Theta^\star$ and constants $0 < c \le C < \infty$ such that for all $\vtheta \in U$,
\[
    c \;\le\; \sigma_{\min}(\J{\vtheta}) \quad\text{and}\quad \sigma_{\max}(\J{\vtheta}) \;\le\; C.
\]
Furthermore, assume that for any given initialisation $\vtheta_0$, $\gf{\vtheta_0}\in U\;\forall\;t\in[0,\infty)$.
\end{assumption}

This is a standard assumption in wide-network analysis: all infinite-width kernel-regime literature assumes it, and it is justified in the feature-learning regime as under $\mu P$ scaling for Tensor Programs networks with smooth activations~\citep{yang2020tensor,yang2020tensor3}, $\sigma_{\min}(\J{\vtheta_0})$ concentrates around its positive infinite-width limit at initialisation, and continuity propagates the bound to a neighbourhood. Furthermore, we verify this empirically along flow trajectories in Appendix~\ref{app:assumption-verification}. \footnote{An immediate useful consequence of Assumption~\ref{ass:sigma-min} is that gradient flow converges to an interpolating solution: $\vtheta^\star:=\lim \limits_{t\to\infty} \gf{\vtheta_0}\in\Theta^\star$. Indeed, the loss is a valid Lyapunov function with time-derivative $\dot{\mathcal{L}}=-2\left(\g{\vtheta}-Y\right)^\top\K{\vtheta}\left(\g{\vtheta}-Y\right)$. Since $\J{\vtheta}$ has full rank along the trajectory, $\K{\vtheta}:=\J{\vtheta}\J{\vtheta}^\top\succeq c^2 I\succ0$, so $\dot{L}< 0$ whenever $\g{\vtheta}\neq Y$. Convergence to some $\vtheta^\star\in\Theta^\star$ thus follows, in fact exponentially with rate at least $2c^2$.}

\section{Ridge Biases Gradient Flow in the Feature Learning Regime}
\label{sec:ridge-pathology}

The first core theoretical contribution of this work is an exposition of the feature-learning regime as a \emph{pathology} of both anchored and standard ridge. To understand this, we must first understand the \emph{geometry} underlying wide networks, and how it differs between the two regimes.

\subsection{Parameter-space Geometry}
\label{subsec:geometry}
We begin by illuminating the parameter-space geometry of any overparametrised network satisfying Assumption~\ref{ass:sigma-min}, restricting ourselves to the region $U$ for complete rigour. This geometry is best understood as a collection of lower-dimensional subsets, which collectively span all of $U$. The first of these is the horizontal orbit generated by the directions available to gradient flow:

\begin{definition}[Flow orbit]
Let
\[
    \mathcal H_{\vtheta}:=\operatorname{Im} J(\vtheta)^\top \subset T_{\vtheta}\Theta
\]
denote the horizontal distribution induced by the model-output Jacobian. The \emph{flow orbit}
through $\vtheta_0$, denoted $\orbit{\vtheta_0}\subseteq\Theta$, is the
reachable set of the horizontal control system
\[
    \dot{\vtheta}(t)=J(\vtheta(t))^\top \vu(t),
    \qquad \vu\in L^2([0,T],\mathbb R^n),
\]
namely
\[
    \orbit{\vtheta_0}
    :=
    \left\{
    \vtheta(T):
    T>0,\ 
    \vu\in L^2([0,T],\mathbb R^n),\
    \dot{\vtheta}(t)=J(\vtheta(t))^\top\vu(t),\
    \vtheta(0)=\vtheta_0
    \right\}.
\]
\end{definition}

The flow orbit is the fundamental control-theoretic object. In general it is a sub-Riemannian reachable set as its tangent directions are generated by the Lie closure of $\mathcal H$, and therefore its dimension can exceed $n$. Thus the orbit itself should not be identified with function space. In order to move from function-space quantities back to parameter space, we fix a gauge, selecting a single representative from the flow orbit for each output value. This gives us our second fundamental geometric object:

\begin{definition}[Gauge-fixed flow manifold and representative map]
\label{def:flow-manifold}
Let
\[
    \vtheta^\star
    :=
    \lim_{t\to\infty}\gf{\vtheta_0}
\]
be the gradient-flow interpolator from $\vtheta_0$. The \emph{gauge-fixed flow manifold}
through $\vtheta_0$ is the basin sheet
\[
    \M{\vtheta_0}
    :=
    \operatorname{cc}_{\gamma_{\mathrm{gf}}}
    \left\{
    \vtheta\in \orbit{\vtheta_0}:
    \lim_{t\to\infty}\gf{\vtheta}
    =
    \lim_{t\to\infty}\gf{\vtheta_0}
    \right\} \subseteq  \orbit{\vtheta_0}\,
\]
where $\operatorname{cc}_{\gamma_{\mathrm{gf}}}$ denotes the connected component containing
the realized gradient-flow trajectory
\[
    \gamma_{\mathrm{gf}}:=\{\gf{\vtheta_0}:t\in[0,\infty)\}.
\]

Under Assumption~\ref{ass:sigma-min}, $\Theta^\star \cap U$ is normally hyperbolic, and thus there exists an open neighbourhood $V$ of $\Theta^\star \cap U$ in which $\M{\vtheta_0}$ is an embedded submanifold by the stable manifold theorem. For this neighbourhood, we define
\[
    C_0(\vtheta_0)
    :=
    g\bigl(\M{\vtheta_0}\cap V\bigr)
    \subseteq \mathcal Y^n .
\]
The canonical representative map is
\[
    \ginv{\vect c}
    :=
    \left(
    g\big|_{\M{\vtheta_0}\cap V}
    \right)^{-1}(\vect c),
    \qquad
    \vect c\in C_0(\vtheta_0).
\]
\end{definition}

\begin{figure}
  \centering
  \begin{minipage}{0.49\textwidth}
    \centering
    \includegraphics[width=\linewidth,trim={0 0.45cm 0 0.4cm},clip]{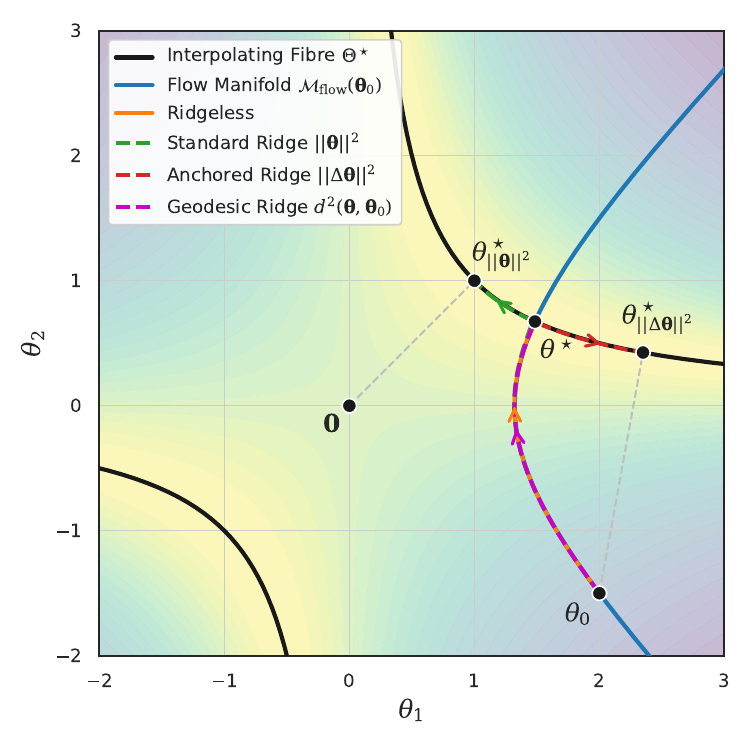}
\end{minipage}
\hfill
\begin{minipage}{0.49\textwidth}
    \centering
    \includegraphics[width=\linewidth,trim={0 0.45cm 0 0.4cm},clip]{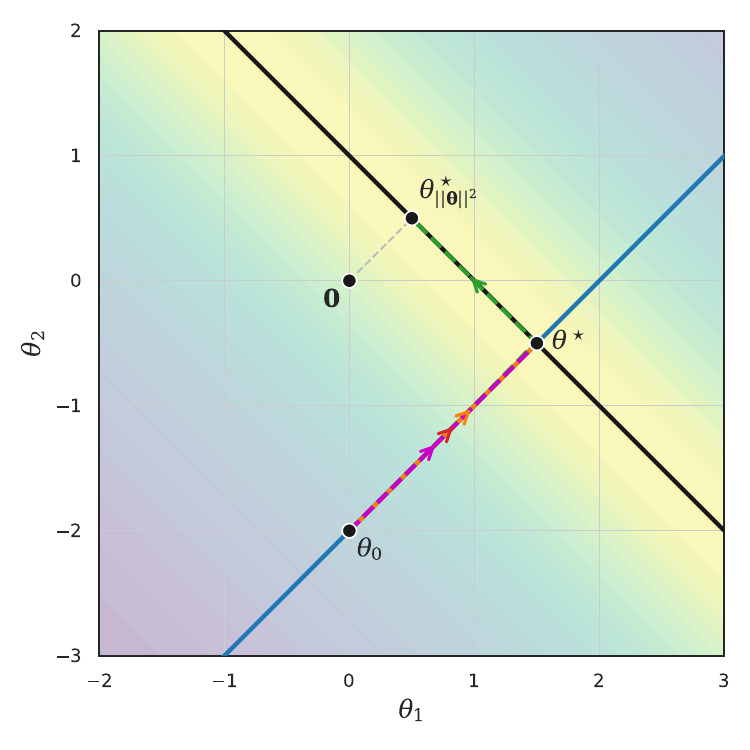}
\end{minipage}
  \caption{Gradient flow trajectories for the minimal overparametrised non-linear model $\f{x}{\vtheta}=\theta_1\theta_2x$ (left) and linear model $\f{x}{\vtheta}=\left(\theta_1+\theta_2\right)x$ (right) on the singleton dataset $\mathcal{D}=\{(1,1)\}$ for a variety of infinitesimal-strength regularisers, over a background shading indicating data loss. As our theory predicts, only the canonical regulariser prevents a biased equilibrium in the non-linear case, with both standard and anchored ridge causing flow along the interpolating fibre after convergence of the data loss. Additionally as expected, in the linear case anchored ridge coincides exactly with the canonical regulariser due to their equivalence, while standard ridge continues to bias the solution.}
  \label{fig:trajectories}

\vspace{-0.5cm}
\end{figure}

Intuitively, $\orbit{\vtheta_0}$ contains all parameters reachable by horizontal motion, while $\M{\vtheta_0}$ chooses the basin-compatible representative sheet. The map $\ginv{\vect c}$ selects the representative of output $\vect c$ lying on this sheet. Near $\vtheta^\star$, this representative is unique for every $\vect c\in C_0(\vtheta_0)$. A global notation $\ginv{\vect c}$ on all of $\mathcal Y^n$ is justified whenever this basin sheet is globally single-valued over output space; otherwise all statements are understood on the local output domain $C_0(\vtheta_0)$.

Our final geometric object spans the remainder of parameter-space:

\begin{definition}[Output fibre]
The \emph{output fibre} through $\vtheta$, denoted $\fibre{\vect{c}}\subset\Theta$ where $\vect{c}=\g{\vtheta}$, is the preimage of $\vect{c}$ under $g$:
\[
    \fibre{\vect{c}}:=g^{-1}\left(\vect{c}\right)=\bigl\{\,\vtheta:\g{\vtheta}=\vect{c}\bigr\}.
\]
\end{definition}

Intuitively, $\fibre{\vect{c}}$ represents the collection of vertical directions in parameter space which do not affect training predictions $\g{\vtheta}$, but can have a marked effect on the network's predictions away from the training inputs $X$. These directions will be of particular interest to us, as any parameter motion here is invisible to the training loss, but can dramatically affect the generalisation error. Geometrically, these output fibres are embedded submanifolds on $U$ of codimension $n$ (that is, they are smooth surfaces of dimension $m-n$) as $\vect{c}$ is defined as a regular value of $g$, satisfy $\fibre{Y}=\Theta^\star \cap U$, and have tangent space $\ker \J{\vtheta}$ so are everywhere orthogonal to horizontal gradient flow trajectories.

\subsection{The failure of ridge in the feature-learning regime}
Equipped with our geometric understanding of parameter-space, exposing the failure of both standard and anchored ridge in the feature learning regime is straightforward. The feature-learning regime is characterised by \emph{curved} geometry --- features evolve over training and $J$ changes meaningfully with $\vtheta$ --- thus $\M{\vtheta_0}$ curves meaningfully between $\vtheta_0$ and $\vtheta^\star$. As illustrated in Figure~\ref{fig:geometry}, this stands in stark contrast to the geometry of the kernel regime where $J$ is constant on $U$ and $\M{\vtheta_0}\equiv\orbit{\vtheta_0}=\left\{\vtheta_0 + J^\top \vu  : \vu  \in \Real^n\right\}$ is an affine subspace.

Now consider the interaction of these geometries with the anchored ridge regulariser, $\norm{\vtheta-\vtheta_0}^2$. The contribution to gradient flow training of this regulariser is $-2\left(\vtheta-\vtheta_0\right)$, e.g. the chord from $\vtheta_0$ to $\vtheta$. In the kernel regime, this always lies within the tangent space of $\M{\vtheta_0}$, and thus anchored ridge can never induce parameter movement off-manifold --- it is \emph{compatible} with gradient flow (Proposition~\ref{prop:linear-recovery}). This geometric alignment is precisely what causes the vanishing regularisation property underpinning the NN-GP correspondence, and as shown by \citet{ordonez2026gaussian}, allows us to explicitly model observation noise in the training set through an MAP interpretation of training.

In the feature-learning regime, however, curvature of $\M{\vtheta}$ causes this gradient to generically have a component lying in off-manifold fibre directions. The implications of this are significant: the regularisation gradient causes parameter displacements invisible to the loss function and thus uncontrollable by the training set, but which can drastically change behaviour away from the training set by destroying the network's covariance structure and inductive bias imbued by initialisation. We can formally describe this pathology by its bias to gradient flow in the infinitesimal limit:

\begin{theorem}
    \label{thm:ridge-biases}
    Consider a wide feature-learning regime network satisfying Assumption~\ref{ass:sigma-min}. Let $\vtheta^\star_{\lambda\norm{\Delta\vtheta}^2}:=\arg \min \limits_{\theta\in\Theta}\Loss{\g{\vtheta}}{Y}+\lambda\norm{\vtheta-\vtheta_0}^2$. Then
    \[
        \vtheta^\star_{\norm{\Delta\vtheta}^2}:=\lim_{\lambda\downarrow0}\vtheta^\star_{\lambda\norm{\Delta\vtheta}^2}\neq\vtheta^\star,
    \]
    where $\vtheta^\star:=\lim \limits_{t\to\infty}\gf{\vtheta_0}$ is the time-limit of unregularised gradient flow.
\end{theorem}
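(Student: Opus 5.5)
The argument characterises the vanishing-regularisation limit as a constrained minimiser, then shows that feature learning generically violates the corresponding optimality condition at $\vtheta^\star$.

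\emph{Step 1: identify the limit.} For each $\lambda>0$, comparing the objective at $\vtheta^\star_{\lambda\norm{\Delta\vtheta}^2}$ with its value at any $\vtheta\in\Theta^\star$ gives
\[
\Loss{\g{\vtheta^\star_\lambda}}{Y}\le\lambda\norm{\vtheta-\vtheta_0}^2
\quad\text{and}\quad
\norm{\vtheta^\star_\lambda-\vtheta_0}\le\operatorname{dist}(\vtheta_0,\Theta^\star).
\]
So the minimisers lie in a bounded set and their loss tends to zero. Any accumulation point therefore lies in $\Theta^\star$. Within $U$, Assumption~\ref{ass:sigma-min} makes $\Theta^\star\cap U=\fibre{Y}$ an embedded submanifold. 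The accumulation point then minimises $\norm{\vtheta-\vtheta_0}^2$ over $\Theta^\star$, i.e.\ it is a Euclidean projection $\Pi(\vtheta_0)$ of $\vtheta_0$ onto $\Theta^\star$, understood locally, on the component reached inside $U$ if necessary. I will treat the limit as existing, as the statement does; if it does not, the argument still applies to every accumulation point.

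\emph{Step 2: first-order condition.} Any such projection $\vtheta_p$ satisfies the Lagrange condition $\vtheta_p-\vtheta_0\in(\ker\J{\vtheta_p})^\perp=\operatorname{Im}\J{\vtheta_p}^\top$. Hence it suffices to show that the gradient-flow limit violates this condition:
\[
\vtheta^\star-\vtheta_0\notin\operatorname{Im}\J{\vtheta^\star}^\top.
\]
Equivalently, the anchored-ridge gradient $2(\vtheta^\star-\vtheta_0)$ has a nonzero component along the fibre tangent $\ker\J{\vtheta^\star}$. This is exactly the geometric mechanism described before the theorem.

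\emph{Step 3: compute the displacement.} Along the flow,
\[
\vtheta^\star-\vtheta_0=-2\int_0^\infty \J{\vtheta_t}^\top\bigl(\g{\vtheta_t}-Y\bigr)\,dt .
\]
Each integrand lies in $\operatorname{Im}\J{\vtheta_t}^\top$. Let $P^\perp:=I-\J{\vtheta^\star}^\top\K{\vtheta^\star}^{-1}\J{\vtheta^\star}$ be the projector onto $\ker\J{\vtheta^\star}$. Then
\[
P^\perp(\vtheta^\star-\vtheta_0)=-2\int_0^\infty P^\perp\bigl(\J{\vtheta_t}-\J{\vtheta^\star}\bigr)^\top\bigl(\g{\vtheta_t}-Y\bigr)\,dt .
\]
The integral converges absolutely, by the exponential convergence in the footnote to Assumption~\ref{ass:sigma-min} and the bound $\sigma_{\max}\le C$. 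In the kernel regime $J$ is constant, so this vanishes, recovering Proposition~\ref{prop:linear-recovery}. In the feature-learning regime the Jacobian moves by an $\Theta(1)$ amount (the $\mu$P characterisation of feature learning from \citet{yang2021tensor}), so the right-hand side is generically nonzero.

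\emph{Step 4 (main obstacle): nonvanishing.} The difficulty is to turn "generically nonzero" into a rigorous statement, since the integral could cancel for special data. I would argue as follows. The displacement is an analytic function of $(\vtheta_0,Y)$, using that the nonlinearities are smooth, taking them analytic if needed. An analytic function that is not identically zero vanishes only on a measure-zero, nowhere-dense set. To show it is not identically zero, I will exhibit a single instance: the model $\f{x}{\vtheta}=\theta_1\theta_2x$ of Figure~\ref{fig:trajectories} embedded as a sub-block. There the flow conserves $\theta_1^2-\theta_2^2$, so it follows a hyperbola, while the projection condition forces $\vtheta^\star-\vtheta_0\parallel(\theta_2^\star,\theta_1^\star)$. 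An explicit check shows these are incompatible whenever $\theta_{0,1}\neq\pm\theta_{0,2}$.

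Consequently, the theorem is proved in the sense of holding for generic initialisations and data, i.e.\ whenever the network is genuinely in the feature-learning regime, $\partial J\neq0$ along $\gamma_{\mathrm{gf}}$. I would state this genericity explicitly as the precise meaning of "feature-learning regime."
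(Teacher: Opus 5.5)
Your Steps 1--3 are the paper's proof. The paper identifies the limit as a minimiser of $\norm{\vtheta-\vtheta_0}^2$ over $\Theta^\star$ via Lemma~\ref{lem:vanishing-reg} and Lemma~\ref{lem:crit-interp}. It derives $\vtheta-\vtheta_0\in\mathrm{Im}(\J{\vtheta}^\top)$ from $T\Theta^\star=\ker\J{\vtheta}$ and Lemma~\ref{lem:orth}. It then writes $\vtheta^\star-\vtheta_0=\int_0^\infty\J{\vtheta(s)}^\top\vu(s)\,ds$, observing that this stays in $\mathrm{Im}(J^\top)$ for constant $J$ and ``generically'' picks up a $\ker\J{\vtheta^\star}$ component otherwise.

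Beyond that, the paper only claims that a nonzero metric-gap operator $N$ (Theorem~\ref{thm:metric-gap}) along the trajectory is equivalent to the bias. That claim is unsound. For $n=1$ the flow manifold is an integral curve of $\J{\vtheta}^\top$, so $N\equiv0$, yet your hyperbola computation exhibits the bias. So your Step~4 is where the real work lies. Your toy calculation is correct: writing the flow as $\vtheta(s)=(a\cosh s+b\sinh s,\,a\sinh s+b\cosh s)$, the Lagrange condition reduces to $(a^2-b^2)(\cosh s-1)=0$. You must also exclude $s=0$, i.e.\ $ab=1$.

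Step~4 still does not close the gap, for three reasons.

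(i) The identity theorem applies to the displacement map $(\vtheta_0,Y)\mapsto P^\perp(\vtheta^\star-\vtheta_0)$ of the \emph{given} network, on a connected domain where Assumption~\ref{ass:sigma-min} holds. A nonvanishing instance of a different two-parameter model says nothing about whether that map vanishes identically. ``Embedded as a sub-block'' would have to be a concrete configuration of the given architecture, in that same domain component, whose full gradient flow stays in the block and keeps $\J{\vtheta}$ of rank $n$ on all $n$ inputs. Zeroing or freezing the other weights generally breaks either the invariance of the block or the rank condition.

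(ii) You need real-analytic activations, which the paper does not assume. You also need a genuine argument that the infinite-time limit $(\vtheta_0,Y)\mapsto\vtheta^\star$ is analytic, e.g.\ uniform convergence of the complexified flow near the normally attracting set $\Theta^\star$. Analyticity of each finite-time flow map does not give this for free.

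(iii) Your closing gloss is wrong: $\partial J\neq0$ along $\gamma_{\mathrm{gf}}$ does not suffice. In your own toy model take $\theta_{0,1}=\theta_{0,2}=a\in(0,1)$. Then $J=(\theta_2,\theta_1)$ changes along the trajectory, yet $\vtheta^\star=(1,1)$ is exactly the Euclidean projection of $\vtheta_0$ onto $\{\theta_1\theta_2=1\}$, so there is no bias. What matters is rotation of $\mathrm{Im}(J^\top)$, and even rotation can cancel inside the integral. So a genericity qualifier is unavoidable and cannot be replaced by a regime condition. The honest conclusion, for you and for the paper alike, is bias off the zero set of the displacement map.
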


An identical argument proves failure of standard ridge in both kernel \emph{and} feature-learning regimes. Figure~\ref{fig:trajectories} illustrates this pathology exactly. Intuitively, the limiting dynamics has two timescales. On the fast timescale, small-$\lambda$ regularised trajectories stay close to the unregularised one, seemingly converging to $\vtheta^\star$. However, any $\lambda>0$ gives rise to a slow timescale on which the uncontrolled regularisation gradient induces drift fibre directions, biasing the time-limit. For larger $\lambda$ --- necessary in problems with little data or noisy targets --- this pathology becomes immediate, yielding significant divergence from the unregularised trajectory.

While not always a bad thing in practice---this effect is precisely what causes weight decay-induced \emph{grokking}~\citep{power2022grokking,xu2026grok} and underpins the adaptive kernel representer theorem of \citet{lauditi2025adaptive}---in cases where the feature geometry is well-suited to the task a priori (i.e.\ pretrained networks), it can harm generalisation by replacing useful inductive bias with drifted features purely fitting noise~\cite{kumar2022fine}. We expose this pathology empirically in Section~\ref{sec:experiments}, which confirms a massive degradation of test performance under both standard and anchored ridge at high regularisation strengths.

\section{Canonical Regularisation in the Feature-learning Regime}
\label{sec:canonical-regularisation}

Despite its pathology in the feature-learning regime, it is clear that anchored ridge's harmony with gradient flow in the kernel regime is quite special. Indeed, this result implicitly underpins the NN-GP correspondence and Neural Tangent Kernel (NTK) theory as a whole~\cite{jacot2018neural}. It is thus reasonable to describe anchored ridge as the \emph{canonical regulariser} of kernel-regime networks, corresponding to the \emph{function-space canonical prior} NTK Gaussian Process. We thus set out to derive an equivalent regulariser for the feature-learning regime. 

Such a task requires us to return to regime-agnostic first-principles and axioms, ensuring unique identification of anchored ridge in the kernel regime as a validation of our framework. We propose a horizontal function-space energy formulation (with respect to the local Fisher metric $K(\vtheta)$) as a suitably fundamental object underpinning such networks in the context of gradient flow training:

\begin{definition}[Canonical function-space energy, prior, and regulariser]
\label{def:canonical-energy}
For some overparametrised network initialised at $\vtheta_0$, the \emph{canonical function-space energy} at $\vect{c}\in\mathcal{Y}^n$ is the minimum control-theoretic energy required to transport $\g{\vtheta}$ from $\g{\vtheta_0}$ to $\vect{c}$ under gradient flow dynamics:
\begin{align*}
    E(\vect{c}) := &\min_{\vu (t)} \int_0^1 \vu (t)^\top K(t)\,\; \vu (t)\, dt \quad \\ \text{subject to} &\quad \dot\vtheta = \J{\vtheta(t)}^\top \vu ,\ K(t) = \J{\vtheta(t)}\J{\vtheta(t)}^\top,\; \vtheta(0)=\vtheta_0,\; \vtheta(1)=\ginv{\vect{c}}.
\end{align*}
\end{definition}

For kernel-regime networks, the minimum energy control is the constant $\vu^\star(t)=K^{-1}\left(\vect{c}-\g{\vtheta_0}\right)$, recovering the standard squared RKHS norm $E(\vect{c})=\left(\vect{c}-\g{\vtheta_0}\right)^\top K^{-1}\left(\vect{c}-\g{\vtheta_0}\right)\equiv\norm{\vect{c}-\g{\vtheta_0}}_{\mathrm{RKHS}}^2$ (Proposition~\ref{prop:linear-recovery}(c)). 

The canonical function-space prior arises from $E(\vect{c})$ straightforwardly as a Gibbs distribution:

\begin{definition}[Canonical function-space prior]
\label{def:riemannian-gibbs}
    The canonical function-space prior induced by $E$ is a \emph{Riemannian Gibbs Process}, with prediction density on the training set
    \[
        p(\vect{c}\mid X)\propto\exp\!\left(-\beta E(\vect{c})\right).
    \]
    The temperature $\beta$ is the sole degree of freedom under $E$ (due to scale invariance of gradient flow), and thus takes the role of a global prior scale.
\end{definition}

Applying this Riemannian Gibbs Process prior to kernel-regime networks yields the familiar NTK GP, i.e. $\mathcal{GP}\left(\f{\cdot}{\vtheta_0}, (2\beta)^{-1}k_{\mathrm{NTK}}(\cdot,\cdot')\right)$, with $p(\vect{c}\mid X)\propto\exp\!\left(-\beta\norm{\vect{c}-\g{\vtheta_0}}_{\mathrm{RKHS}}^2)\right)$ (Corollary~\ref{cor:kernel-ntk-gp}). Kolmogorov consistency over arbitrary finite test inputs at the fixed training set follows from a parametric pushforward (Proposition~\ref{prop:rgp-finite-test-consistency}); consistency \emph{across} training subsets remains an open problem (Remark~\ref{rmk:rgp-status}).

Completing our generalisation requires one final task: a unique lift of the low-dimensional function-space energy to high-dimensional parameter-space. Reasonable axioms underpin this lift. Namely, we require the lift to be a) additive to the function-space energy $E(\vect{c})$ associated with the training predictions $\g{\vtheta}=\vect{c}$ at $\vtheta$ (thereby forcing the lift to geometrically correspond only to $\fibre{\vect{c}}$, b) quadratic in the ambient Euclidean metric of parameter-space, and c) uniformly $0$ at the gauge-fixed representative $\ginv{\vect{c}}$. This uniquely gives us the \emph{canonical lift} (Theorem~\ref{thm:canonical-lift-uniqueness}):

\begin{definition}[Canonical lift]
    \label{def:canonical-lift}
    The \emph{canonical lift} of $E(\g{\vtheta})$ to parameter-space is
    \[
        R(\vtheta):=E(\g{\vtheta})\;+\;\inf_{\psi}\int \limits_0^1 \norm{\dot{\psi}(t)}^2dt,
    \]
    where the infimum is over piecewise-$C^1$ paths $\psi:[0,1]\to\fibre{\g{\vtheta}}\subset\Theta$ with $\psi(0)=\ginv{\g{\vtheta}}$, $\psi(1)=\vtheta$, and $\g{\psi(t)}=\g{\vtheta}\;\forall\; t\in[0,1]$.
\end{definition}

Verifying this lift in the kernel regime uniquely points to anchored ridge: We have that $E(\g{\vtheta})=\norm{\vect{c}-\g{\vtheta_0}}_{\mathrm{RKHS}}^2\equiv\norm{\Delta\vtheta_{\mathrm{flow}}}^2$ where $\Delta\vtheta_{\mathrm{flow}}$ is the projection of $\vtheta-\vtheta_0$ onto $\M{\vtheta_0}$. Similarly, the lifted term is exactly $\norm{\Delta\vtheta_{\mathrm{fibre}}}^2$, where $\Delta\vtheta_{\mathrm{fibre}}$ is the projection of $\vtheta-\vtheta_0$ onto $\fibre{\g{\vtheta}}$. As in the kernel regime these two sets are orthogonal and affine, the Pythagorean identity yields $R(\vtheta)=\norm{\Delta\vtheta_{\mathrm{flow}}}^2+\norm{\Delta\vtheta_{\mathrm{fibre}}}^2=\norm{\vtheta-\vtheta_0}^2$, uniquely recovering anchored ridge (Corollary~\ref{cor:kernel-anchored-ridge}; Proposition~\ref{prop:linear-recovery}(d)).

Framework in hand, we now present the canonical regulariser of the feature-learning regime:

\begin{proposition}[Geodesic ridge]
    \label{prop:geodesic-ridge}
    The canonical regulariser of feature-learning regime networks is \emph{geodesic ridge}, defined as
    \[
        \dist{2}{}{\vtheta}{\vtheta_0}:=\dist{2}{\mathrm{flow}}{\ginv{\g{\vtheta}}}{\vtheta_0}+\dist{2}{\mathrm{fibre}}{\vtheta}{\ginv{\g{\vtheta}}},
    \]
    where $\dist{2}{\mathrm{flow}}{\ginv{\g{\vtheta}}}{\vtheta_0}\equiv E(\g{\vtheta})$ is the minimum horizontal distance in parameter-space from $\vtheta_0$ to the the gauge-fixed representative $\ginv{\g{\vtheta}}$, and $\dist{2}{\mathrm{fibre}}{\vtheta}{\ginv{\g{\vtheta}}}$ is the geodesic distance from the gauge-fixed representative to $\vtheta$ along the output fibre $\fibre{\g{\vtheta}}$.
\end{proposition}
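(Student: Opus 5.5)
The plan is to unpack the canonical lift of Definition~\ref{def:canonical-lift} term by term. We have to identify (i) the canonical energy $E(\g{\vtheta})$ with a squared horizontal distance $\dist{2}{\mathrm{flow}}{\ginv{\g{\vtheta}}}{\vtheta_0}$, and (ii) the fibre infimum with a squared fibre geodesic distance. The regulariser $R(\vtheta)$ is already the sum of these two pieces, and Theorem~\ref{thm:canonical-lift-uniqueness} guarantees the lift is the unique one obeying the three axioms. So once (i) and (ii) are established, $R(\vtheta)=\dist{2}{}{\vtheta}{\vtheta_0}$ and the claim follows. Throughout, $\vtheta$ is restricted to $U$ with $\g{\vtheta}\in C_0(\vtheta_0)$, so that $\ginv{\cdot}$ is well defined.

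For step (i), we rewrite the cost in Definition~\ref{def:canonical-energy} in parameter-space terms. For a horizontal control $\dot\vtheta=\J{\vtheta}^\top\vu$ we have pointwise
\[
    \vu^\top K(t)\,\vu=\vu^\top \J{\vtheta}\J{\vtheta}^\top\vu=\norm{\dot\vtheta}^2 .
\]
Hence $E(\vect c)$ is the minimal Euclidean energy $\int_0^1\norm{\dot\vtheta}^2dt$ over horizontal curves from $\vtheta_0$ to $\ginv{\vect c}$. By Assumption~\ref{ass:sigma-min}, $\J{\vtheta}^\top$ is injective with $\sigma_{\min}\ge c$, so every horizontal velocity has a unique control. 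The set of admissible curves is therefore exactly the set of horizontal curves of the sub-Riemannian structure $(\mathcal H,\langle\cdot,\cdot\rangle)$. The standard energy--length argument then applies: reparametrise to constant speed and use Cauchy--Schwarz, $\left(\int\norm{\dot\vtheta}\right)^2\le\int\norm{\dot\vtheta}^2$, with equality at constant speed. This shows that the minimal energy equals the squared Carnot--Carath\'eodory distance $\dist{2}{\mathrm{flow}}{\ginv{\vect c}}{\vtheta_0}$. That distance is finite because $\ginv{\vect c}\in\M{\vtheta_0}\subseteq\orbit{\vtheta_0}$ is reachable by definition.

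Step (ii) is the same energy--length argument, now on the embedded submanifold $\fibre{\g{\vtheta}}$. This set has codimension $n$ because $\g{\vtheta}$ is a regular value on $U$, and we give it the induced Euclidean metric. Piecewise-$C^1$ curves in the fibre joining $\ginv{\g{\vtheta}}$ to $\vtheta$ have infimal energy equal to the squared induced geodesic distance $\dist{2}{\mathrm{fibre}}{\vtheta}{\ginv{\g{\vtheta}}}$. Both endpoints lie in the same fibre, since $g(\ginv{\vect c})=\vect c$. Summing (i) and (ii) gives $R(\vtheta)=\dist{2}{}{\vtheta}{\vtheta_0}$.

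The main obstacle is step (i), specifically attainment of the minimum and the equality of energy with squared sub-Riemannian distance. Because $\mathcal H$ may be bracket-generating, minimisers can be abnormal, and the reachable set is not a Riemannian manifold. To handle this I would first work with infima rather than minima, where the Cauchy--Schwarz argument suffices and no existence result is needed. I would then invoke Filippov-type existence, using compactness of controls in weak $L^2$ together with the uniform bound $\sigma_{\max}\le C$ on $U$, only to justify writing $\min$. A secondary subtlety is connectedness of the fibre component. If $\vtheta$ and $\ginv{\g{\vtheta}}$ lie in different connected components of $\fibre{\g{\vtheta}}$, the infimum is $+\infty$ on both sides, so the identity still holds in the extended sense.
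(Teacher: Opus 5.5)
Your proposal is correct and follows the same route as the paper. The paper likewise splits the canonical lift into $E(\g{\vtheta})$ plus the fibre infimum, appeals to Theorem~\ref{thm:canonical-lift-uniqueness}, and turns each minimal energy into a squared length via the Cauchy--Schwarz/constant-speed argument of Proposition~\ref{prop:constant-speed}.

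Your explicit identity $\vu^\top K\vu=\norm{\dot\vtheta}^2$ supplies the step the paper treats as holding ``by construction''. It identifies $E$ with the squared horizontal (sub-Riemannian) distance in parameter space, which is what the statement itself says. The paper's proof of Theorem~\ref{thm:geodesic-ridge-uniqueness} instead names the Riemannian distance on $\M{\vtheta_0}$. These two need not agree once $N\neq 0$ in Theorem~\ref{thm:metric-gap}: there $T\M{\vtheta_0}\neq\mathrm{Im}(J^\top)$, so curves in $\M{\vtheta_0}$ are generally not horizontal. Your reading is therefore the one that Definition~\ref{def:canonical-energy} actually supports.
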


While generally irreducible beyond this form, geodesic ridge is theoretically complete:

\begin{theorem}[Canonical regulariser characterisation of geodesic ridge]
\label{thm:canonical-characterisation}
Under Assumption~\ref{ass:sigma-min}:
\begin{enumerate}[label=(\alph*),nosep]
    \item \textbf{Unique minimiser on each fibre.} For every $\vect{c} \in C_0(\vtheta_0)$, $\ginv{\vect{c}}$ is the unique minimiser of $\dist{2}{}{\cdot}{\vtheta_0}$ over $\fibre{\vect{c}} \cap V$.
    \item \textbf{Trajectory-level minimality.} Once the gradient flow trajectory $\vtheta(t)$ enters $V$, $\vtheta(t) = \vtheta'(\g{\vtheta(t)})$ and $\dist{2}{}{\vtheta(t)}{\vtheta_0} = E(\g{\vtheta(t)})$. In particular every intermediate trajectory point is the minimum-cost representative of its output vector.
    \item \textbf{Unbiased limit property.} $\lim \limits_{\lambda \downarrow 0} \Loss{\g{\vtheta}}{Y} + \lambda \dist{2}{}{\vtheta}{\vtheta_0}=\vtheta^\star$.
\end{enumerate}
\end{theorem}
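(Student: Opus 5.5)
The plan is to use the additive structure of geodesic ridge from Proposition~\ref{prop:geodesic-ridge}. For $\vtheta\in\fibre{\vect c}\cap V$ the first term $\dist{2}{\mathrm{flow}}{\ginv{\vect c}}{\vtheta_0}=E(\vect c)$ depends only on $\vect c$, so it is constant along the fibre. Part (a) therefore reduces to minimising the fibre term $\dist{2}{\mathrm{fibre}}{\vtheta}{\ginv{\vect c}}$ over $\fibre{\vect c}\cap V$. This term is a squared intrinsic geodesic distance on the embedded submanifold $\fibre{\vect c}$, so it is nonnegative and vanishes at $\vtheta=\ginv{\vect c}$. It also vanishes only there: any piecewise-$C^1$ path of zero energy is constant, and the intrinsic distance dominates the Euclidean one, so it is zero only when $\vtheta=\ginv{\vect c}$. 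Uniqueness of the representative itself comes from the fact that $g|_{\M{\vtheta_0}\cap V}$ is injective, by the definition of $\ginv{\cdot}$ in Definition~\ref{def:flow-manifold}. This settles (a).

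For (b), the argument is that the gradient-flow trajectory lies in $\M{\vtheta_0}$ by construction: every point $\gf{\vtheta_0}$ has the same limit $\vtheta^\star$, is reachable horizontally, since $\dot\vtheta=J^\top\vu$ with $\vu=-2(\g{\vtheta}-Y)$, and lies on the component $\gamma_{\mathrm{gf}}$. Once it enters $V$, injectivity of $g$ on $\M{\vtheta_0}\cap V$ gives $\vtheta(t)=\ginv{\g{\vtheta(t)}}$. The fibre term then vanishes, and $\dist{2}{}{\vtheta(t)}{\vtheta_0}=E(\g{\vtheta(t)})$. The claim that each trajectory point is the minimum-cost representative of its output vector then follows immediately from (a).

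For (c), read as the statement that $\lim_{\lambda\downarrow0}\arg\min_{\vtheta}\bigl[\Loss{\g{\vtheta}}{Y}+\lambda\dist{2}{}{\vtheta}{\vtheta_0}\bigr]=\vtheta^\star$, the plan is to split the minimisation over outputs and fibres. The loss depends only on $\vect c=\g{\vtheta}$, so for fixed $\vect c$ the inner minimiser is $\ginv{\vect c}$ by (a), with value $\norm{\vect c-Y}^2+\lambda E(\vect c)$. The outer problem in $\vect c$ is a penalised least-squares problem with a continuous, nonnegative penalty that is finite near $Y$. Its minimisers $\vect c_\lambda$ satisfy $\norm{\vect c_\lambda-Y}^2\le\lambda E(Y)$, so $\vect c_\lambda\to Y$. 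Continuity of $\ginv{\cdot}$ on $C_0(\vtheta_0)$, which holds because $\M{\vtheta_0}\cap V$ is an embedded submanifold and $g$ restricted to it is a diffeomorphism onto its image by the rank condition of Assumption~\ref{ass:sigma-min}, then gives $\vtheta^\star_\lambda=\ginv{\vect c_\lambda}\to\ginv{Y}=\vtheta^\star$. The last equality uses (b) together with the limit $t\to\infty$, since $\vtheta^\star\in\overline{\gamma_{\mathrm{gf}}}\cap\fibre{Y}$.

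I expect the main obstacle to be the global step in (c). Global minimisers of the regularised objective might lie outside $V$, on other sheets or in regions where $\ginv{\cdot}$ is undefined. To handle this, I would restrict to the local domain $C_0(\vtheta_0)\times V$, as the paper's convention permits, and argue that for small $\lambda$ any minimiser has $\g{\vtheta}$ close to $Y$ and hence lies in $V$. A secondary point is that the restriction of $g$ to $\M{\vtheta_0}\cap V$ has to be a local diffeomorphism. This requires the tangent space of $\M{\vtheta_0}$ to be transverse to $\ker J$, which I would take from normal hyperbolicity, since $\M{\vtheta_0}$ is the stable leaf through $\vtheta^\star$.
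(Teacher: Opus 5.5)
Your parts (a) and (b) follow the paper's proof. Both use the additive split with the flow term constant on $\fibre{\vect c}$. Both use the fact that the trajectory lies on $\M{\vtheta_0}$, so injectivity of $g$ on $\M{\vtheta_0}\cap V$ gives $\vtheta(t)=\ginv{\g{\vtheta(t)}}$. Your handling of the fibre term, via intrinsic distance dominating the chord, is in fact more accurate than the paper's, which identifies that term with $\norm{\vtheta-\ginv{\vect c}}^2$.

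Part (c) is where you genuinely differ. The paper applies its generic selection lemma (Lemma~\ref{lem:vanishing-reg}). That lemma assumes a unique regularised minimiser for each $\lambda$ and gets precompactness from local properness. The paper concludes that every limit point is an interpolator minimising $\dist{2}{}{\cdot}{\vtheta_0}$ over $\Theta^\star\cap V$, then invokes (a) at $\vect c=Y$.

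You instead use (a) at every $\vect c$ to collapse the problem onto output space. Comparing with $\vect c=Y$ gives the explicit estimate $\norm{\g{\vtheta_\lambda}-Y}^2\le\lambda E(Y)$, and continuity of the chart inverse at $Y$ finishes the argument.

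Your route needs no compactness and no uniqueness of the regularised minimiser: every selection of minimisers converges. It also needs no continuity of $\dist{2}{}{\cdot}{\vtheta_0}$ in $\vtheta$, which the paper asserts but which is delicate for the fibre term. As a bonus it gives an $O(\sqrt{\lambda})$ rate in output space. The paper's route buys modularity: the same lemma drives the anchored-ridge analysis of Theorem~\ref{thm:ridge-biases}, so comparing regularisers reduces to asking which point minimises $R$ on $\Theta^\star$.

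Two small points remain.
\begin{itemize}
\item You should justify $E(Y)<\infty$. The gradient-flow trajectory has finite length by Lemma~\ref{lem:traj-length}, and reparametrised to unit time it is an admissible horizontal path of finite energy.
\item Your fallback argument, ``$\g{\vtheta}$ close to $Y$, hence $\vtheta\in V$'', does not work on its own, because output fibres extend outside $V$. What does the work is the restriction to $V$, equivalently the convention that geodesic ridge is $+\infty$ off $V$. The paper relies on exactly the same convention.
\end{itemize}
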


These properties are exemplified in Figure~\ref{fig:trajectories}. We have thus generalised anchored ridge to the feature-learning regime, theoretically alleviating all issues associated with anchored ridge and thus allowing for principled modelling of observation noise and uncertainty in the feature-learning regime.

\subsection{Arc Ridge: A Minimax-Robust Upper Bound to Geodesic Ridge}
\label{subsec:arc-ridge}

Geodesic ridge, while theoretically complete, is intractable at scale. However, two observations combine to yield a practical surrogate. First, under exact gradient flow trajectories remain on $\M{\vtheta_0}$ by construction, so the fibre term in Definition~\ref{prop:geodesic-ridge} vanishes. Second, geodesic ridge is upper-bounded by the squared length of \emph{any} on-manifold path, \textbf{including the realised training path}:

\begin{definition}[Squared path length regulariser (arc ridge)]
\label{def:path-length}
For a training trajectory $\gamma : [0,T] \to \Real^m$, the \emph{squared path length regulariser}, or \emph{arc ridge}, is
\[
    L^2(\gamma) \;:=\; \left(\int_0^T \norm{\dot\gamma(t)}\, dt\right)^2.
\]
$L^2(\gamma)$ depends only on the image of $\gamma$ as a subset of $\Theta$, not on its time-parametrisation.
\end{definition}

\begin{theorem}[Squared path length upper-bounds the canonical regulariser]
\label{thm:path-length-upper}
Let $\gamma_{\text{train}} : [0,T] \to \M{\vtheta_0}$ be any piecewise-$C^1$ path with $\gamma_{\text{train}}(0) = \vtheta_0$ and $\gamma_{\text{train}}(T) = \vtheta$. Then
\[
    \dist{2}{\mathrm{flow}}{\vtheta_0}{\vtheta} \;\le\; L^2(\gamma_{\text{train}});
\]
under exact gradient flow (where the fibre term vanishes), $\dist{2}{}{\vtheta}{\vtheta_0} \le L^2(\gamma_{\text{train}})$. The endpoint gradient $\nabla_{\vtheta}L^2$ lies along $\dot\gamma_{\text{train}}(T^-) \in \mathrm{Im}(\J{\vtheta}^\top) \cap T_{\vtheta}\M{\vtheta_0}$ (in particular $\nabla_{\vtheta}L^2 \in \ker N(\g{\vtheta})$), so regularisation by $L^2$ introduces no component along the output fibres or indeed orthogonal to the realised gradient flow direction. Proof deferred to the appendix.
\end{theorem}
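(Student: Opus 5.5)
The plan is to treat the three claims in turn: the inequality for the flow distance, its extension to the full geodesic ridge under exact gradient flow, and the alignment of the endpoint gradient.

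\textbf{The inequality.} I read $\dist{}{\mathrm{flow}}{\vtheta_0}{\vtheta}$ as the infimum of lengths of admissible (horizontal, on-manifold) paths joining $\vtheta_0$ to $\vtheta$. This matches how Proposition~\ref{prop:geodesic-ridge} introduces it as a minimum horizontal distance. The first step is to check that $\gamma_{\text{train}}$ is admissible. It lies in $\M{\vtheta_0}\subseteq\orbit{\vtheta_0}$ by hypothesis. For its tangent, I will use the stated convention that training paths are horizontal, $\dot\gamma_{\text{train}}=J^\top\vu$. If $\M{\vtheta_0}$ is taken with its induced Riemannian metric, no horizontality is needed at all.

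Given admissibility, $\dist{}{\mathrm{flow}}{\vtheta_0}{\vtheta}\le \int_0^T\norm{\dot\gamma_{\text{train}}}\,dt$. Squaring gives the first claim. I also need to reconcile the energy form $E$ of Definition~\ref{def:canonical-energy} with the squared-length form. If $\dot\vtheta=J^\top\vu$, then $\vu^\top K\vu=\norm{\dot\vtheta}^2$, so $E$ is the minimal $\int_0^1\norm{\dot\vtheta}^2$. After reparametrising any path to constant speed on $[0,1]$, Cauchy--Schwarz gives $\left(\int\norm{\dot\gamma}\right)^2\le\int_0^1\norm{\dot\gamma}^2$, with equality at constant speed. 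Hence $E$ equals the squared infimal length, and $L^2$ is parametrisation-invariant, which justifies the remark in Definition~\ref{def:path-length}.

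\textbf{The exact gradient flow case.} The second claim uses Theorem~\ref{thm:canonical-characterisation}(b). Along exact gradient flow we have $\vtheta(t)=\ginv{\g{\vtheta(t)}}$, so the fibre term is zero. Outside $V$ I take the representative to be the trajectory point itself. Then $\dist{2}{}{\vtheta}{\vtheta_0}=\dist{2}{\mathrm{flow}}{\ginv{\g{\vtheta}}}{\vtheta_0}\le L^2(\gamma_{\text{train}})$.

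\textbf{The endpoint gradient.} I view $L^2$ as a function of the endpoint, with earlier parts of the path held fixed. Perturbing the endpoint by $\delta$ and extending the path by a short segment gives the first variation $\delta L = \left\langle \dot\gamma(T^-)/\norm{\dot\gamma(T^-)},\delta\right\rangle$. This is the standard endpoint term of the first variation of arc length. Consequently
\[
\nabla_{\vtheta}L^2 = 2L\,\frac{\dot\gamma_{\text{train}}(T^-)}{\norm{\dot\gamma_{\text{train}}(T^-)}}.
\]
This vector is parallel to $\dot\gamma(T^-)=J^\top\vu\in\mathrm{Im}(\J{\vtheta}^\top)$. It also lies in $T_{\vtheta}\M{\vtheta_0}$ because the path lies on the manifold. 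Since $\mathrm{Im}J^\top\perp\ker J$ and the fibre tangent space is $\ker J$, the gradient has no fibre component. I read $\ker N(\g{\vtheta})$ as this normal/fibre condition.

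\textbf{Main obstacle.} The delicate step is making this endpoint-gradient notion rigorous. $L^2$ is a path functional, not a function on $\Theta$, so the gradient depends on how endpoint perturbations extend the path. My plan is to fix the convention of appending a straight segment, which is admissible to first order only for horizontal $\delta$. I will then argue that the gradient restricted to admissible directions is the projection of $\dot\gamma(T^-)$, which already lies in the tangent space. Non-differentiability at $\dot\gamma(T^-)=0$ is handled by using one-sided limits and assuming a nonzero terminal velocity.
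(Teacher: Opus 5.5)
Your three steps match the paper's proof. First, $\gamma_{\text{train}}$ is an admissible competitor in the infimum defining $\dist{}{\mathrm{flow}}{\vtheta_0}{\vtheta}$, and squaring gives the bound. Second, the gradient-flow point is its own gauge-fixed representative, so the fibre term vanishes. Third, the endpoint gradient is $2L\,\dot\gamma_{\text{train}}(T^-)/\norm{\dot\gamma_{\text{train}}(T^-)}$, which is parallel to $\J{\vtheta}^\top\vu$ and tangent to $\M{\vtheta_0}$. Your reconciliation via $\vu^\top K\vu=\norm{\dot\vtheta}^2$ and Cauchy--Schwarz is slightly more careful than the paper. The paper bounds the induced Riemannian distance on $\M{\vtheta_0}\cap V$ and then relies on the identification $d^2_{\mathrm{flow}}\equiv E$. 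You bound $E$ itself, which is the quantity geodesic ridge actually contains, using only horizontality of the training path.

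The step that fails as written is the one you flag as the main obstacle. Appending a straight segment from $\vtheta$ to $\vtheta+\delta$ lengthens the path by exactly $\norm{\delta}$, not by $\langle\dot\gamma(T^-),\delta\rangle/\norm{\dot\gamma(T^-)}$. The one-sided derivative of $L^2$ in direction $h$ is then $2L\norm{h}$. This is not linear, so there is no gradient. Worse, a fibre direction $h\in\ker\J{\vtheta}$ would register as a first-order \emph{increase}, the opposite of what you want to prove. Restricting to horizontal $\delta$ does not rescue it: for $\delta=-\epsilon\,\dot\gamma(T^-)$, appending gives $+\epsilon\norm{\dot\gamma(T^-)}$ rather than the correct $-\epsilon\norm{\dot\gamma(T^-)}$.

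The endpoint term of the first variation only appears if you \emph{reroute the tail}, as the paper does in Lemma~\ref{lem:arclength-grad}. Replace $\gamma_{\text{train}}|_{[T-\eta,T]}$ by the chord from $\gamma_{\text{train}}(T-\eta)$ to $\vtheta+\delta$, with e.g.\ $\eta=\norm{\delta}^{1/2}$. For a path that is $C^2$ near $T$ with nonzero terminal velocity, as gradient-flow paths are, this gives a length change of $\langle\dot\gamma(T^-),\delta\rangle/\norm{\dot\gamma(T^-)}+o(\norm{\delta})$.

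No admissibility restriction is needed. $L$ is Euclidean arc length of a curve in $\Real^m$ (Definition~\ref{def:path-length}), so the rerouted tail may leave $\M{\vtheta_0}$ and need not be horizontal. You should differentiate in all ambient directions, because the substantive claim is that the ambient vector entering $-\nabla_{\vtheta}(\mathcal{L}+\lambda L^2)$ is orthogonal to $\ker\J{\vtheta}$. If you define the derivative only on admissible directions, ``no fibre component'' becomes true merely by your choice of representative.

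Finally, the statement's $\ker N(\g{\vtheta})$ lives in output space. The precise reading is $N(\vect{c})\dot{\vect{c}}=0$ with $\vect{c}=\g{\vtheta}$ and $\dot{\vect{c}}=\J{\vtheta}\dot\gamma_{\text{train}}(T^-)$. This follows from $N(\vect{c})\dot{\vect{c}}=(I-P_{\vtheta})A(\vect{c})\dot{\vect{c}}=(I-P_{\vtheta})\dot\gamma_{\text{train}}(T^-)=0$, which is the last part of Theorem~\ref{thm:metric-gap} that the paper cites.
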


The bound follows from the definition of geodesic distance as an infimum of arc lengths chained with the energy--length identity (Proposition~\ref{prop:constant-speed}); proof deferred to the appendix. Anchored ridge trivially lower-bounds geodesic ridge on $\M{\vtheta_0}$ as the solution to a relaxation of the same defining problem (Proposition~\ref{prop:ridge-lower}); the resulting sandwich and the shared vanishing-regularisation limit of all three regularisers at $\vtheta^\star$ are recorded in Corollary~\ref{cor:sandwich}.

In the absence of tractability of the canonical regulariser, it is principled to instead solve a minimax-robust relaxation of MAP training, choosing the regulariser that minimises worst-case regret within the sandwich $\norm{\vtheta-\vtheta_0}^2 \le \dist{2}{}{\vtheta}{\vtheta_0} \le L^2(\gamma_{\text{train}})$. This directly motivates arc ridge regularisation. Furthermore, arc ridge requires storing only the accumulated arc length---a single float---since the regularisation gradient direction is fully determined by the current data gradient. It is therefore applicable at scale, and is used for most of our empirical results.

Arc ridge has a further theoretical property: it connects to early stopping, since regularisation by accumulated path length behaves like a time-based stopping criterion and, given an estimate of observation noise, obviates the need for a validation set (Proposition~\ref{prop:identical-trajectory}). The relationship between ridge regularisation and early stopping is well-studied, and in agreement with our theory, early stopping generalises trivially to feature-learning regime networks~\cite{engl1996regularization,yao2007early}, unlike anchored ridge.

\section{Empirical Consequences}
\label{sec:experiments}

\vspace{-0.2cm}

\begin{figure}
  \centering
  \begin{minipage}{0.49\textwidth}
    \centering
    \includegraphics[width=\linewidth,trim={0.35cm 0.4cm 0.35cm 0.35cm},clip]{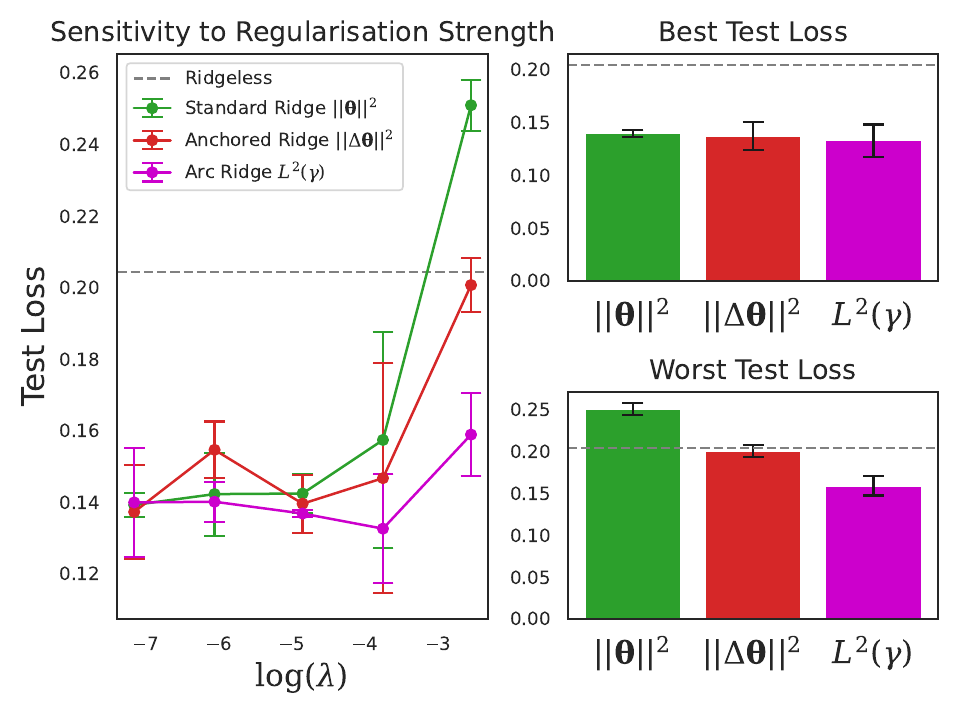}
\end{minipage}
\hfill
\begin{minipage}{0.49\textwidth}
    \centering
    \includegraphics[width=\linewidth,trim={0.35cm 0.4cm 0.35cm 0.35cm},clip]{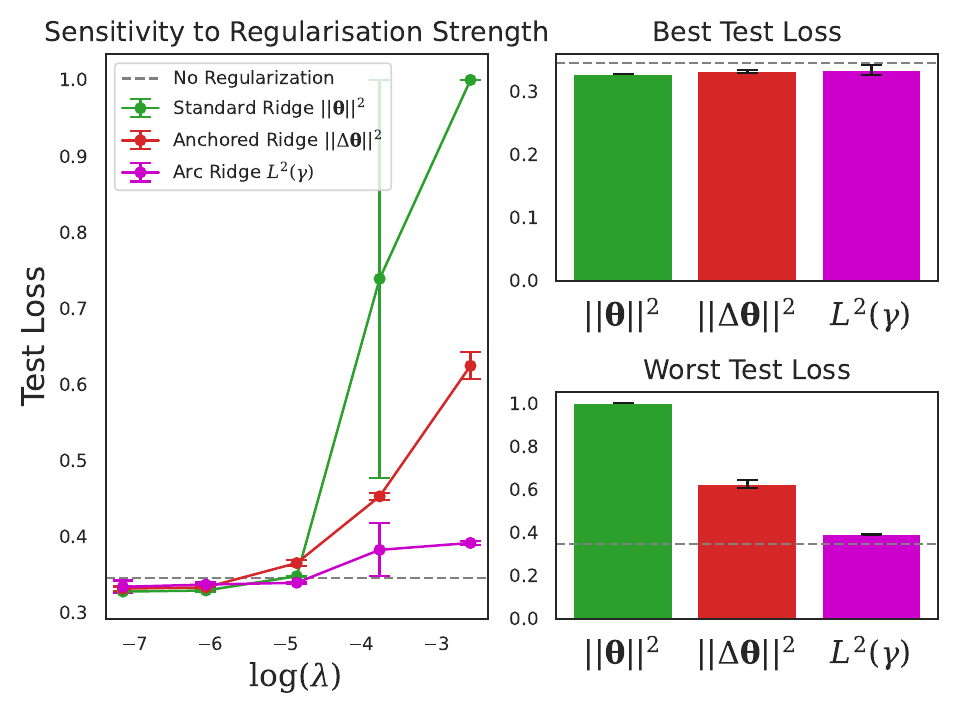}
\end{minipage}
  \caption{Test set MSE on UTKFace (left) and Yelp Review (right) for standard, anchored, and arc ridge across regularisation strengths (with early stopping), against an unregularised baseline (dashed). At high $\lambda$, standard and anchored ridge degrade sharply as their off-manifold gradient components destroy the pretrained prior; arc ridge degrades only mildly, consistent with benign undertraining. At low $\lambda$, early stopping masks the bias and all regularisers converge --- corroborating the formal equivalence between arc ridge and early stopping under gradient flow (Proposition~\ref{prop:identical-trajectory}).}
  \label{fig:main_experiments}
  \vspace{-0.3cm}
\end{figure}

We first demonstrate the consequences of Theorems~\ref{thm:ridge-biases} and~\ref{thm:canonical-characterisation} in Figure~\ref{fig:trajectories}, which displays gradient flow trajectories on the minimal example for each regime. Figure~\ref{fig:main_experiments} then demonstrates the practical implications of our theory on realistic transfer-learning problems requiring feature-learning: Age regression on the UTKFace datatset~\citep{zhifei2017cvpr} transferring from ResNet18~\citep{he2016deep}, and review score regression from the Yelp Review dataset~\citep{zhang2015character} transferring from DistilBERT~\citep{sanh2019distilbert}. We present results for each regulariser across a range of different regularisation strengths, and combine with early stopping for maximum realism. To demonstrate the necessity of regularisation, we also include the ''Ridgeless`` (unregularised) performance (with no early stopping) as a baseline. Our theory predicts that at high regularisation strengths, the pathological gradient component of standard and anchored ridge will cause significant degradation of the prior imbued by pretraining, manifesting as very poor test performance. Conversely, arc ridge should only lose test performance in over-regularised regimes commensurately with undertraining --- a much more controlled phenomenon --- and thus exhibit a milder degradation. In contrast, at lower regularisation strengths the pathological components of anchored and standard ridge are dominated by early stopping --- equivalent to optimally-tuned arc ridge --- and thus all regularisers should converge in performance. This is exactly what we observe. 

\vspace{-0.2cm}

\section{Conclusion and Limitations}

\vspace{-0.2cm}

We proved that the implicit prior of a wide network is governed not by any norm on parameters, but by the geometry of the trajectories gradient flow can trace. We showed that while in the kernel regime anchored ridge is geometrically aligned with training, in the feature-learning regime training geometry curves, causing the regularisation gradient of anchored (and standard) ridge to pathologically degrade the implicit prior at initialisation --- a critical failure mode in transfer learning, for example. We proposed a correction to this by identifying two axioms uniquely identifying anchored in the kernel regime, and applied these to the curved geometry of feature-learning to obtain a principled framework not only yielding \emph{geodesic ridge} as a generalisation of anchored ridge to the feature-learning regime, but also revealing the implicit prior here as a Riemannian Gibbs process. For practical unbiased regularisation at scale, we proposed \emph{arc ridge} a minimax-robust surrogate to true geodesic ridge, which is cheap to compute at scale and reveals deep links to early stopping. Finally, we validated the predictions empirically: regularisers that respect the geometry of training preserve the implicit prior; those that ignore it do not.

Several directions remain open. Empirically, tighter scalable approximations to geodesic ridge, broader validation across architectures and tasks, and rigorous verification of Assumption~\ref{ass:sigma-min} in concrete settings are immediate next steps. Theoretically, full Kolmogorov consistency of the Riemannian Gibbs Process across training subsets, analysis of the flow manifold in the $\mu$P infinite-width limit, and extensions to non-interpolating regimes \cite{soudry2018implicit, lyu2020gradient} and preconditioned optimisers such as Adam \cite{kingma2014adam} or Muon \cite{jordan2024muon} all remain open.

\bibliographystyle{plainnat}
\bibliography{references}


\appendix

\section{Assumptions}
\label{sec:assumptions}

We collect here the full list of \emph{atomic} hypotheses on which the main results rely. Each entry states the assumption, lists the main-text results that depend on it, and sketches its justification. Smoothness of the flow manifold near the interpolant, normal hyperbolicity of $\Theta^\star$, local properness of $g|_{\M{\vtheta_0}}$, and existence of a unique gradient flow $\omega$-limit are derivable consequences --- stated and proved as theorems or lemmas in Appendices~\ref{sec:lemmas} and~\ref{sec:proofs} rather than assumed.

\paragraph{(A.1) Smoothness.}
\emph{Statement.} The network $\f{\cdot}{\vtheta}$ is $C^\infty$ in $\vtheta$ on the parameter domain of interest.
\emph{Used by.} All main-text results.
\emph{Justification.} Finite composition of smooth activations is $C^\infty$ by elementary calculus. The practically relevant restriction is that activations satisfy the pseudo-Lipschitz / polynomially bounded growth hypothesis imposed by the TensorPrograms framework \citep{yang2020tensor,yang2020tensor3}, covering analytic variants of $\tanh$, smooth $\mathrm{SiLU}$, and smooth approximations of $\mathrm{ReLU}$.

\paragraph{(A.2) Data.}
\emph{Statement.} The training dataset $\mathcal{D} = (X, Y)$ has finite $n$ and distinct inputs $X_i \ne X_j$ for $i \ne j$.
\emph{Used by.} All main-text results.
\emph{Justification.} Distinct inputs are necessary for the output fibre $\fibre{\vect{c}}$ to be well-defined at every achievable $\vect{c}$; finiteness of $n$ ensures the Jacobian $\J{\vtheta} \in \Real^{n\times m}$ has finite domain.

\paragraph{(A.3) Loss and gradient flow dynamics.}
\emph{Statement.} Training is the continuous-time gradient flow $\dot\vtheta = -\nabla_{\vtheta} \Loss{\g{\vtheta}}{Y} = \J{\vtheta}^\top \vu $ with $\vu  := -\partial_{\vect{c}}\Loss{\vect{c}}{Y}|_{\vect{c}=\g{\vtheta}}$. The main text takes $\Loss{\vect{c}}{Y} = \norm{\vect{c}-Y}^2$ (squared error), giving $\vu  = 2(Y - \g{\vtheta})$, $\nabla^2_{\vect{c}}\Loss{\cdot}{Y} = 2I$, and Hessian $\nabla^2_{\vtheta}\Loss{\g{\vtheta}}{Y} = 2\,\J{\vtheta}^\top\J{\vtheta} + 2\sum_i (\g{\vtheta}-Y)_i\, \nabla^2 g_i$, which reduces to $2\,\J{\vtheta}^\top\J{\vtheta}$ at any $\vtheta\in\Theta^\star$.
\emph{Used by.} All main-text results.
\emph{Justification.} A definitional choice of optimiser. Discrete-time gradient descent approximates this dynamics in the small step-size limit. The squared-error specialisation is natural for the canonical regression setting underlying the NTK--GP correspondence; extension to any strictly convex, $\mu$-strongly-convex, $M$-smooth loss is straightforward (the SSE-specific identity Hessian becomes $\mu I \preceq \nabla^2_{\vect{c}}\Loss{}{Y} \preceq M I$ everywhere needed) and is treated implicitly throughout the appendix.

\paragraph{(A.4) Uniform Jacobian conditioning near interpolation.}
\emph{Statement.} There exist an open neighbourhood $U \supseteq \Theta^\star$ and constants $0 < c \le C < \infty$ such that for all $\vtheta \in U$,
\[
    c \;\le\; \sigma_{\min}(\J{\vtheta}) \quad\text{and}\quad \sigma_{\max}(\J{\vtheta}) \;\le\; C.
\]
The gradient flow trajectory from $\vtheta_0$ converges to a limit point $\vtheta^\star \in \Theta^\star \cap U$ (formally Assumption~\ref{ass:sigma-min}).
\emph{Used by.} All main-text Theorems and Definitions in Sections~\ref{sec:ridge-pathology} and~\ref{sec:canonical-regularisation}; Lemmas~\ref{lem:fibre-sub},~\ref{lem:nhim-stable},~\ref{lem:traj-length},~\ref{lem:properness}; and proofs throughout Appendix~\ref{sec:proofs}.
\emph{Justification.} A.4 is a single hypothesis collecting the geometric structure required for the local-NHIM analysis. Its three components --- lower spectral bound on $\J{\vtheta}$, upper spectral bound, and convergence of the gradient flow trajectory to $\Theta^\star$ --- are all standard in wide-network analysis. Under $\mu P$ scaling for TensorPrograms-compatible architectures with smooth activations \citep{yang2020tensor,yang2020tensor3}, $\sigma_{\min}(\J{\vtheta_0})$ and $\sigma_{\max}(\J{\vtheta_0})$ concentrate around positive bounded infinite-width limits at initialisation; continuity propagates these bounds to a neighbourhood. Convergence of gradient flow to an interpolator is observed empirically and proved in special cases \citep{jacot2018neural,du2019gradient}. Three downstream properties are derived (not assumed) from A.4: $\Theta^\star \cap U$ is an embedded $C^\infty$ codimension-$n$ submanifold (regular-value theorem applied to $g$~\citep{lee2003smooth}); the loss Hessian $\nabla^2 \mathcal{L} = 2J^\top J$ on $\Theta^\star \cap U$ has $n$ eigenvalues bounded below by $2c^2$ and $m-n$ zero eigenvalues with eigenspace $\ker\J{\vtheta}$, making $\Theta^\star \cap U$ normally hyperbolic for the gradient flow vector field; and the leaf $\M{\vtheta_0}\cap V$ of the resulting stable-manifold foliation, restricted to a smaller neighbourhood $V\subseteq U$, is an embedded $n$-dimensional $C^\infty$ submanifold tangent to $\mathrm{Im}(\J{\vtheta^\star}^\top)$ at $\vtheta^\star$ (Theorem~\ref{thm:local-chart}).

\begin{remark}[$\mu P$ scaling and Assumption~\ref{ass:sigma-min}]
\label{rmk:mup-reduction}
Under $\mu P$ scaling for TensorPrograms-compatible architectures with smooth activations \citep{yang2020tensor,yang2020tensor3}, the NTK Gram matrix $K(\vtheta_0) = \J{\vtheta_0}\J{\vtheta_0}^\top$ concentrates around its positive-definite infinite-width limit at initialisation, with eigenvalues bounded above and below at width $m\to\infty$. Continuity of $\sigma_{\min}, \sigma_{\max}$ in $\vtheta$ then propagates the spectral bounds to a neighbourhood of $\vtheta_0$, and stability arguments along the gradient flow trajectory extend them to a neighbourhood of $\Theta^\star$.
\end{remark}


\section{Extended Related Work}
\label{app:extended-related-work}

\paragraph{Implicit bias and minimum-norm interpolation.}
The connection between gradient flow and minimum-norm solutions in overparametrised linear models is classical~\citep{bartlett2020benign}: the gradient flow limit is the Moore--Penrose pseudoinverse solution, which is also the $\ell_2$-minimum-norm interpolator. The same identity holds in kernel regression and in NTK-regime neural networks~\citep{jacot2018neural,lee2019wide,du2019gradient}: the gradient flow limit is the minimum-RKHS-norm interpolator, whose Gibbs distribution is the Gaussian Process prior $\mathcal{N}(\g{\vtheta_0}, K)$~\citep{rasmussen2006gaussian,he2020bayesian,ordonez2026gaussian}. The \emph{kernel vs.\ rich} dichotomy of \citet{woodworth2020kernel} delineates when networks behave like linear models (kernel regime) vs.\ when feature learning occurs. Our contribution is to identify the \emph{energy} underlying this selection---not merely the norm---and to generalise it to the feature-learning regime. Theorem~\ref{thm:ridge-biases} shows that, beyond the kernel regime, the min-Euclidean-norm interpolator and the gradient flow limit diverge; the energy-based view moves beyond the implicit-bias framing to a positive characterisation of the canonical function-space prior.

\paragraph{Implicit bias in classification and structured models.}
The implicit bias of gradient descent has been characterised in several other settings. For classification with the logistic loss on linearly separable data, gradient descent converges in direction to the max-margin (hard-SVM) solution~\citep{soudry2018implicit,ji2019implicit}; for homogeneous networks this extends to a KKT-based characterisation of the margin~\citep{lyu2020gradient}. For matrix factorisation, gradient descent induces an implicit nuclear-norm penalty~\citep{gunasekar2017implicit}, and for wide two-layer networks trained with the logistic loss a related max-margin structure emerges~\citep{chizat2020implicit}. These results characterise the \emph{direction} of implicit bias in specific architectures or losses; our framework instead characterises the \emph{function-space energy} underlying gradient flow selection and provides a geometric generalisation that applies across architectures and losses within the feature-learning regime.

\paragraph{Feature-learning theory.}
The mean-field limit~\citep{chizat2018global,mei2018mean,rotskoff2018neural} describes the infinite-particle dynamics of two-layer networks in a regime where the Jacobian varies substantially, complementing the NTK picture. The Tensor Programs framework of~\citet{yang2021tensor} provides a unified characterisation of wide networks via a dynamical dichotomy between kernel and feature-learning regimes, with $\mu P$ scaling~\citep{yang2021tuning} as the canonical parameterisation enabling stable feature learning. Recent work has analysed what gradient descent actually \emph{learns} in the feature-learning regime: \citet{damian2022neural} prove that gradient descent can learn representations unavailable to fixed-kernel methods, and \citet{lauditi2025adaptive} characterise the adaptive kernel that emerges at the feature-learning infinite-width limit. Our work complements these results by characterising the \emph{regulariser} that is canonical for the dynamics induced by gradient flow in this regime, rather than describing what those dynamics converge to.

\paragraph{Weight decay beyond RKHS regularisation.}
Weight decay has been studied as a driver of phenomena beyond its interpretation as an RKHS prior. The grokking phenomenon---delayed generalisation long after data loss reaches zero---was first documented by~\citet{power2022grokking} and has been given a provable analysis in the setting of ridge regression by~\citet{xu2026grok}. The effect of weight decay on the features learned by neural networks, and its connection to the adaptive kernel at the feature-learning limit, is analysed by~\citet{lauditi2025adaptive}. Theorem~\ref{thm:ridge-biases} and Remark~\ref{rmk:weight-decay-fails} provide a complementary theoretical perspective: in the feature-learning regime, both standard ridge and anchored ridge select interpolating solutions that differ systematically from the gradient flow limit. Their practical effect is therefore \emph{solution-set modification} rather than \emph{prior specification}, and the benefits observed in practice may arise partly from this solution-selection property rather than from Bayesian regularisation.

\paragraph{Geometry-aware optimisers.}
Natural gradient descent~\citep{amari1998natural} preconditions the gradient by the inverse Fisher information matrix, which in the supervised learning setting equals $\K{\vtheta}^{-1}$~\citep{martens2020new,ollivier2015riemannian}. Under this preconditioning, function-space dynamics become isotropic and training trajectories approximate geodesics on $\M{\vtheta_0}$~\citep{zhang2019fast,cai2019gram}; several wide-network approximations reproduce this isotropic behaviour~\citep{karakida2020understanding}. Recent optimisers implement related principles at scale~\citep{jordan2024muon,bernstein2024old}. Our work is complementary: rather than proposing a new optimiser, we characterise the \emph{regulariser} canonical for the geometry induced by gradient flow; preconditioning the gradient flow by the natural-gradient (NTK) metric is a direct route to tightening our path-length bound, but we leave its full development to future work.


\section{Extended Results}
\label{app:extended-results}

We collect here the formal statements of results whose informal versions appear in the main text. Proofs of all results appear in Appendix~\ref{sec:proofs}.

\subsection{Convergence of gradient flow under uniform Jacobian conditioning}
\label{subsec:gf-convergence}

\begin{proposition}[Gradient-flow convergence to an interpolator]
\label{prop:gf-convergence}
Under Assumptions A.1, A.3, and~\ref{ass:sigma-min}, the gradient flow trajectory $\vtheta(t)=\gf{\vtheta_0}$ converges to a limit $\vtheta^\star\in\Theta^\star$ at exponential rate. For squared-error loss, $\norm{\g{\vtheta(t)} - Y} \le \norm{\g{\vtheta_0} - Y}\,e^{-2c^2 t}$, where $c$ is the lower spectral bound of Assumption~\ref{ass:sigma-min}. In particular, $\vtheta(t)$ stays in $U$ for all $t\ge 0$, and $\omega(\vtheta_0):=\lim_{t\to\infty}\vtheta(t)$ exists.
\end{proposition}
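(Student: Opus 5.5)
The plan is a Lyapunov argument in function space, followed by converting output-space decay into finite parameter-space path length. Write $\vect{r}(t):=\g{\vtheta(t)}-Y$ for the residual along $\vtheta(t)=\gf{\vtheta_0}$.

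\textbf{Step 1: exponential decay of the residual.} By A.3 we have $\dot{\vect{r}}=\J{\vtheta}\dot\vtheta=-2\K{\vtheta}\vect{r}$, and hence
\[
\tfrac{d}{dt}\norm{\vect{r}}^2=-4\,\vect{r}^\top\K{\vtheta}\vect{r}.
\]
Assumption~\ref{ass:sigma-min} keeps $\vtheta(t)\in U$ for all $t\ge0$. There $\K{\vtheta}=\J{\vtheta}\J{\vtheta}^\top\succeq c^2I$, since $\sigma_{\min}(J)\ge c$ and $J$ has full row rank $n$. This gives $\tfrac{d}{dt}\norm{\vect{r}}^2\le-4c^2\norm{\vect{r}}^2$. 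Gr\"onwall then yields $\norm{\vect{r}(t)}^2\le\norm{\vect{r}(0)}^2e^{-4c^2t}$, which is the claimed bound $\norm{\vect{r}(t)}\le\norm{\vect{r}(0)}e^{-2c^2t}$. Staying in $U$ is part of Assumption~\ref{ass:sigma-min}, so no bootstrapping is needed. If one wanted to assume only that $\vtheta_0\in U$, I would add a continuity/first-exit-time argument using Step 2 below. That argument would show the trajectory stays within distance $\frac{\norm{\vect{r}(0)}C}{c^2}$ of $\vtheta_0$, which must lie in $U$.

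\textbf{Step 2: finite length, hence existence of the limit.} The parameter velocity satisfies
\[
\norm{\dot\vtheta}=2\norm{\J{\vtheta}^\top\vect{r}}\le2C\norm{\vect{r}(t)}\le2C\norm{\vect{r}(0)}e^{-2c^2t}.
\]
This is integrable on $[0,\infty)$, with $\int_0^\infty\norm{\dot\vtheta}\,dt\le\frac{C}{c^2}\norm{\vect{r}(0)}$. Consequently, for $s<t$,
\[
\norm{\vtheta(t)-\vtheta(s)}\le\frac{C}{c^2}\norm{\vect{r}(0)}e^{-2c^2s},
\]
so $\vtheta(t)$ is Cauchy and converges to some $\vtheta^\star$ at exponential rate. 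The $\omega$-limit is then the singleton $\{\vtheta^\star\}$.

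\textbf{Step 3: the limit interpolates.} By A.1, $g$ is continuous, so $\g{\vtheta^\star}=\lim_t\g{\vtheta(t)}=Y$ by Step 1, and thus $\vtheta^\star\in\Theta^\star$. We also need $\vtheta^\star\in U$ (or at least $\vtheta^\star\in\Theta^\star\subset U$). This follows from $U\supset\Theta^\star$ in Assumption~\ref{ass:sigma-min}.

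\textbf{Main obstacle.} The only delicate point is that the lower bound $\K{}\succeq c^2I$ holds only on $U$. The argument therefore rests on the trajectory never leaving $U$. Here that is assumed outright. Otherwise it is recovered from the a priori length bound in Step 2 via the exit-time argument sketched in Step 1, which needs $U$ to contain a suitable ball around $\vtheta_0$.
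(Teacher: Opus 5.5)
Your proposal is correct and follows the paper's proof essentially step for step: the function-space Lyapunov/Gr\"onwall bound $\frac{d}{dt}\norm{\vect{r}}^2 \le -4c^2\norm{\vect{r}}^2$, then the velocity bound $\norm{\dot\vtheta}\le 2C\norm{\vect{r}(t)}$ giving integrability and a Cauchy limit, then continuity of $g$ to obtain $\g{\vtheta^\star}=Y$. The differences are cosmetic. The paper also invokes Picard--Lindel\"of for well-posedness of the flow, which you take for granted via the assumption that the trajectory exists and stays in $U$. Your tail estimate $\norm{\vtheta(t)-\vtheta(s)}\le \frac{C}{c^2}\norm{\vect{r}(0)}e^{-2c^2 s}$ makes the exponential rate of parameter convergence explicit, which the paper leaves implicit.
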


\subsection{Local manifold and metric structure of $\M{\theta}$}
\label{subsec:local-chart}

\begin{theorem}[Local manifold chart]
\label{thm:local-chart}
Under Assumption~\ref{ass:sigma-min}, there exists an open neighbourhood $V\subseteq U$ of $\vtheta^\star$ such that:
\begin{enumerate}[label=(\roman*),nosep]
    \item $\Theta^\star \cap V$ is an embedded $C^\infty$ codimension-$n$ submanifold of $V$ with $T_{\vtheta}(\Theta^\star\cap V) = \ker\J{\vtheta}$ at every $\vtheta\in\Theta^\star\cap V$;
    \item the gauge-fixed flow manifold $\M{\vtheta_0}\cap V$ is an embedded $C^\infty$ $n$-submanifold of $V$ tangent at $\vtheta^\star$ to $\mathrm{Im}(\J{\vtheta^\star}^\top)$, varying $C^\infty$-smoothly with the basepoint across $\Theta^\star\cap V$;
    \item the restriction $g|_{\M{\vtheta_0}\cap V}$ is a $C^\infty$ diffeomorphism onto its open image $C_0(\vtheta_0):=g(\M{\vtheta_0}\cap V)\subseteq\Real^n$, with inverse the gauge-fixed representative $\ginv{\vect{c}}$.
\end{enumerate}
\end{theorem}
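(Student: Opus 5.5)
Three parts, three tools: (i) the regular value theorem, (ii) the stable foliation of a normally hyperbolic manifold of equilibria, and (iii) the inverse function theorem.

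\textbf{Part (i).} By Assumption~\ref{ass:sigma-min}, $\sigma_{\min}(\J{\vtheta})\ge c>0$ on $U$, so $Y$ is a regular value of $g|_U$. By (A.1) $g$ is $C^\infty$. The regular value theorem then makes $\Theta^\star\cap U=g|_U^{-1}(Y)$ an embedded $C^\infty$ submanifold of codimension $n$, with tangent space $\ker\J{\vtheta}$. Any $V\subseteq U$ inherits this.

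\textbf{Part (ii).} Write the gradient-flow field as $F(\vtheta)=-2\J{\vtheta}^\top(\g{\vtheta}-Y)$. At each $\vtheta\in\Theta^\star\cap U$, $F$ vanishes and $DF(\vtheta)=-2\J{\vtheta}^\top\J{\vtheta}$. This linearisation has:
\begin{itemize}
\item kernel exactly $\ker\J{\vtheta}=T_{\vtheta}\Theta^\star$;
\item restriction to the complement $\mathrm{Im}\,\J{\vtheta}^\top$ with spectrum in $[-2C^2,-2c^2]$.
\end{itemize}
Hence $\Theta^\star\cap U$ is a normally attracting (hyperbolic) manifold of equilibria, with a uniform spectral gap.

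Fix a compact neighbourhood of $\vtheta^\star$ inside $\Theta^\star\cap U$. The Fenichel/Hirsch--Pugh--Shub theorem, with $C^\infty$ smoothness for manifolds of equilibria since there is no tangential rate, then gives a neighbourhood $V$ foliated by $C^\infty$ stable leaves $W^s(\vtheta')$, $\vtheta'\in\Theta^\star\cap V$. These leaves have the following properties:
\begin{itemize}
\item each is an $n$-dimensional embedded disc tangent at $\vtheta'$ to the stable eigenspace $\mathrm{Im}\,\J{\vtheta'}^\top$;
\item they depend $C^\infty$ on the basepoint $\vtheta'$;
\item $W^s(\vtheta')$ is exactly the set of points in $V$ whose forward flow stays in $V$ and has $\omega$-limit $\vtheta'$.
\end{itemize}

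By Proposition~\ref{prop:gf-convergence} the trajectory converges to $\vtheta^\star$, so its tail lies in $V$ and in $W^s(\vtheta^\star)$. By Definition~\ref{def:flow-manifold}, $\M{\vtheta_0}\cap V$ is the connected component of the basin set containing $\gamma_{\mathrm{gf}}$. I would show this component coincides with $W^s(\vtheta^\star)\cap V$, after shrinking $V$ to a tubular box adapted to the foliation. Points of the orbit in $V$ that flow to $\vtheta^\star$ lie on that leaf. Conversely, the leaf is reachable horizontally, because it is swept out by gradient-flow arcs, which are horizontal curves ending near $\vtheta^\star$; concatenating backward and forward pieces reaches it.

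\textbf{Part (iii).} Restrict $g$ to $W^s(\vtheta^\star)\cap V$. At $\vtheta^\star$ the differential is $\J{\vtheta^\star}$ restricted to $\mathrm{Im}\,\J{\vtheta^\star}^\top$, which is invertible: its singular values are those of $\J{\vtheta^\star}$, hence at least $c$. The inverse function theorem gives a $C^\infty$ local diffeomorphism after shrinking $V$. The image is open because $g|_{\text{leaf}}$ is a local diffeomorphism. Injectivity on the shrunken leaf follows from a quantitative inverse-function argument, using a uniform bound on the second derivative of $g$ over compact $\bar V$ together with the tangency to $\mathrm{Im}\,\J^\top$. The inverse is then $\ginv{\cdot}$ by definition.

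\textbf{Main obstacle.} The hardest step is the identification in (ii) of the orbit-restricted basin component with the full stable leaf. One direction is the flow-invariance of $W^s$. The other, that every leaf point lies in $\orbit{\vtheta_0}$, needs more care. I would try to prove it by noting that the leaf is an $n$-manifold whose tangent spaces approximate $\mathrm{Im}\,\J^\top$ near $\vtheta^\star$. Failing that, I would settle for the weaker statement that $\M{\vtheta_0}\cap V$ is an open subset of the leaf containing the trajectory tail, which suffices for (ii) and (iii).
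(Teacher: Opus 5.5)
Your route is the paper's own: Lemma~\ref{lem:nhim-stable} proves (i) with the regular value theorem and (ii) with the stable foliation of the normally hyperbolic manifold of equilibria $\Theta^\star\cap V$, and (iii) comes from the inverse function theorem. Your version of (iii) is the sounder one. The paper identifies $dg$ on $T_{\vtheta}\M{\vtheta_0}$ with $\J{\vtheta}$ on $\mathrm{Im}(\J{\vtheta}^\top)$ at points other than $\vtheta^\star$, which fails wherever the operator $N$ of Theorem~\ref{thm:metric-gap} is non-zero. It also deduces injectivity from properness plus local diffeomorphism, which only yields a covering map. Invertibility of the restricted differential at $\vtheta^\star$, followed by the inverse function theorem on a shrunken $V$, gives injectivity directly.

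The genuine gap is the step you flagged yourself, and your sketch does not close it as written. Gradient-flow arcs that end merely \emph{near} $\vtheta^\star$ end at different points. Concatenating them needs a horizontal connector, which is exactly the controllability question at issue. Your fallback is also too weak. An open subset of the leaf containing the tail of $\gamma_{\mathrm{gf}}$ need not contain $\vtheta^\star$, so neither the tangency at $\vtheta^\star$ in (ii) nor $\ginv{Y}=\vtheta^\star$ would follow. (The paper does not resolve this either; it asserts $\M{\vtheta_0}\cap V=W^s_{\mathrm{loc}}(\vtheta^\star)$ ``by definition''.)

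The missing idea is that controls need only be $L^2$, so an arc can be made to end \emph{at} $\vtheta^\star$ in finite time.
\begin{itemize}
\item Along gradient flow, $\dot\vtheta=\J{\vtheta}^\top\vu_{\mathrm{gf}}(\tau)$ with $\norm{\vu_{\mathrm{gf}}(\tau)}\le 2\norm{\g{\vtheta_0}-Y}e^{-2c^2\tau}$ (Proposition~\ref{prop:gf-convergence}).
\item Apply the time change $t=T(1-e^{-a\tau})$ with $0<a\le 2c^2$. The control becomes $\vu_{\mathrm{gf}}(\tau)e^{a\tau}/(aT)$, which is bounded.
\item The reparametrised curve is then a horizontal solution on $[0,T]$ equal to $\vtheta^\star$ at $t=T$, so $\vtheta^\star\in\orbit{\vtheta_0}$.
\item The same construction applies to any $\vect{p}$ whose flow converges to $\vtheta^\star$. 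Apply the time change only after its trajectory remains in a ball about $\vtheta^\star$ inside $U$, where the same exponential bounds hold.
\item The control system is reversible ($\vu(t)\mapsto-\vu(T-t)$), so $\vtheta_0\to\vtheta^\star\to\vect{p}$ is a horizontal path.
\end{itemize}
Hence the orbit constraint in Definition~\ref{def:flow-manifold} is vacuous: the basin set is the full basin of $\vtheta^\star$. That basin is path-connected through $\vtheta^\star$ along flow lines, so $\M{\vtheta_0}$ is the whole basin. Choosing $V$ inside the region where the stable fibration determines $\omega$-limits then gives $\M{\vtheta_0}\cap V=W^s_{\mathrm{loc}}(\vtheta^\star)\cap V\ni\vtheta^\star$. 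With this in place, the rest of your argument goes through unchanged.
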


\begin{theorem}[Metric gap of $\M{\vtheta_0}$ from the kernel-regime affine plane]
\label{thm:metric-gap}
Let $A(\vect{c}) := D\ginv{\vect{c}}$ and write $A(\vect{c}) = A_\parallel(\vect{c}) + N(\vect{c})$ with $A_\parallel(\vect{c}) := \J{\ginv{\vect{c}}}^\top \K{\ginv{\vect{c}}}^{-1}$ and $N(\vect{c}) := (I - P_{\ginv{\vect{c}}})\,A(\vect{c})$, where $P_{\vtheta}$ is the orthogonal projector onto $\mathrm{Im}(\J{\vtheta}^\top)$. Then the pullback metric $G(\vect{c}) := A(\vect{c})^\top A(\vect{c})$ on $C_0(\vtheta_0)$ admits the orthogonal decomposition
\[
    G(\vect{c}) \;=\; \K{\ginv{\vect{c}}}^{-1} \;+\; N(\vect{c})^\top N(\vect{c}),
\]
so $G \succeq K^{-1}$ pointwise, with equality $G(\vect{c}) = \K{\ginv{\vect{c}}}^{-1}$ if and only if $T_{\ginv{\vect{c}}}\M{\vtheta_0} = \mathrm{Im}(\J{\ginv{\vect{c}}}^\top)$ (the kernel-regime alignment). Moreover, along the realised gradient flow trajectory $\vtheta(t)=\ginv{\vect{c}(t)}$, the velocity $\dot{\vect{c}}(t)$ satisfies $N(\vect{c}(t))\,\dot{\vect{c}}(t) = 0$, so the trajectory's $G$-energy reduces to $\dot{\vect{c}}^\top \K{\vtheta(t)}^{-1}\dot{\vect{c}}$.
\end{theorem}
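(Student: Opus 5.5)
The plan is to differentiate the identity $g(\ginv{\vect c})=\vect c$ and then split $A(\vect c)$ into components parallel and orthogonal to $\mathrm{Im}(\J{\vtheta}^\top)$. Fix $\vect c\in C_0(\vtheta_0)$ and write $\vtheta=\ginv{\vect c}$, $J=\J{\vtheta}$, $K=JJ^\top$, $P=J^\top K^{-1}J$. By Theorem~\ref{thm:local-chart}(iii), $g\circ\ginv{\cdot}=\mathrm{id}$ on $C_0(\vtheta_0)$. Differentiating gives $JA(\vect c)=I_n$.

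\textbf{Step 1: the parallel component.} Applying the projector,
\[
PA = J^\top K^{-1}JA = J^\top K^{-1} = A_\parallel .
\]
Hence $A=A_\parallel+N$ with $N=(I-P)A$, where the columns of $A_\parallel$ lie in $\mathrm{Im}J^\top$ and those of $N$ lie in $\ker J$. These subspaces are orthogonal, so the cross terms vanish:
\[
A_\parallel^\top N = K^{-1}J(I-P)A = 0,
\]
using $J(I-P)=0$. Therefore
\[
G = A_\parallel^\top A_\parallel + N^\top N = K^{-1}JJ^\top K^{-1} + N^\top N = K^{-1}+N^\top N .
\]
The bound $G\succeq K^{-1}$ is immediate from $N^\top N\succeq 0$. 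Everything here is well defined because Assumption~\ref{ass:sigma-min} gives $K\succeq c^2 I$ on $V\subseteq U$.

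\textbf{Step 2: the equality case.} Equality $G=K^{-1}$ holds iff $N^\top N=0$, iff $N=0$, iff $\mathrm{Im}A\subseteq\mathrm{Im}J^\top$. Now $\mathrm{Im}A=T_{\vtheta}\M{\vtheta_0}$, since $\ginv{\cdot}$ is a diffeomorphism onto the embedded $n$-submanifold. Moreover, $A$ has rank $n$ because $JA=I$. Both subspaces have dimension $n$, so the inclusion is equivalent to $T_{\vtheta}\M{\vtheta_0}=\mathrm{Im}(J^\top)$.

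\textbf{Step 3: the trajectory claim.} This is the step needing care, because it relies on the trajectory lying on the sheet and being parametrised by $\ginv{\cdot}$. By definition of $\M{\vtheta_0}$ as the connected component containing $\gamma_{\mathrm{gf}}$, once $\vtheta(t)$ enters $V$ it lies in $\M{\vtheta_0}\cap V$. By Theorem~\ref{thm:local-chart}(iii), $\vtheta(t)=\ginv{\vect c(t)}$ with $\vect c(t)=\g{\vtheta(t)}$, so $\dot\vtheta=A(\vect c)\dot{\vect c}$. Gradient flow, however, gives $\dot\vtheta=J^\top\vu\in\mathrm{Im}J^\top$, so $(I-P)\dot\vtheta=0$. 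Therefore
\[
N\dot{\vect c}=(I-P)A\dot{\vect c}=(I-P)\dot\vtheta=0 .
\]
Substituting into Step 1 yields $\dot{\vect c}^\top G\dot{\vect c}=\dot{\vect c}^\top K^{-1}\dot{\vect c}+\norm{N\dot{\vect c}}^2=\dot{\vect c}^\top K^{-1}\dot{\vect c}$.

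The main point to check is that the trajectory indeed lies in $\M{\vtheta_0}\cap V$ for the relevant times, so that the chain rule through $\ginv{\cdot}$ applies. Every trajectory point flows to the same limit $\vtheta^\star$, and the trajectory is connected, so it belongs to the defining set's component. The claim is therefore stated for $t$ large enough that $\vtheta(t)\in V$, which holds by Proposition~\ref{prop:gf-convergence}.
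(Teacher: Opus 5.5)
Your proof is correct and follows the paper's argument essentially step for step: differentiate $g\circ\vtheta'=\mathrm{id}$ to get $JA=I_n$, identify $PA=J^\top K^{-1}$, kill the cross terms using $\mathrm{Im}(J^\top)\perp\ker J$, and use $\dot\vtheta=A\dot{\vect{c}}\in\mathrm{Im}(J^\top)$ to force $N\dot{\vect{c}}=0$. Your rank/dimension count in the equality case and your check that the trajectory lies in $\M{\vtheta_0}\cap V$ for large $t$ are minor tightenings of steps that the paper only asserts.
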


\subsection{Uniqueness of the canonical kernel-regime energy and lift}

The kernel-regime canonical function-space energy $E(\vect{c}) = (\vect{c}-\g{\vtheta_0})^\top K^{-1}(\vect{c}-\g{\vtheta_0})$ of Definition~\ref{def:canonical-energy} is the unique continuous output-space energy compatible with the natural geometry of the flow manifold; ``naturality'' alone --- the property that vanishing-regularisation MAP estimation recovers the gradient flow limit --- does not single it out. We make these statements formal here.

\begin{theorem}[Output-space energy uniqueness]
\label{thm:output-energy-uniqueness}
Let $G \in \Real^{n\times n}$ be symmetric positive definite. Suppose $\rho_G : \Real^n \to [0,\infty)$ is continuous, satisfies $\rho_G(\vect{0}) = 0$, is nontrivial, and obeys
\begin{enumerate}[label=(\roman*),nosep]
    \item ($G^{-1}$-isometry invariance) $\rho_G(U\vect{c}) = \rho_G(\vect{c})$ for every $U \in \Real^{n\times n}$ with $U^\top G^{-1} U = G^{-1}$;
    \item (orthogonal additivity) $\vu ^\top G^{-1} \vect{v} = 0 \implies \rho_G(\vu +\vect{v}) = \rho_G(\vu ) + \rho_G(\vect{v})$.
\end{enumerate}
Then $\rho_G(\vect{c}) = \alpha\, \vect{c}^\top G^{-1} \vect{c}$ for some constant $\alpha > 0$.
\end{theorem}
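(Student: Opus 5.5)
We first reduce to the Euclidean case by a change of variables. Let $S := G^{-1/2}$, so that $S^{\top}G^{-1}S$ is not quite what we want; instead set $\vect{w} := G^{-1/2}\vect{c}$ and define $\tilde\rho(\vect{w}) := \rho_G(G^{1/2}\vect{w})$. Then $\vect{c}^\top G^{-1}\vect{c} = \norm{\vect{w}}^2$. Axiom (i) transforms into invariance of $\tilde\rho$ under the ordinary orthogonal group: if $O\in O(n)$ then $U := G^{1/2} O G^{-1/2}$ satisfies $U^\top G^{-1} U = G^{-1/2}O^\top O G^{-1/2} = G^{-1}$, so $\tilde\rho(O\vect{w}) = \tilde\rho(\vect{w})$. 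Axiom (ii) becomes Euclidean orthogonal additivity: $\vect{a}^\top\vect{b}=0 \implies \tilde\rho(\vect{a}+\vect{b}) = \tilde\rho(\vect{a})+\tilde\rho(\vect{b})$. Continuity, $\tilde\rho(\vect{0})=0$ and nontriviality carry over. It therefore suffices to show $\tilde\rho(\vect{w}) = \alpha\norm{\vect{w}}^2$.

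Next, by $O(n)$-invariance and transitivity of $O(n)$ on spheres, $\tilde\rho(\vect{w}) = h(\norm{\vect{w}})$ for a continuous $h:[0,\infty)\to[0,\infty)$ with $h(0)=0$. For $n\ge 2$, take orthogonal $\vect{a},\vect{b}$ of norms $r,s$ (always possible), and orthogonal additivity gives $h(\sqrt{r^2+s^2}) = h(r)+h(s)$. Setting $\phi(x) := h(\sqrt{x})$ on $[0,\infty)$ yields Cauchy's equation $\phi(x+y)=\phi(x)+\phi(y)$ for all $x,y\ge 0$, with $\phi$ continuous. The standard argument (integers, then rationals, then density and continuity) gives $\phi(x)=\alpha x$. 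Hence $\tilde\rho(\vect{w})=\alpha\norm{\vect{w}}^2$, and nonnegativity plus nontriviality force $\alpha>0$. Undoing the substitution yields $\rho_G(\vect{c}) = \alpha\,\vect{c}^\top G^{-1}\vect{c}$.

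The main obstacle is the case $n=1$, where orthogonal additivity is vacuous apart from $\vect{0}$. There the isometry group is only $\{\pm1\}$, so $\rho_G$ is merely an even continuous function vanishing at $0$, and the conclusion fails, for example for $\rho(c)=|c|$. I would handle this honestly by restricting the theorem to $n\ge 2$, which is the overparametrised regime with more than one training point. Alternatively, one could supplement the axioms with homogeneity of degree two, inherited from the quadratic-energy scaling $E(\lambda\vect{c})$ under the control rescaling $\vu\mapsto\lambda\vu$ in Definition~\ref{def:canonical-energy}. A minor point to verify is that additivity is only needed for pairs realising arbitrary $(r,s)$, which requires two orthonormal directions and hence $n\ge2$.
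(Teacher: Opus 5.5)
Your argument is correct and follows the paper's route. The paper likewise uses transitivity of the $G^{-1}$-isometry group on level sets to write $\rho_G(\vect{c}) = f(\vect{c}^\top G^{-1}\vect{c})$, then uses orthogonal additivity to get Cauchy's equation for $f$ and continuity to conclude $f(s)=\alpha s$; your whitening $\vect{w}=G^{-1/2}\vect{c}$ is just a cleaner phrasing of that step.

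Your $n=1$ caveat is also well founded, and it is a genuine defect of the statement rather than of your proof. The paper's Step 2 asserts that $G^{-1}$-orthogonal pairs of arbitrary prescribed norms exist ``since $n\ge 1$'', which is false in one dimension. There $\rho(c)=|c|$ satisfies every hypothesis without being quadratic, so the theorem needs $n\ge 2$ or an extra homogeneity axiom.
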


In words, axiom (i) asks that $\rho_G$ depends only on the $G^{-1}$-geometry of the displacement, not on the basis we describe it in (the analogue of rotational invariance for the standard inner product); axiom (ii) asks that $G^{-1}$-orthogonal displacements contribute additively (the function-space analogue of the Pythagorean identity). Together they leave a single scale degree of freedom, absorbed into the prior temperature $\beta$. Applied with $G = K^{-1}$, this characterises the kernel-regime canonical energy of Definition~\ref{def:canonical-energy} uniquely. The corresponding parameter-space lift inherits this uniqueness: in the kernel regime, the orthogonal decomposition $\Real^m = \mathrm{Im}(J^\top)\oplus \ker J$ coincides with the flow-tangent / fibre decomposition globally, and the squared ambient distance along $\fibre{\g{\vtheta}}$ is the unique fibre extension respecting orthogonal additivity in this decomposition.

\begin{remark}[Both axioms of Theorem~\ref{thm:output-energy-uniqueness} are individually necessary]
\label{rmk:axioms-necessary}
Dropping axiom (i) (isometry invariance), the functional $\rho_G(\vect{c}) := (\vect{c}_1 - \g{\vtheta_0}_1)^2$ is continuous, vanishes at the origin, is non-trivial, and is orthogonally additive (every basis-aligned splitting of $\vect{c}$ trivially satisfies the additivity hypothesis); but it depends on the chosen first coordinate and is not of the form $\alpha\,\vect{c}^\top G^{-1}\vect{c}$. Dropping axiom (ii) (orthogonal additivity), the functional $\rho_G(\vect{c}) := \big(\vect{c}^\top G^{-1}\vect{c}\big)^2$ is continuous, $G^{-1}$-isometry invariant, and non-trivial; but $G^{-1}$-orthogonal displacements satisfy $\rho_G(\vu +\vect{v}) = (s+t)^2 \neq s^2+t^2 = \rho_G(\vu )+\rho_G(\vect{v})$ for $s,t>0$, so axiom (ii) fails. Hence both axioms are individually necessary for the uniqueness conclusion.
\end{remark}

\begin{proposition}[No-go for naturality alone in the kernel regime]
\label{prop:naturality-nogo}
Let $\vtheta^\star$ be the gradient-flow interpolator from $\vtheta_0$ in a kernel-regime model with constant Jacobian $J\in\Real^{n\times m}$ and $K=JJ^\top\succ 0$. Consider the two-parameter family of trajectory-blind quadratic regularisers
\[
    \mathcal{F} \;:=\; \Bigl\{\,R_{a,B}(\vtheta) \;:=\; a\,\norm{\vtheta-\vtheta_0}^2 \;+\; \bigl(J(\vtheta-\vtheta_0)\bigr)^\top B\,\bigl(J(\vtheta-\vtheta_0)\bigr)
        \;:\; a > 0,\ B\in\mathrm{Sym}^+_n\,\Bigr\},
\]
where $\mathrm{Sym}^+_n$ denotes the cone of symmetric positive semi-definite $n\times n$ matrices. Distinct $(a,B)$ give distinct regularisers in $\mathcal{F}$ (their Hessians $2(aI_m + J^\top B J)$ differ), and distinct elements of $\mathcal{F}$ correspond to distinct Gaussian MAP parameter priors $\mathcal{N}(\vtheta_0, \tfrac{1}{2\lambda}(aI_m + J^\top B J)^{-1})$. Yet every element of $\mathcal{F}$ attains the same vanishing-regularisation limit:
\[
    \lim_{\lambda\downarrow 0}\,\arg\min_{\vtheta}\bigl[\Loss{\g{\vtheta}}{Y} + \lambda R_{a,B}(\vtheta)\bigr] \;=\; \vtheta^\star \qquad \forall (a,B)\in\Real_{>0}\times\mathrm{Sym}^+_n.
\]
The vanishing-regularisation property (``naturality'') therefore does \emph{not} pin out anchored ridge --- only the special case $B=0$ does, and singling it out within $\mathcal{F}$ requires the additional orthogonal-additivity axiom of Theorem~\ref{thm:output-energy-uniqueness}.
\end{proposition}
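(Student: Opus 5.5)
In the kernel regime $g$ is affine, $\g{\vtheta}=\g{\vtheta_0}+J(\vtheta-\vtheta_0)$, so everything reduces to linear algebra in the displacement $\Delta:=\vtheta-\vtheta_0$. Write $\vect{r}:=Y-\g{\vtheta_0}$. I would take three steps.

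\emph{Step 1: distinctness.} Suppose $aI_m+J^\top BJ=a'I_m+J^\top B'J$. Since $m>n$, $\ker J\neq\{0\}$. Testing on a unit vector $\vect{v}\in\ker J$ gives $a=a'$, so $J^\top(B-B')J=0$. Because $J$ has full row rank ($K\succ0$), $J$ has a right inverse $J^\top K^{-1}$. Multiplying by $K^{-1}J$ on the left and by $J^\top K^{-1}$ on the right yields $B=B'$. Hence distinct $(a,B)$ give distinct Hessians, and therefore distinct regularisers. They also give distinct Gaussian precisions, hence distinct MAP priors $\mathcal{N}(\vtheta_0,\tfrac{1}{2\lambda}(aI_m+J^\top BJ)^{-1})$; these priors are well defined because $a>0$ makes the precision positive definite.

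\emph{Step 2: the regularised minimiser.} The objective
\[
\norm{J\Delta-\vect{r}}^2+\lambda a\norm{\Delta}^2+\lambda\,\Delta^\top J^\top BJ\Delta
\]
is strictly convex for $\lambda>0$, since its Hessian is at least $2\lambda aI\succ0$. So it has a unique minimiser, characterised by the normal equation
\[
\bigl(J^\top J+\lambda aI_m+\lambda J^\top BJ\bigr)\Delta=J^\top\vect{r}.
\]
I will show this minimiser lies in $\mathrm{Im}(J^\top)$ by trying the ansatz $\Delta=J^\top\vect{w}$. Substituting gives $J^\top\bigl(K+\lambda aI_n+\lambda BK\bigr)\vect{w}=J^\top\vect{r}$. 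Since $J^\top$ is injective, this reduces to $(K+\lambda aI_n+\lambda BK)\vect{w}=\vect{r}$. The matrix $K+\lambda aI_n+\lambda BK$ is a continuous perturbation of $K\succ0$, so it is invertible for all sufficiently small $\lambda$. A solution of the ansatz form therefore exists, and by uniqueness it is the minimiser:
\[
\Delta_\lambda=J^\top(K+\lambda aI_n+\lambda BK)^{-1}\vect{r}.
\]
Letting $\lambda\downarrow0$ and using continuity of matrix inversion gives $\Delta_\lambda\to J^\top K^{-1}\vect{r}$, for every $(a,B)$.

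\emph{Step 3: identification with gradient flow.} With constant $J$, the flow $\dot\vtheta=2J^\top(Y-\g{\vtheta})$ stays in $\vtheta_0+\mathrm{Im}(J^\top)$ and, by Proposition~\ref{prop:gf-convergence}, converges to an interpolator. The interpolators in this plane solve $JJ^\top\vect{w}=\vect{r}$, and the solution is unique, so $\vtheta^\star=\vtheta_0+J^\top K^{-1}\vect{r}$. This matches the limit from Step 2 and proves the displayed identity.

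The only delicate point is justifying invertibility of $K+\lambda aI+\lambda BK$. This matrix is non-symmetric, so I will not attempt a positive-definiteness argument. I rely on the perturbation argument for small $\lambda$, which is all the limit requires. For larger $\lambda$, uniqueness of the minimiser from strict convexity already guarantees the normal equation is solvable, so nothing more is needed there.

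The final sentence of the proposition is interpretive. It follows because $B=0$ is the only member of $\mathcal{F}$ whose function-space energy has the form $\alpha\,\vect{c}^\top K^{-1}\vect{c}$ demanded by Theorem~\ref{thm:output-energy-uniqueness}.
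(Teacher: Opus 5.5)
Your proof of the two mathematical claims (distinctness and the common vanishing-regularisation limit) is correct, but it takes a different route from the paper.

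\textbf{Comparison of routes.} The paper never solves the regularised problem. It invokes the selection principle (Lemma~\ref{lem:vanishing-reg} with Lemma~\ref{lem:crit-interp}) to identify the limit as the minimiser of $R_{a,B}$ over $\Theta^\star=\vtheta_{\mathrm{lin}}+\ker J$. It then observes that the $B$-term is constant there, because $J(\vtheta-\vtheta_0)$ is frozen on $\Theta^\star$. Finally it uses Pythagoras to reduce the $a$-term to $\norm{\vtheta_{\mathrm{lin}}-\vtheta_0}^2+\norm{w}^2$.

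You instead solve the normal equations in closed form, $\Delta_\lambda=J^\top(K+\lambda aI_n+\lambda BK)^{-1}\vect{r}$, and pass to the limit by continuity of inversion. This buys you several things:
\begin{itemize}
\item No compactness hypothesis to verify.
\item An explicit $O(\lambda)$ rate.
\item A stronger conclusion: every regularised minimiser already lies on the flow plane $\vtheta_0+\mathrm{Im}(J^\top)$.
\end{itemize}
Your perturbation caveat is unnecessary: $(I_n+\lambda B)^{-1}\bigl[(I_n+\lambda B)K+\lambda aI_n\bigr]=K+\lambda a(I_n+\lambda B)^{-1}\succ0$, so the matrix is invertible for every $\lambda>0$.

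The paper's route instead makes transparent why $B$ is invisible in the limit. It is also the template reused for Theorem~\ref{thm:ridge-biases} and Proposition~\ref{prop:static-anchor-bias}, where $g$ is not affine and no closed form exists.

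Two of your steps are sounder than the paper's. Your distinctness argument is better than the paper's trace argument. Testing on a unit vector of $\ker J$ is exactly where $m>n$ enters, and it is genuinely needed: for $m=n$ one has $J^\top K^{-1}J=I_m$, so $(a,B)$ and $(a-\epsilon,B+\epsilon K^{-1})$ give the same Hessian. Your Step~3 also relies only on invariance of the affine plane and uniqueness of the interpolator in it. This avoids the paper's inaccurate remark that the kernel-regime trajectory is a constant-velocity straight line.

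\textbf{The closing justification is false.} Write $\vect{d}=\vect{c}-\g{\vtheta_0}$. The function-space energy induced by $R_{a,B}$ is
$\min_{\g{\vtheta}=\vect{c}}R_{a,B}(\vtheta)=\vect{d}^\top(aK^{-1}+B)\vect{d}$.
For every $B=bK^{-1}$ with $b\ge0$, this equals $(a+b)\,\vect{d}^\top K^{-1}\vect{d}$, so it satisfies both axioms of Theorem~\ref{thm:output-energy-uniqueness}. Yet with $P=J^\top K^{-1}J$,
$R_{a,bK^{-1}}=(a+b)\norm{P\Delta}^2+a\norm{(I-P)\Delta}^2$,
which is not a multiple of anchored ridge when $b>0$.

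The output-space axioms therefore cut $\mathcal{F}$ down only to the ray $B\propto K^{-1}$. Removing the rest of that ray needs a condition tying the weight of the flow component to that of the fibre component. Examples are the unit-weight lift of Definition~\ref{def:canonical-lift} with $E$ normalised exactly as in Definition~\ref{def:canonical-energy}, or parameter-space isotropy. Isotropy within $\mathcal{F}$ forces $J^\top BJ\propto I_m$; since $\operatorname{rank}(J^\top BJ)\le n<m$, this gives $J^\top BJ=0$, hence $B=0$. The paper's proof asserts the same elimination without argument, so this is a weakness of the proposition's interpretive clause rather than something you missed. Still, you should not claim it follows from Theorem~\ref{thm:output-energy-uniqueness}.
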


\begin{theorem}[Uniqueness of the canonical lift]
\label{thm:canonical-lift-uniqueness}
Fix the canonical function-space energy $E$ of Definition~\ref{def:canonical-energy}. Among all functionals $R:\Theta\to[0,\infty)$ satisfying
\begin{enumerate}[label=(\alph*),nosep]
    \item (additivity to $E$) $R(\vtheta) = E(\g{\vtheta}) + S(\vtheta)$ for some $S:\Theta\to[0,\infty)$ depending only on the on-fibre displacement of $\vtheta$ from $\ginv{\g{\vtheta}}$;
    \item (Euclidean fibre quadratic) the restriction of $S$ to each fibre $\fibre{\g{\vtheta}}\cap V$ is a quadratic in the ambient Euclidean metric inherited from $\Real^m$;
    \item (vanishing at the gauge-fixed representative) $S(\ginv{\g{\vtheta}})=0$,
\end{enumerate}
the unique solution (up to a positive scale) is
\[
    R(\vtheta) \;=\; E(\g{\vtheta}) \;+\; \inf_{\psi}\int_0^1 \norm{\dot\psi(t)}^2\,dt
    \;=\; E(\g{\vtheta}) \;+\; \dist{2}{\mathrm{fibre}}{\ginv{\g{\vtheta}}}{\vtheta},
\]
where the infimum runs over piecewise-$C^1$ paths $\psi:[0,1]\to\fibre{\g{\vtheta}}$ with $\psi(0)=\ginv{\g{\vtheta}}$, $\psi(1)=\vtheta$. In the kernel regime this reduces to anchored ridge; in the feature-learning regime it is geodesic ridge of Proposition~\ref{prop:geodesic-ridge}.
\end{theorem}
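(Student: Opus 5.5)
The plan is to reduce the problem to classifying, fibre by fibre, the admissible ``on-fibre'' functionals $S$. Once $S$ is pinned down, $R = E\circ g + S$ is determined.

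\textbf{Step 1: reduction to a single fibre.} By axiom (a), $R-E(\g{\cdot})=S$, and $S$ depends only on the on-fibre displacement of $\vtheta$ from $\ginv{\g{\vtheta}}$. So $E$ contributes nothing to the choice, and it suffices to fix $\vect c\in C_0(\vtheta_0)$ and classify $S|_{\fibre{\vect c}\cap V}$. By Theorem~\ref{thm:local-chart}(i), $\fibre{\vect c}\cap V$ is an embedded $C^\infty$ submanifold. By Theorem~\ref{thm:local-chart}(iii), the basepoint $p_{\vect c}:=\ginv{\vect c}$ is a well-defined point on it, varying smoothly with $\vect c$. Axiom (c) gives $S(p_{\vect c})=0$.

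\textbf{Step 2: interpret ``quadratic in the ambient Euclidean metric'' intrinsically.} On a curved fibre, the only intrinsic object built from the Euclidean metric restricted to $\fibre{\vect c}$ is the induced Riemannian metric. I would read axiom (b) as follows: in Riemannian normal coordinates $\vect v=\exp_{p_{\vect c}}^{-1}(\vtheta)$ about $p_{\vect c}$, $S$ is a nonnegative quadratic form $Q_{\vect c}(\vect v)$ determined by the induced metric alone. Invariance under isometries of $T_{p_{\vect c}}\fibre{\vect c}$ then reduces the form to $\alpha_{\vect c}\norm{\vect v}^2$. This uses the argument of Theorem~\ref{thm:output-energy-uniqueness} with $G=I$: orthogonal additivity along orthogonal tangent directions, plus isometry invariance, forces a multiple of the norm squared. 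Since $\norm{\exp^{-1}_{p_{\vect c}}\vtheta}^2=\dist{2}{\mathrm{fibre}}{p_{\vect c}}{\vtheta}$ inside a normal neighbourhood (after shrinking $V$ if needed), we get $S=\alpha_{\vect c}\,\dist{2}{\mathrm{fibre}}{p_{\vect c}}{\cdot}$. The energy--length identity (Proposition~\ref{prop:constant-speed}) then identifies this with $\alpha_{\vect c}\inf_\psi\int_0^1\norm{\dot\psi}^2dt$.

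\textbf{Step 3: the constant is uniform.} Next I would show $\alpha_{\vect c}$ does not depend on $\vect c$. I would use the kernel-regime consistency already built into the framework: in the flat case the lift must reproduce the Pythagorean split $\norm{\Delta\vtheta_{\mathrm{flow}}}^2+\norm{\Delta\vtheta_{\mathrm{fibre}}}^2$ (Corollary~\ref{cor:kernel-anchored-ridge}). This fixes the relative weight of $S$ and $E$ to be $1$, and only a global scale on $R$ remains. For the feature-learning case I would argue that the scale is tied to $E$ through the shared ambient metric, so any $\vect c$-dependence would contradict the ``depending only on on-fibre displacement'' clause of (a). Finally I would check existence: the displayed $R$ satisfies (a)--(c), since $S\ge0$ vanishes exactly at $\psi\equiv p_{\vect c}$.

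\textbf{Main obstacle.} The hardest step is Step 2. Axiom (b) is informal, and ``quadratic'' has no coordinate-free meaning on a curved fibre without choosing a chart. The crux is to justify normal coordinates as the canonical chart, rather than, for example, ambient chords $\norm{\vtheta-p_{\vect c}}^2$. I would first try to exclude chords via the requirement that $S$ be intrinsic to $\fibre{\vect c}$, meaning invariant under isometric re-embeddings. Chord length is not invariant under such re-embeddings, while geodesic distance is. If that fails, I would fall back on stating (b) explicitly in normal coordinates, so that the theorem becomes a uniqueness result relative to that interpretation.
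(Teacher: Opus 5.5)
\textbf{Gap in Step 3.} Axioms (a)--(c) do not determine the weight of the fibre term relative to $E$. For every $\alpha\ge 0$, the functional $R_\alpha(\vtheta):=E(\g{\vtheta})+\alpha\,\dist{2}{\mathrm{fibre}}{\ginv{\g{\vtheta}}}{\vtheta}$ satisfies (a) and (c), and satisfies (b) in whatever sense $R_1$ does. Yet for $\alpha\neq 1$ it is not a positive multiple of $R_1$. To see this, suppose $R_\alpha=kR_1$. On $\M{\vtheta_0}\cap V$ the fibre term vanishes and $E\not\equiv 0$, which gives $k=1$, and then $\alpha=1$.

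Your appeal to Corollary~\ref{cor:kernel-anchored-ridge} to fix $\alpha=1$ is circular. That corollary comes from Proposition~\ref{prop:linear-recovery}(d), which is computed from Definition~\ref{def:canonical-lift}, i.e.\ from the unit-weight lift you are trying to single out. Even if you imposed kernel-regime reduction as a hypothesis, carrying $\alpha=1$ over to a curved model needs a further assumption that the weight is model-independent.

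Separately, $\alpha=0$ (the zero quadratic) is admissible under (a)--(c). Excluding it needs an extra hypothesis, e.g.\ the vanishing-regularisation property, which the output-only regulariser $E\circ g$ fails (cf.\ Proposition~\ref{prop:output-norm-fibre}).

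There are two clean fixes:
\begin{itemize}
\item Read ``up to a positive scale'' as acting on $S$ alone. Then Step 3 only needs $\alpha_{\vect{c}}$ to be independent of $\vect{c}$. That follows from reading (a) as ``$S$ is one function of the intrinsic displacement, the same on every fibre''. Drop the remark about the scale being ``tied to $E$ through the shared ambient metric'', which proves nothing.
\item Alternatively, add kernel-regime reduction and non-degeneracy as explicit axioms.
\end{itemize}
The paper's proof does not close this gap either. It ``absorbs $\alpha$ into $\beta$'', but $\beta$ rescales $E$ too. It also claims non-negativity forces $\alpha>0$, which it does not.

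\textbf{Step 2 versus the paper.} The paper argues at the level of paths. It asserts that a quadratic functional of $\psi$ that depends only on the path class and vanishes at the reference endpoint must equal $\inf_\psi\int_0^1\norm{\dot\psi}^2\,dt$, citing Proposition~\ref{prop:constant-speed}. It then invokes ``reduction of quadratic forms'' for uniqueness. Your route is more precise and exposes the invariance hypothesis the paper uses silently:
\begin{itemize}
\item take a quadratic form in normal coordinates at $p_{\vect{c}}$;
\item impose invariance under isometries of $T_{p_{\vect{c}}}\fibre{\vect{c}}$;
\item use $\norm{\exp^{-1}_{p_{\vect{c}}}\vtheta}=\dist{}{\mathrm{fibre}}{p_{\vect{c}}}{\vtheta}$ in a normal ball.
\end{itemize}
Two small points. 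Orthogonal-group invariance of a quadratic form already gives $\alpha\norm{\vect{v}}^2$ by transitivity on spheres, so orthogonal additivity is not needed. You should also make the shrinking of $V$ to normal balls uniform in $\vect{c}$, e.g.\ via a lower bound on injectivity radii near $\vtheta^\star$.

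Your worry about chords is well founded. Under the literal reading of (b), $\alpha\norm{\vtheta-\ginv{\g{\vtheta}}}^2$ satisfies (a)--(c) and differs from $\dist{2}{\mathrm{fibre}}{\ginv{\g{\vtheta}}}{\vtheta}$ on a curved fibre. The paper's condition ``depends only on the path class'' does not exclude it, since a chord depends only on its endpoints. Indeed, the paper itself uses the chord as the fibre distance in Remark~\ref{rmk:fibre-connected} and in the proof of Theorem~\ref{thm:canonical-characterisation}(a).

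So your intrinsic-invariance requirement is an additional axiom, not something derivable from (a)--(c), and you should state it as one. With that axiom and the normalisation above made explicit, your argument is sound.
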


\begin{theorem}[Combined uniqueness of geodesic ridge as the canonical regulariser]
\label{thm:geodesic-ridge-uniqueness}
Under Assumption~\ref{ass:sigma-min}, the only functional $R : \Theta\to[0,\infty)$ on a neighbourhood of $\Theta^\star$ which simultaneously satisfies the function-space-energy axioms of Theorem~\ref{thm:output-energy-uniqueness} (with $G=K^{-1}$ at $\vtheta_0$) on the output channel $\g{\vtheta}$, the lift axioms (a)--(c) of Theorem~\ref{thm:canonical-lift-uniqueness} on the parameter-space channel, and the vanishing-regularisation property $\arg\min(\Loss{\g{\cdot}}{Y}+\lambda R)\to\vtheta^\star$ as $\lambda\downarrow 0$, is, up to a single positive scale,
\[
    R(\vtheta) \;=\; \dist{2}{}{\vtheta}{\vtheta_0} \;=\; \dist{2}{\mathrm{flow}}{\vtheta_0}{\ginv{\g{\vtheta}}} + \dist{2}{\mathrm{fibre}}{\ginv{\g{\vtheta}}}{\vtheta},
\]
i.e.\ geodesic ridge. In the kernel regime this collapses to anchored ridge; the corresponding function-space prior is uniquely the Riemannian Gibbs Process of Definition~\ref{def:riemannian-gibbs}, collapsing to the NTK Gaussian process under linear $J$.
\end{theorem}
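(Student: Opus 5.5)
The plan is to split $R$ along its two channels, settle each with an existing uniqueness result, and then check that the vanishing-regularisation property removes the one remaining freedom, the relative scale between the channels.

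\emph{Step 1: the output channel.} By the additivity axiom (a) of Theorem~\ref{thm:canonical-lift-uniqueness}, any admissible $R$ has the form $R(\vtheta)=\rho(\g{\vtheta})+S(\vtheta)$. Here $S\ge 0$ depends only on the on-fibre displacement from $\ginv{\g{\vtheta}}$, and $S(\ginv{\g{\vtheta}})=0$ by (c). So the restriction of $R$ to the gauge-fixed sheet $\M{\vtheta_0}\cap V$ is exactly $\rho\circ g$. Applying Theorem~\ref{thm:output-energy-uniqueness} to $\rho$ in the displacement variable $\vect{c}-\g{\vtheta_0}$ with $G=K^{-1}$ gives $\rho=\alpha E$ to leading order near $\g{\vtheta_0}$. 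To globalise this along the curved sheet, I will identify $E$ with the squared horizontal distance $\dist{2}{\mathrm{flow}}{\vtheta_0}{\ginv{\vect c}}$ (Proposition~\ref{prop:geodesic-ridge}). In Riemannian normal coordinates for the pullback metric of Theorem~\ref{thm:metric-gap}, the axioms applied to the tangent-space displacement then force $\rho=\alpha\,\dist{2}{\mathrm{flow}}{\vtheta_0}{\cdot}$.

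\emph{Step 2: the fibre channel.} Axioms (b)--(c) of Theorem~\ref{thm:canonical-lift-uniqueness} force $S=\alpha'\,\dist{2}{\mathrm{fibre}}{\ginv{\g{\vtheta}}}{\vtheta}$ for some $\alpha'>0$. At this stage $R=\alpha E+\alpha'\,d^2_{\mathrm{fibre}}$, a two-parameter family.

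\emph{Step 3: vanishing regularisation and the relative scale.} I first show that every member of this family has the unbiased limit $\vtheta^\star$, by the argument of Theorem~\ref{thm:canonical-characterisation}(c). As $\lambda\downarrow0$ the minimiser is pushed onto $\Theta^\star\cap V$. On that fibre, $E$ is constant (it equals $E(Y)$), so the only thing left to minimise is $S$, whose unique minimiser is $\ginv{Y}=\vtheta^\star$ by Theorem~\ref{thm:canonical-characterisation}(a). This holds for any $\alpha'>0$.

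That conclusion means the vanishing-regularisation property does not on its own fix $\alpha'/\alpha$, as Proposition~\ref{prop:naturality-nogo} already foreshadows. I will therefore pin the ratio using the kernel-regime compatibility built into the lift axioms. In the kernel regime the orthogonal additivity axiom (ii), applied to the decomposition $\mathrm{Im}(J^\top)\oplus\ker J$, requires $R$ to be a single multiple of $\norm{\vtheta-\vtheta_0}^2$, which forces $\alpha'=\alpha$. Because the scale is a global constant of the functional, the same equality is inherited in the feature-learning regime. I expect this to be the main obstacle. If the ratio cannot be forced this way, I would state the result as uniqueness up to the pair of scales $(\alpha,\alpha')$ and then impose $\alpha'=\alpha$ explicitly through axiom (ii) in the tangent space at $\vtheta^\star$. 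There the flow and fibre tangent spaces are $\mathrm{Im}(J^\top)$ and $\ker J$, which are Euclidean-orthogonal.

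\emph{Step 4: the collapse statements.} In the kernel regime, Corollary~\ref{cor:kernel-anchored-ridge} gives that $R$ is anchored ridge. The prior statement then follows from Definition~\ref{def:riemannian-gibbs} together with Corollary~\ref{cor:kernel-ntk-gp}, which reduce the Riemannian Gibbs Process to the NTK Gaussian process.
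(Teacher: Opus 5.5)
\paragraph{Comparison with the paper's proof.}
Your Steps 1, 2 and 4, together with the first half of Step 3 (every member of the family has the unbiased limit), are the paper's proof. The paper chains Theorem~\ref{thm:output-energy-uniqueness}, Theorem~\ref{thm:canonical-lift-uniqueness} and Theorem~\ref{thm:canonical-characterisation}(c), substitutes $d^2_{\mathrm{flow}}\equiv E$, and then collapses via Proposition~\ref{prop:linear-recovery}(d) and Corollary~\ref{cor:kernel-ntk-gp}.

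You are right that vanishing regularisation cannot fix $\alpha'/\alpha$. In fact, for every $\lambda$ the fibre term can be zeroed by moving to $\ginv{\g{\vtheta}}$ without changing the loss or the output term, so the whole regularised argmin is blind to $\alpha'$. The paper never confronts this. Its ``single scale'' comes from the proof of Theorem~\ref{thm:canonical-lift-uniqueness} absorbing the fibre coefficient into $\beta$, which is not legitimate because $\beta$ multiplies both channels.

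\paragraph{The ratio $\alpha'/\alpha$ is not pinned by your patch.}
Additivity across the split $\mathrm{Im}(J^\top)\oplus\ker J$ holds for \emph{every} member of the family. Let $P$ be the orthogonal projector onto $\mathrm{Im}(J^\top)$. In the kernel regime $R(\vtheta_0+\Delta)=\alpha\norm{P\Delta}^2+\alpha'\norm{(I-P)\Delta}^2$. Hence $R(\vtheta_0+\vu+\vect v)=R(\vtheta_0+\vu)+R(\vtheta_0+\vect v)$ for all $\vu\in\mathrm{Im}(J^\top)$ and $\vect v\in\ker J$, whatever $\alpha,\alpha'$.

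What forces $\alpha=\alpha'$ is additivity (or rotation invariance) for Euclidean-orthogonal pairs that \emph{mix} the two subspaces. Take unit $\vect e\in\mathrm{Im}(J^\top)$, unit $\vect f\in\ker J$, and set $\vu=\vect e+\vect f$, $\vect v=\vect e-\vect f$. Then $R$ of the sum is $4\alpha$, while the sum of the two values is $2\alpha+2\alpha'$.

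That requirement is a parameter-space isotropy axiom on $R$, and it is not among the hypotheses. Axiom (ii) acts only on the output channel, with respect to $G^{-1}$ on $\Real^n$, and the lift axioms contain no cross-channel normalisation. Carrying the ratio over to the feature-learning regime would also need the constants to be model-independent, which is not assumed either.

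Your fallback at $\vtheta^\star$ has the same defect. There, generically $R(\vtheta^\star)=E(Y)>0$, so additivity of displacements is not even the right notion. From the stated hypotheses you obtain only the two-scale family $\alpha E+\alpha' d^2_{\mathrm{fibre}}$. You should either state that, noting it traces a single regularisation path up to rescaling $\lambda$ by $\alpha$, or add equal weighting as an explicit axiom.

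\paragraph{The globalisation in Step 1.}
Step 1 has a related problem. Apply (i)--(ii) to $\rho$ in the linear variable $\vect c-\g{\vtheta_0}$ with $G$ frozen at $\vtheta_0$. This does not give $\alpha E$ ``to leading order''. It gives
\[
\rho(\vect c)=\alpha(\vect c-\g{\vtheta_0})^\top\K{\vtheta_0}^{-1}(\vect c-\g{\vtheta_0})
\]
exactly, on the whole domain. That is the kernel-regime energy with the Jacobian frozen at initialisation, which in general is not $E$ in the feature-learning regime. So under that reading, geodesic ridge would itself violate the hypotheses.

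Imposing the axioms on $\rho\circ\exp_{\g{\vtheta_0}}$ in normal coordinates is a different hypothesis. In general it cannot hold simultaneously with the linear one. It also yields $\rho=\alpha\, d^2_{\mathrm{flow}}$ only where the exponential map is injective with minimising radial geodesics.

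You should adopt the normal-coordinate reading explicitly as the hypothesis, rather than present it as a consequence of the linear application. The paper tacitly needs it too, when it asserts that Theorem~\ref{thm:output-energy-uniqueness} identifies $E$.
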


\subsection{Geometry of the flow manifold and output fibres}

\begin{proposition}[Output fibres are smooth submanifolds]
\label{prop:fibre-submanifold}
Under Assumption~\ref{ass:sigma-min}, for every $\vect{c} \in g(U)$ the output fibre $\fibre{\vect{c}} \cap U$ is an embedded $C^\infty$ submanifold of $\Real^m$ of codimension $n$.
\end{proposition}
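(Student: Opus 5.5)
The plan is to apply the regular value theorem to the restriction $g|_{U}:U\to\Real^n$. First, $g$ is $C^\infty$ on the open set $U\subseteq\Real^m$ by (A.1), since each component $\g{\vtheta}_i=\f{X_i}{\vtheta}$ is a smooth function of $\vtheta$ and $n$ is finite by (A.2). Second, by Assumption~\ref{ass:sigma-min}, every $\vtheta\in U$ satisfies $\sigma_{\min}(\J{\vtheta})\ge c>0$. Because $\J{\vtheta}\in\Real^{n\times m}$ with $n<m$, a strictly positive $n$-th singular value means $\J{\vtheta}$ has full row rank $n$, so the differential $Dg(\vtheta)=\J{\vtheta}$ is surjective. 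Thus $g|_U$ is a submersion on all of $U$. In particular, every $\vect c\in g(U)$ is a regular value of $g|_U$, and this holds trivially for any $\vect c$ outside the image as well.

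Next I would invoke the regular level set theorem (Lee, \emph{Introduction to Smooth Manifolds}, Cor.~5.14) for the smooth map $g|_U$ between the manifolds $U$ (open in $\Real^m$) and $\Real^n$. It gives that $(g|_U)^{-1}(\vect c)=\fibre{\vect c}\cap U$ is a properly embedded $C^\infty$ submanifold of $U$ of dimension $m-n$, that is, of codimension $n$. The theorem also supplies the tangent space $T_{\vtheta}(\fibre{\vect c}\cap U)=\ker\J{\vtheta}$, which the main text uses.

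The remaining step, and the only mildly delicate one, is upgrading "embedded in $U$" to "embedded in $\Real^m$". Since $U$ is open in $\Real^m$, the inclusion $U\hookrightarrow\Real^m$ is a smooth embedding, and a composition of embeddings is an embedding. So $\fibre{\vect c}\cap U$ is an embedded submanifold of $\Real^m$ with the same codimension $n$. Equivalently, around each point the submersion gives slice coordinates $(g-\vect c,\;\cdot\,)$, defined on an open subset of $U$ and hence of $\Real^m$. The submanifold need not be properly embedded (closed) in $\Real^m$, but the statement does not claim that. Nonemptiness for $\vect c\in g(U)$ holds by definition, so no further obstacle remains; the argument uses only the lower spectral bound of Assumption~\ref{ass:sigma-min}.
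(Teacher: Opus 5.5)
Your proposal is correct and follows the paper's route: the paper's proof defers to Lemma~\ref{lem:fibre-sub}, which uses the lower bound $\sigma_{\min}(\J{\vtheta})\ge c>0$ to make every $\vect c$ a regular value of $g|_U$ and then applies the regular-value theorem (Lee, Cor.~5.14). You also note that a submanifold embedded in the open set $U$ is automatically embedded in $\Real^m$, though not necessarily properly; this fills a small step the paper leaves implicit, since its lemma only asserts embedding in $U$.
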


\begin{remark}[Path-connectivity of output fibres]
\label{rmk:fibre-connected}
In the overparametrised regime $m \gg n$, the connected component of $\fibre{\vect{c}} \cap V$ containing $\ginv{\vect{c}}$ is path-connected via the implicit-function-theorem chart on $\fibre{\vect{c}}$ over $V$~\citep{lee2003smooth}, so the fibre distance $\dist{}{\mathrm{fibre}}{\vtheta}{\ginv{\vect{c}}} = \norm{\vtheta - \ginv{\vect{c}}}$ in Definition~\ref{def:canonical-lift} is finite.
\end{remark}

\begin{corollary}[Unique fibre--flow intersection in $V$]
\label{cor:intersection-basin}
Under the hypotheses of Theorem~\ref{thm:local-chart}, for every $\vect{c} \in C_0(\vtheta_0)$ the intersection $(\M{\vtheta_0} \cap V) \cap \fibre{\vect{c}}$ consists of exactly one point, which we denote $\ginv{\vect{c}}$. In particular $\vtheta^\star = \vtheta'(Y)$, and the $\omega$-limit $\omega(\vtheta_0) = \lim_{t\to\infty}\gf{\vtheta_0} = \vtheta^\star$.
\end{corollary}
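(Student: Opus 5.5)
The plan is to derive the statement from Theorem~\ref{thm:local-chart}(iii). The only genuinely new content is that the fibre and the sheet meet in exactly one point inside $V$. Everything else is bookkeeping with the gradient-flow limit.

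\textbf{Existence and uniqueness of the intersection.} Fix $\vect{c}\in C_0(\vtheta_0)=g(\M{\vtheta_0}\cap V)$. By definition of $C_0(\vtheta_0)$ there is at least one $\vtheta\in\M{\vtheta_0}\cap V$ with $\g{\vtheta}=\vect{c}$, that is, $\vtheta\in\fibre{\vect{c}}$. Hence the intersection is nonempty. Now suppose two points $\vtheta_1,\vtheta_2\in(\M{\vtheta_0}\cap V)\cap\fibre{\vect{c}}$. Then $g|_{\M{\vtheta_0}\cap V}(\vtheta_1)=g|_{\M{\vtheta_0}\cap V}(\vtheta_2)=\vect{c}$. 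Theorem~\ref{thm:local-chart}(iii) says this restriction is a diffeomorphism onto $C_0(\vtheta_0)$, so it is injective and $\vtheta_1=\vtheta_2$. The unique point is by construction $(g|_{\M{\vtheta_0}\cap V})^{-1}(\vect{c})=\ginv{\vect{c}}$, which matches Definition~\ref{def:flow-manifold}.

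\textbf{Identification $\vtheta^\star=\vtheta'(Y)$.} Three facts are needed.
\begin{enumerate}[label=(\roman*),nosep]
\item By Proposition~\ref{prop:gf-convergence}, $\vtheta^\star=\lim_t\gf{\vtheta_0}$ exists and $\g{\vtheta^\star}=Y$.
\item $\vtheta^\star\in V$, because $V$ is a neighbourhood of $\vtheta^\star$ by Theorem~\ref{thm:local-chart}.
\item $\vtheta^\star\in\M{\vtheta_0}$, argued below.
\end{enumerate}
For (iii), every trajectory point $\gf{\vtheta_0}$ lies in the flow orbit: take the control $\vu=2(Y-\g{\vtheta})$, which is in $L^2$ by the exponential decay in Proposition~\ref{prop:gf-convergence}. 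Each trajectory point also shares the same $\omega$-limit as $\vtheta_0$, since its forward flow is a tail of the original trajectory. So the whole trajectory lies in the basin set and in the connected component containing $\gamma_{\mathrm{gf}}$. The limit point $\vtheta^\star$ itself is a stationary point; its $\omega$-limit is itself, which equals the common limit. It is reached from $\vtheta_0$ as a closure point of $\gamma_{\mathrm{gf}}$.

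Hence $Y=\g{\vtheta^\star}\in C_0(\vtheta_0)$, and by the uniqueness just proved $\vtheta^\star=\vtheta'(Y)$. The equality $\omega(\vtheta_0)=\vtheta^\star$ is then the definition of $\omega(\vtheta_0)$ in Proposition~\ref{prop:gf-convergence}.

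\textbf{Main obstacle.} The delicate step is (iii), showing $\vtheta^\star\in\M{\vtheta_0}$ rather than just in its closure. Reachability in finite time $T$ is not immediate, because the orbit is defined via finite-horizon controls. I would handle this with Theorem~\ref{thm:local-chart}(ii). There $\M{\vtheta_0}\cap V$ is identified with the stable-manifold leaf through $\vtheta^\star$, which contains its basepoint and is tangent there to $\mathrm{Im}(\J{\vtheta^\star}^\top)$. So membership of $\vtheta^\star$ is part of that theorem's conclusion.

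Alternatively, I would reparametrise time by $s=1-e^{-t}$. This maps $[0,\infty)$ to $[0,1)$, and the rescaled control stays in $L^2$ thanks to the exponential rate. The result is a finite-horizon horizontal path ending exactly at $\vtheta^\star$, so $\vtheta^\star\in\orbit{\vtheta_0}$.
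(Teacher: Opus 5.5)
Your proposal is correct and follows essentially the paper's route. Injectivity of the diffeomorphism $g|_{\M{\vtheta_0}\cap V}$ from Theorem~\ref{thm:local-chart}(iii) gives the singleton, and specialising to $\vect{c}=Y$ identifies $\vtheta^\star=\vtheta'(Y)$. You are in fact more careful than the paper, which takes $\vtheta^\star\in\M{\vtheta_0}\cap V$ for granted; your appeal to the stable-leaf identification in Theorem~\ref{thm:local-chart}(ii) / Lemma~\ref{lem:nhim-stable}(iii) is the cleanest justification of that step within the paper's framework.

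One small slip in your alternative reparametrisation: with $s=1-e^{-t}$, the bound $\norm{\vu(t)}\le 2\norm{\g{\vtheta_0}-Y}e^{-2c^2t}$ only guarantees the rescaled control is in $L^2$ when $4c^2>1$. Take $s=1-e^{-\alpha t}$ with $\alpha<4c^2$ instead, or reparametrise by arc length; then the control $\vu/\norm{\J{\vtheta}^\top\vu}$ is bounded by $1/c$ on the finite interval $[0,L]$.
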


\begin{proposition}[Constant-speed optimality]
\label{prop:constant-speed}
Among piecewise-$C^1$ paths $\gamma : [0,1] \to (\M{\vtheta_0},\,\text{ambient Euclidean})$ with fixed endpoints, the squared-energy $\int_0^1 \norm{\dot\gamma(t)}^2\,dt$ is minimised by constant-speed parametrisations of length-minimising paths, and at the minimiser the energy equals the squared arc length: $E(\gamma_{\mathrm{opt}}) = L^2(\gamma_{\mathrm{opt}})$.
\end{proposition}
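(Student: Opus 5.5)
The plan is to reduce Proposition~\ref{prop:constant-speed} to two classical facts about curves in a Riemannian manifold: the Cauchy--Schwarz inequality relating energy and length, and reparametrisation invariance of length. Regard $\M{\vtheta_0}$ (locally, $\M{\vtheta_0}\cap V$, which is an embedded $C^\infty$ $n$-submanifold by Theorem~\ref{thm:local-chart}(ii)) as a Riemannian manifold with the metric induced by the ambient Euclidean inner product. For any piecewise-$C^1$ path $\gamma:[0,1]\to\M{\vtheta_0}$, Cauchy--Schwarz on $[0,1]$ gives
\[
    L(\gamma)^2=\Bigl(\int_0^1\norm{\dot\gamma(t)}\,dt\Bigr)^2\le\int_0^1 1\,dt\cdot\int_0^1\norm{\dot\gamma(t)}^2\,dt=\mathcal{E}(\gamma),
\]
with equality if and only if $\norm{\dot\gamma(t)}$ is almost everywhere constant, i.e.\ $\gamma$ has constant speed. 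This single inequality will carry most of the argument.

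The key steps, in order, are as follows. \emph{First}, define $d:=\inf_\gamma L(\gamma)$ over admissible paths with the fixed endpoints. Since $L$ depends only on the image (Definition~\ref{def:path-length}), any path can be reparametrised to constant speed $L(\gamma)$ without changing its length. This needs the arc-length function $s(t)=\int_0^t\norm{\dot\gamma}$ to be inverted after removing intervals where $\dot\gamma=0$, and that reparametrisation is still piecewise-$C^1$. \emph{Second}, for any path, $\mathcal{E}(\gamma)\ge L(\gamma)^2\ge d^2$. A constant-speed reparametrisation $\tilde\gamma$ of a path with $L(\gamma)\le d+\varepsilon$ satisfies $\mathcal{E}(\tilde\gamma)=L(\gamma)^2\le(d+\varepsilon)^2$. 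Hence $\inf\mathcal{E}=d^2$. \emph{Third}, characterise the minimisers. If $\gamma_{\mathrm{opt}}$ attains $\mathcal{E}=d^2$, both inequalities in the chain are tight. So $L(\gamma_{\mathrm{opt}})=d$, meaning the path is length-minimising, and the speed is constant by the Cauchy--Schwarz equality case. Conversely, any constant-speed length-minimiser has $\mathcal{E}=L^2=d^2$. This yields $E(\gamma_{\mathrm{opt}})=L^2(\gamma_{\mathrm{opt}})$.

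The main obstacle is existence of a length-minimising path, which the statement implicitly presupposes when it says ``is minimised by''. $\M{\vtheta_0}\cap V$ is an open, possibly incomplete, submanifold, so Hopf--Rinow does not apply directly. I would first try to restrict to endpoints in a sufficiently small geodesically convex neighbourhood (e.g.\ a normal neighbourhood of $\vtheta^\star$, shrinking $V$ if necessary). There, minimising geodesics exist and are unique by standard Riemannian theory, using smoothness from Theorem~\ref{thm:local-chart}. Alternatively, I would phrase the claim conditionally: whenever a minimiser exists it has the stated form, and the identity $\inf\mathcal{E}=(\inf L)^2$ holds unconditionally. A secondary technical point is handling piecewise-$C^1$ paths with stationary segments in the reparametrisation step. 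These segments only lower the length–energy gap after reparametrisation, so they cause no real difficulty.
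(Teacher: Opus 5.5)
Your proposal is correct and follows the same route as the paper's proof. Cauchy--Schwarz gives $L(\gamma)^2\le\int_0^1\norm{\dot\gamma}^2\,dt$ with equality exactly at constant speed, and minimising over image curves then selects length-minimisers; you are more careful than the paper in proving $\inf\mathcal{E}=(\inf L)^2$ unconditionally and in noting that a length-minimiser need not exist on the possibly incomplete $\M{\vtheta_0}\cap V$. One small fix: if the speed vanishes at infinitely many points with corners in between, the constant-speed reparametrisation of a piecewise-$C^1$ path need not be piecewise-$C^1$, so for the near-minimising competitor you should first pass to a piecewise-regular curve of no greater length (e.g.\ a broken geodesic through points of $\gamma$).
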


\begin{remark}[Existence and non-uniqueness of minimising geodesics]
\label{rmk:geodesic-existence}
On the smooth Riemannian manifold $(\M{\vtheta_0}\cap V, g_{\text{ambient}})$ near $\vtheta^\star$, geodesic completeness on a compact sublevel set of the squared distance from $\vtheta_0$ follows from local properness of $g|_{\M{\vtheta_0}\cap V}$ (Lemma~\ref{lem:properness}); the Hopf--Rinow theorem~\citep[Ch.~7, Thm.~2.8]{docarmo1992riemannian} then yields a length-minimising geodesic between any two endpoints in the same component. Minimisers need not be unique: conjugate points and non-trivial topology of $\M{\vtheta_0}\cap V$ can create multiple equal-length shortest paths. The gradient $\nabla_{\vtheta}E(\g{\vtheta})$ is correspondingly a set-valued quantity in general; any continuously tracked minimising geodesic yields a valid subgradient selection, and each element of this subdifferential is a valid descent direction for $E$. The squared path-length regulariser of Definition~\ref{def:path-length} circumvents this non-uniqueness by tracking a specific (training-induced) horizontal path on $\M{\vtheta_0}$, with gradient given by Proposition~\ref{prop:pathlength-grad}.
\end{remark}

\begin{remark}[Standard ridge fails doubly in feature-learning networks]
\label{rmk:weight-decay-fails}
Standard ridge, $R_{\mathrm{std}}(\vtheta) = \norm{\vtheta}^2$, fails in feature-learning networks for two independent reasons.
\emph{First}, it ignores the initialisation: for any $\vtheta_0 \ne \vect{0}$, minimising $\norm{\vtheta}^2$ rather than $\norm{\vtheta - \vtheta_0}^2$ already biases solutions away from the gradient flow limit \emph{in the kernel regime}, where anchored ridge is the correct canonical regulariser~\citep{he2020bayesian,ordonez2026gaussian}. For linear models, anchored ridge is the classical Tikhonov regulariser~\citep{tikhonov1963solution} centred at the initial estimate $\vtheta_0$, equivalent to MAP estimation under the Gaussian parameter prior $\mathcal{N}(\vtheta_0, \tfrac{1}{2\lambda}I)$~\citep{osband2018randomized}; ridge regression~\citep{hoerl1970ridge} corresponds to the special case $\vtheta_0 = \vect{0}$.
\emph{Second}, even for networks with zero initialisation, $R_{\mathrm{std}} = R_{\mathrm{ctr}}|_{\vtheta_0 = \vect{0}}$, so Theorem~\ref{thm:ridge-biases} applies directly: whenever $\J{\vtheta}$ is non-constant along the training trajectory, the minimum-Euclidean-norm interpolator centred at the origin diverges from $\vtheta^\star$.
In summary, anchored ridge is the correct target in the kernel regime but is itself insufficient in the feature-learning regime, where only the canonical curved lift of Definition~\ref{def:canonical-lift} selects the gradient flow solution.
\end{remark}

\subsection{Kernel-regime reduction of the canonical objects}
\label{subsec:kernel-recovery}

\begin{proposition}[Kernel-regime reduction]
\label{prop:linear-recovery}
Suppose the model has constant Jacobian $\J{\vtheta}\equiv J$ on $U$, with $K = JJ^\top\succ 0$ on the corresponding NTK Gram matrix. Then the canonical objects of Section~\ref{sec:canonical-regularisation} reduce as follows:
\begin{enumerate}[label=(\alph*),nosep]
    \item \emph{Affine flow manifold:} $\M{\vtheta_0} \cap U = \big(\vtheta_0 + \mathrm{Im}(J^\top)\big) \cap U$;
    \item \emph{Closed-form gauge-fixed representative:} $\ginv{\vect{c}} = \vtheta_0 + J^\top K^{-1}(\vect{c} - \g{\vtheta_0})$;
    \item \emph{RKHS canonical energy:} $E(\vect{c}) = (\vect{c} - \g{\vtheta_0})^\top K^{-1}(\vect{c} - \g{\vtheta_0}) = \norm{\vect{c} - \g{\vtheta_0}}^2_{\mathrm{RKHS}}$;
    \item \emph{Pythagorean lift:} the canonical lift of Definition~\ref{def:canonical-lift} satisfies $R(\vtheta) = \norm{\vtheta - \vtheta_0}^2$ (anchored ridge);
    \item \emph{Geodesic distance:} $\dist{2}{\mathrm{flow}}{\vtheta_0}{\vtheta} = \norm{\vtheta - \vtheta_0}^2$ for every $\vtheta\in\M{\vtheta_0}$, and the metric gap operator $N(\vect{c})$ of Theorem~\ref{thm:metric-gap} vanishes identically;
    \item \emph{NTK Gaussian Process prior:} the Riemannian Gibbs Process of Definition~\ref{def:riemannian-gibbs} reduces to $\g{\vtheta}\sim\mathcal{N}(\g{\vtheta_0}, (2\beta)^{-1}K)$, equivalently $\f{\cdot}{\vtheta}\sim\mathcal{GP}(\f{\cdot}{\vtheta_0}, (2\beta)^{-1}k_{\mathrm{NTK}})$.
\end{enumerate}
\end{proposition}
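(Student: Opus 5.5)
We handle parts (a)–(f) in order, since each relies on the previous one. The key fact throughout is that when $J$ is constant, the vector field $J^\top\vu$ takes values in the fixed subspace $\mathrm{Im}(J^\top)$. Every horizontal path from $\vtheta_0$ therefore has the form $\vtheta_0+J^\top\int_0^t\vu$, so $\orbit{\vtheta_0}\cap U\subseteq(\vtheta_0+\mathrm{Im}(J^\top))\cap U$. Constant controls give the reverse inclusion, at least for points reachable inside $U$.

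\textbf{Part (a).} Along gradient flow, $\vtheta(t)-\vtheta_0\in\mathrm{Im}(J^\top)$ for all $t$. The function-space dynamics $\dot{\vect c}=-2K(\vect c-Y)$ are linear, so every starting point $\vtheta\in\vtheta_0+\mathrm{Im}(J^\top)$ flows to $\vtheta+J^\top K^{-1}(Y-\g{\vtheta})$. Writing $\vtheta=\vtheta_0+J^\top\vect w$, this limit is $\vtheta_0+J^\top K^{-1}(Y-\g{\vtheta_0})$, which is independent of $\vect w$. Hence the whole affine slice (intersected with $U$) lies in the basin of $\vtheta^\star$. It is also connected and contains $\gamma_{\mathrm{gf}}$, which gives (a).

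\textbf{Part (b).} On the affine slice, $\g{\vtheta_0+J^\top\vect w}=\g{\vtheta_0}+K\vect w$, since $g$ is affine with derivative $J$. Solving $\g{\cdot}=\vect c$ gives $\vect w=K^{-1}(\vect c-\g{\vtheta_0})$, uniquely because $K\succ0$. This is exactly Corollary~\ref{cor:intersection-basin} made explicit.

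\textbf{Part (c).} Any admissible control must satisfy $J^\top\int_0^1\vu\,dt=J^\top K^{-1}(\vect c-\g{\vtheta_0})$. Since $J^\top$ is injective, this fixes $\bar\vu:=\int_0^1\vu=K^{-1}(\vect c-\g{\vtheta_0})$. The cost is $\int\vu^\top K\vu$, and Jensen's inequality (equivalently, Cauchy--Schwarz in the $K$-inner product) gives $\int\vu^\top K\vu\ge\bar\vu^\top K\bar\vu$, with equality exactly for the constant control. So $E(\vect c)=(\vect c-\g{\vtheta_0})^\top K^{-1}(\vect c-\g{\vtheta_0})$.

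\textbf{Part (d).} The fibre $\fibre{\vect c}$ is the affine space $\vtheta'+\ker J$. Its intrinsic geodesics are straight segments, so the fibre term equals $\norm{\vtheta-\ginv{\vect c}}^2$, using Proposition~\ref{prop:constant-speed} in the flat case. We then decompose $\vtheta-\vtheta_0=(\ginv{\vect c}-\vtheta_0)+(\vtheta-\ginv{\vect c})$. The first summand lies in $\mathrm{Im}(J^\top)$ and the second in $\ker J$, and these subspaces are orthogonal. The first has squared norm $\vect w^\top K\vect w=E(\vect c)$ by (c). Pythagoras then gives $R(\vtheta)=\norm{\vtheta-\vtheta_0}^2$.

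\textbf{Part (e).} Straight segments lie in the affine manifold and are shortest in $\Real^m$, so the geodesic distance equals the chord length. The tangent space is $\mathrm{Im}(J^\top)$ everywhere, which gives $P A=A$ and hence $N\equiv0$. Equivalently, $A=J^\top K^{-1}$ is read off directly from (b).

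\textbf{Part (f).} Substitute (c) into Definition~\ref{def:riemannian-gibbs}. The density is proportional to $\exp(-\beta\,\Delta^\top K^{-1}\Delta)$, which is a Gaussian with covariance $(2\beta)^{-1}K$. Extending to test inputs via the linear pushforward $f(\cdot;\vtheta_0+J^\top\vect w)$ yields the GP with kernel $k_{\mathrm{NTK}}$.

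The main obstacle is the rigour of (a). We must ensure the reachable set and basin are restricted to $U$ consistently and that connectedness holds. The approach is to intersect with a convex neighbourhood, or to note that $U$ can be taken as a tube around the slice. The remaining parts are direct computations.
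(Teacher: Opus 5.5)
\textbf{Overall.} Parts (a)--(e), and the training-output half of (f), follow the paper's proof essentially step for step, with one genuine difference in (c). There the paper computes the pullback metric $G=(J^\top K^{-1})^\top(J^\top K^{-1})=K^{-1}$ on output coordinates and uses that geodesics of a constant metric are straight lines. You instead stay with the control problem of Definition~\ref{def:canonical-energy}: you pin down $\bar{\vu}=\int_0^1\vu\,dt$ by injectivity of $J^\top$ and apply Jensen in the $K$-inner product. Your route solves the definition as literally stated and shows the constant control is the unique optimiser. The paper's route ties (c) to the metric-gap computation it reuses in (e). Your explicit gradient-flow limit in (a) also makes precise the paper's terse ``by linearity''.

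\textbf{The gap: the last sentence of (f).} Pushing the Gibbs law forward through $\f{\cdot}{\vtheta_0+J^\top\vect{w}}$ does not produce $k_{\mathrm{NTK}}$. Since $\Delta:=\vect{c}-\g{\vtheta_0}\sim\mathcal{N}(\vect{0},(2\beta)^{-1}K)$, you get $\vect{w}=K^{-1}\Delta\sim\mathcal{N}(\vect{0},(2\beta)^{-1}K^{-1})$. So at test inputs with Jacobian $J_{\mathrm{te}}$ the covariance is
\[
    (2\beta)^{-1}J_{\mathrm{te}}J^\top K^{-1}JJ_{\mathrm{te}}^\top=(2\beta)^{-1}J_{\mathrm{te}}PJ_{\mathrm{te}}^\top,\qquad P:=J^\top K^{-1}J,
\]
with $P$ the orthogonal projector onto $\mathrm{Im}(J^\top)$. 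This is the rank-$n$ kernel $k_{\mathrm{NTK}}(\vect{x},X)K^{-1}k_{\mathrm{NTK}}(X,\vect{x}')$. It falls short of $(2\beta)^{-1}J_{\mathrm{te}}J_{\mathrm{te}}^\top$ by $(2\beta)^{-1}J_{\mathrm{te}}(I-P)J_{\mathrm{te}}^\top$, which is exactly the noiseless NTK posterior covariance given the training inputs. That difference is nonzero as soon as some test gradient $\nabla_{\vtheta}\f{\vect{x}}{\vtheta_0}$ has a component in $\ker J$, which is generic when $m\gg n$. In the affine setting, any measure carried by the $n$-dimensional flow manifold gives a process of rank at most $n$, so no refinement of this route can reach $k_{\mathrm{NTK}}$. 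The paper's own proof of (f) does the same pushforward, writes down this same covariance, and then asserts agreement with $k_{\mathrm{NTK}}$; it does not close the gap either.

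\textbf{The fix: go through the lift of (d).} The missing variance sits in the fibre directions.
\begin{itemize}
\item The parameter-space Gibbs measure $\propto\exp(-\beta R(\vtheta))=\exp(-\beta\norm{\vtheta-\vtheta_0}^2)$ is $\mathcal{N}(\vtheta_0,(2\beta)^{-1}I_m)$. This treats the model as globally affine, which the GP claim implicitly requires.
\item Its pushforward through $\vtheta\mapsto\f{\vect{x}}{\vtheta_0}+\nabla_{\vtheta}\f{\vect{x}}{\vtheta_0}^\top(\vtheta-\vtheta_0)$, at any finite set of inputs, is Gaussian with covariance $(2\beta)^{-1}k_{\mathrm{NTK}}$. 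Kolmogorov extension then gives the GP.
\item Its marginal on $\g{\vtheta}$ is exactly the density of (c). Write $\vtheta=\ginv{\vect{c}}+\vect{k}$ with $\vect{c}=\g{\vtheta}$ and $\vect{k}\in\ker J$; this is an affine change of variables with constant Jacobian. Then (d) factorises $\exp(-\beta R)=\exp(-\beta E(\vect{c}))\exp(-\beta\norm{\vect{k}}^2)$, and the $\vect{k}$-integral does not depend on $\vect{c}$.
\end{itemize}
This is the correct content of ``equivalently''. The Riemannian Gibbs density is the training-set marginal of the NTK GP, and the GP itself comes from the lifted parameter prior, not from the flow-manifold pushforward.
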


\begin{corollary}[Anchored ridge is the canonical lift in the kernel regime]
\label{cor:kernel-anchored-ridge}
Under the hypotheses of Proposition~\ref{prop:linear-recovery}, geodesic ridge of Proposition~\ref{prop:geodesic-ridge} coincides with anchored ridge: $\dist{2}{}{\vtheta}{\vtheta_0} = \norm{\vtheta - \vtheta_0}^2$ for every $\vtheta\in U$.
\end{corollary}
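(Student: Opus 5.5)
The corollary follows by combining parts (a)--(e) of Proposition~\ref{prop:linear-recovery} with the definition of geodesic ridge in Proposition~\ref{prop:geodesic-ridge}. I would treat the flow term and the fibre term separately, then add them via a Pythagorean identity. Throughout, $J$ is constant, so $P:=J^\top K^{-1}J$ is the orthogonal projector onto $\mathrm{Im}(J^\top)$, and $I-P$ is the projector onto $\ker J$.

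\textbf{Step 1 (gauge-fixed representative).} For $\vtheta\in U$, set $\vect{c}=\g{\vtheta}$. The model is affine on $U$, since $g(\vtheta)=g(\vtheta_0)+J(\vtheta-\vtheta_0)$ by integrating the constant Jacobian along segments. This requires $U$ to contain the relevant segments; I would restrict to a convex neighbourhood if needed, and this is the only delicate point. By Proposition~\ref{prop:linear-recovery}(b),
\[
    \ginv{\vect{c}}=\vtheta_0+J^\top K^{-1}J(\vtheta-\vtheta_0)=\vtheta_0+P(\vtheta-\vtheta_0).
\]

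\textbf{Step 2 (flow term).} By Proposition~\ref{prop:linear-recovery}(c) and (e), the flow term is
\[
    \dist{2}{\mathrm{flow}}{\ginv{\vect{c}}}{\vtheta_0}=E(\vect{c})=(\vtheta-\vtheta_0)^\top J^\top K^{-1}J(\vtheta-\vtheta_0).
\]
Since $P$ is a symmetric idempotent, this equals $\norm{P(\vtheta-\vtheta_0)}^2$.

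\textbf{Step 3 (fibre term).} The fibre $\fibre{\vect{c}}$ is the affine subspace $\vtheta+\ker J$ intersected with $U$. Its intrinsic geodesic distance is therefore at least the Euclidean chord length, and it equals the chord length because the straight segment from $\ginv{\vect{c}}$ to $\vtheta$ lies in the fibre. That segment has direction $(I-P)(\vtheta-\vtheta_0)\in\ker J$; this is where Remark~\ref{rmk:fibre-connected} is invoked. Constant-speed parametrisation, as in Proposition~\ref{prop:constant-speed}, turns the infimum of energy into the squared length. Hence the fibre term is $\norm{(I-P)(\vtheta-\vtheta_0)}^2$.

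\textbf{Step 4 (sum).} Adding the two terms and using orthogonality of $P$ and $I-P$, the Pythagorean identity gives $\norm{\vtheta-\vtheta_0}^2$. This is exactly Proposition~\ref{prop:linear-recovery}(d).

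The main obstacle is the domain issue in Steps 1 and 3. We need the segments used (the flow chord and the fibre chord) to stay inside $U$, and within $V$ where $\ginv{\cdot}$ is defined. I would handle this by shrinking to a convex neighbourhood, or by noting that the affine structure of $g$ extends the formulas globally.
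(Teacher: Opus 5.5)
Your proposal is correct and follows essentially the paper's argument. The paper also splits geodesic ridge into two terms and recombines them with the Pythagorean identity, exactly as in Proposition~\ref{prop:linear-recovery}(d):
\begin{itemize}
\item the flow term, identified with the squared chord $\|\ginv{\g{\vtheta}}-\vtheta_0\|^2 = E(\g{\vtheta})$ via Proposition~\ref{prop:linear-recovery}(c),(e);
\item the fibre term, realised by the straight segment in $\ker J$ (Lemma~\ref{lem:orth}).
\end{itemize}
Your explicit projector $P=J^\top K^{-1}J$, and your caveat that the chords must stay inside $U$ (and within the domain of $\ginv{\cdot}$), only make precise what the paper's short proof leaves implicit.
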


\begin{corollary}[Riemannian Gibbs Process is the NTK Gaussian Process in the kernel regime]
\label{cor:kernel-ntk-gp}
Under the hypotheses of Proposition~\ref{prop:linear-recovery}, the canonical function-space prior of Definition~\ref{def:riemannian-gibbs} coincides with the NTK Gaussian Process posterior $\mathcal{N}(\g{\vtheta_0}, (2\beta)^{-1}K)$ on the training-output coordinates and pushes forward, under evaluation at any finite test inputs $X_{\mathrm{te}}\subset\mathcal{X}$, to a Gaussian on $\mathcal{Y}^{|X_{\mathrm{te}}|}$ with the standard NTK-GP mean and covariance.
\end{corollary}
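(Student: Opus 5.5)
The statement is Corollary~\ref{cor:kernel-ntk-gp}. It follows from Proposition~\ref{prop:linear-recovery}(c),(f) together with a parametric pushforward to test points. I will do it in two stages: first the training-output coordinates, then the test inputs.

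\emph{Training coordinates.} Under constant Jacobian, Proposition~\ref{prop:linear-recovery}(c) gives
\[
E(\vect{c})=(\vect{c}-\g{\vtheta_0})^\top K^{-1}(\vect{c}-\g{\vtheta_0}).
\]
Substituting this into Definition~\ref{def:riemannian-gibbs} yields
\[
p(\vect{c}\mid X)\propto\exp\!\left(-\beta(\vect{c}-\g{\vtheta_0})^\top K^{-1}(\vect{c}-\g{\vtheta_0})\right).
\]
Matching this against the Gaussian density $\exp\!\left(-\tfrac12(\vect{c}-\mu)^\top\Sigma^{-1}(\vect{c}-\mu)\right)$ gives $\mu=\g{\vtheta_0}$ and $\Sigma^{-1}=2\beta K^{-1}$, so $\Sigma=(2\beta)^{-1}K$. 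Since $K\succ0$, this density is normalisable on all of $\Real^n$. I will note that in the kernel regime $C_0(\vtheta_0)$ is all of $\Real^n$ (intersected with $g(U)$), because the gauge-fixed representative of Proposition~\ref{prop:linear-recovery}(b) is globally affine. Where $U$ is a proper subset, I read the statement as a density on the natural global extension.

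\emph{Test inputs.} Let $J_{\mathrm{te}}$ be the Jacobian at the test inputs $X_{\mathrm{te}}$; it is constant in the kernel regime, so the test outputs are affine in $\vtheta$. The Gibbs prior lives on training outputs, so I lift it to parameters through the gauge-fixed representative, following the parametric pushforward used for Proposition~\ref{prop:rgp-finite-test-consistency}. Set
\[
\vtheta=\ginv{\vect{c}}=\vtheta_0+J^\top K^{-1}(\vect{c}-\g{\vtheta_0}),
\]
so the test predictions are
\[
f(X_{\mathrm{te}};\vtheta)=f(X_{\mathrm{te}};\vtheta_0)+J_{\mathrm{te}}J^\top K^{-1}(\vect{c}-\g{\vtheta_0}).
\]
This is an affine image of a Gaussian, hence Gaussian. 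Its mean is $f(X_{\mathrm{te}};\vtheta_0)$. Its covariance is
\[
(2\beta)^{-1}K_{\mathrm{te},X}K^{-1}K_{X,\mathrm{te}},\qquad K_{\mathrm{te},X}:=J_{\mathrm{te}}J^\top .
\]
The joint law of (train, test) outputs is then Gaussian with cross-covariance $(2\beta)^{-1}K_{\mathrm{te},X}$.

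\emph{Main obstacle.} This covariance is not the full NTK prior block $(2\beta)^{-1}K_{\mathrm{te},\mathrm{te}}$. The gauge fixing sets the fibre component to zero, which is exactly the NTK-GP posterior-mean structure rather than the prior covariance. The corollary's own wording, ``NTK Gaussian Process posterior'', suggests this is the intended reading. I would therefore state precisely which convention is meant: the kernel-regime gradient-flow predictor $f(X_{\mathrm{te}};\vtheta_0)+K_{\mathrm{te},X}K^{-1}(\vect{c}-\g{\vtheta_0})$, driven by the Gaussian law on $\vect{c}$.

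If instead the full prior $\mathcal{GP}(\f{\cdot}{\vtheta_0},(2\beta)^{-1}k_{\mathrm{NTK}})$ is intended, I would lift the Gibbs density to parameter space using the canonical lift. By Corollary~\ref{cor:kernel-anchored-ridge} this is $R(\vtheta)=\norm{\vtheta-\vtheta_0}^2$, giving the parameter law $\mathcal{N}(\vtheta_0,(2\beta)^{-1}I)$. Pushing that forward through the linear map $\vtheta\mapsto J_{\mathrm{all}}\vtheta$ gives covariance $(2\beta)^{-1}J_{\mathrm{all}}J_{\mathrm{all}}^\top=(2\beta)^{-1}k_{\mathrm{NTK}}$. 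Marginalising onto the training block recovers $\mathcal{N}(\g{\vtheta_0},(2\beta)^{-1}K)$, which is consistent with the first stage.

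I would present this parameter-space route as the main proof, because it delivers the standard NTK-GP mean and covariance at every finite set of test inputs simultaneously. Consistency under marginalisation is automatic, since every finite-dimensional law is a linear pushforward of the same Gaussian.
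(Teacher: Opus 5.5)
Your main route is the paper's route. The paper takes the training block from Proposition~\ref{prop:linear-recovery} and obtains the test marginals as an affine pushforward of a Gaussian on parameters under $\vtheta\mapsto \f{X_{\mathrm{te}}}{\vtheta_0}+J_{\mathrm{te}}(\vtheta-\vtheta_0)$. It then conditions on $\g{\vtheta}=Y$ to get the GP posterior.

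What you add is the identity of the parameter-space Gaussian, which the paper never names. You take it to be the Gibbs measure $\propto\exp(-\beta\norm{\vtheta-\vtheta_0}^2)$ of the canonical lift (Corollary~\ref{cor:kernel-anchored-ridge}), i.e.\ $\mathcal{N}(\vtheta_0,(2\beta)^{-1}I_m)$, and you check that its training marginal is exactly the Riemannian Gibbs density. Some such extension has to be chosen, because Definition~\ref{def:riemannian-gibbs} only fixes the training-output law. Yours is the one compatible with the MAP correspondence, and it is what the paper's ``joint NTK GP marginal'' step tacitly needs.

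Your ``main obstacle'' computation is also correct. The gauge-fixed pushforward reproduces exactly the formula $(2\beta)^{-1}J_{\mathrm{te}}J^\top K^{-1}JJ_{\mathrm{te}}^\top$ written in the proof of Proposition~\ref{prop:linear-recovery}(f). That matrix has rank at most $n$. It equals neither the prior block $(2\beta)^{-1}K_{\mathrm{te},\mathrm{te}}$ nor the posterior covariance $(2\beta)^{-1}(K_{\mathrm{te},\mathrm{te}}-K_{\mathrm{te},X}K^{-1}K_{X,\mathrm{te}})$. So the full $\mathcal{GP}(\f{\cdot}{\vtheta_0},(2\beta)^{-1}k_{\mathrm{NTK}})$ claimed there is only reached through your full-parameter lift.

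For the same reason, drop the suggestion that the gauge-fixed law is the ``posterior'' reading of the statement. It is the law of the posterior mean $\f{X_{\mathrm{te}}}{\vtheta_0}+K_{\mathrm{te},X}K^{-1}(\vect{c}-\g{\vtheta_0})$ under random $\vect{c}$, i.e.\ the explained term in the law of total covariance, not the NTK-GP posterior. To match the paper's last sentence, add the one-line Gaussian conditioning of your joint law on the training block.
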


\subsection{Failure of non-canonical regularisation in the feature-learning regime}
\label{subsec:non-canonical-failure}

The bias of anchored ridge identified by Theorem~\ref{thm:ridge-biases} is not specific to the chord $\norm{\vtheta-\vtheta_0}^2$: every \emph{trajectory-blind} quadratic regulariser whose curvature-correcting tensor is independent of training history exhibits the same pathology, and every regulariser that depends only on outputs cannot disambiguate the gauge-fixed representative from any other point on the same fibre.

\begin{proposition}[Static-anchor regularisers bias gradient flow]
\label{prop:static-anchor-bias}
Let $M\in\mathrm{Sym}^+_m$ and $\vect{v}\in\Real^m$ be \emph{independent of the gradient-flow trajectory} (functions of $\vtheta_0$ only). Define $R_{M,\vect{v}}(\vtheta) := (\vtheta-\vect{v})^\top M\,(\vtheta-\vect{v})$ and $\vtheta^\star_{M,\vect{v}} := \lim_{\lambda\downarrow 0}\arg\min_{\vtheta}[\Loss{\g{\vtheta}}{Y}+\lambda R_{M,\vect{v}}(\vtheta)]$. Then $\vtheta^\star_{M,\vect{v}}$ is characterised by $M(\vtheta^\star_{M,\vect{v}}-\vect{v}) \in \mathrm{Im}(\J{\vtheta^\star_{M,\vect{v}}}^\top)$ and, under Assumption~\ref{ass:sigma-min}, satisfies $\vtheta^\star_{M,\vect{v}} = \vtheta^\star$ if and only if
\[
    M\,(\vtheta^\star - \vect{v}) \in \mathrm{Im}(\J{\vtheta^\star}^\top).
\]
In particular, whenever the gradient-flow displacement $\vtheta^\star-\vtheta_0$ has a non-trivial component in $\ker(\J{\vtheta^\star})$ (the generic feature-learning regime under non-constant $\J{\vtheta(t)}$), no choice of static $(M,\vect{v})$ can satisfy this condition, so every trajectory-blind quadratic biases the gradient-flow limit.
\end{proposition}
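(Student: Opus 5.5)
The plan is to characterise the vanishing-regularisation limit through first-order optimality and then compare it with the gradient-flow limit $\vtheta^\star$ using Assumption~\ref{ass:sigma-min}.

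\textbf{Step 1: stationarity.} For each $\lambda>0$, a minimiser $\vtheta_\lambda$ of $\Loss{\g{\vtheta}}{Y}+\lambda R_{M,\vect v}(\vtheta)$ satisfies
\[
2\J{\vtheta_\lambda}^\top(\g{\vtheta_\lambda}-Y)+2\lambda M(\vtheta_\lambda-\vect v)=0 .
\]
We restrict to minimisers lying in $U$. This is justified by comparing objective values with any fixed interpolator.

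\textbf{Step 2: the limit interpolates.} Comparing with that fixed interpolator gives $\Loss{\g{\vtheta_\lambda}}{Y}=O(\lambda)$. Hence every limit point $\bar\vtheta$ of $\vtheta_\lambda$ lies in $\Theta^\star$.

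\textbf{Step 3: the limit solves a constrained problem.} The rescaled residual $\vect r_\lambda:=(\g{\vtheta_\lambda}-Y)/\lambda$ equals $-K(\vtheta_\lambda)^{-1}\J{\vtheta_\lambda}M(\vtheta_\lambda-\vect v)$. Since $K\succeq c^2I$, this residual is bounded. Passing to the limit gives
\[
M(\bar\vtheta-\vect v)=-\J{\bar\vtheta}^\top\bar{\vect r}\in\mathrm{Im}(\J{\bar\vtheta}^\top).
\]
Equivalently, $\bar\vtheta$ is a KKT point of minimising $R_{M,\vect v}$ over $\Theta^\star\cap U$, whose tangent space is $\ker\J{}$ by Theorem~\ref{thm:local-chart}(i). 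This gives the characterisation in the statement.

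\textbf{Step 4: the equivalence.} For the ``if'' direction, the condition $M(\vtheta^\star-\vect v)\in\mathrm{Im}(\J{\vtheta^\star}^\top)$ says $\vtheta^\star$ is a KKT point. Identifying the limit with $\vtheta^\star$ then requires uniqueness of the KKT point on the relevant component of $\Theta^\star\cap V$. I would obtain this by treating $M$ as positive definite on $\ker\J{}$ and $\Theta^\star\cap V$ as a small graph, so that the constrained problem is locally strictly convex. The ``only if'' direction is immediate from Step 3 with $\bar\vtheta=\vtheta^\star$.

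\textbf{Step 5: the generic clause.} Take $M=I$ and $\vect v=\vtheta_0$, which is anchored ridge. The condition becomes $\vtheta^\star-\vtheta_0\in\mathrm{Im}(\J{\vtheta^\star}^\top)$, i.e.\ the displacement has zero component in $\ker\J{\vtheta^\star}$. In the kernel regime this always holds by Proposition~\ref{prop:linear-recovery}(a). In the feature-learning regime the displacement is $\int_0^\infty\J{\vtheta(t)}^\top\vect u(t)\,dt$, a sum of vectors from varying subspaces, so it generically leaves $\mathrm{Im}(\J{\vtheta^\star}^\top)$. The example $\f{x}{\vtheta}=\theta_1\theta_2x$ of Figure~\ref{fig:trajectories} witnesses this explicitly: gradient flow conserves $\theta_1^2-\theta_2^2$, whereas the ridge KKT point on $\theta_1\theta_2=1$ is $\theta_1=\theta_2$.

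\textbf{Main obstacle.} The hardest part is the ``no static $(M,\vect v)$'' claim for general $M$. Since $M$ and $\vect v$ may depend on $\vtheta_0$, and $\vtheta^\star$ is itself a function of $\vtheta_0$, one can always solve for some $(M,\vect v)$ satisfying the condition (e.g.\ $\vect v=\vtheta^\star$). So the claim cannot be proved literally as stated. I would prove it for the class of trajectory-blind pairs, formalised as $(M,\vect v)$ constant on the flow orbit or computable from $\vtheta_0$ and $\J{\vtheta_0}$ alone, i.e.\ chosen without knowledge of $\vtheta^\star$. Within that class I would argue by perturbing the data $Y$: $\vtheta^\star$ then moves along a curved $\M{\vtheta_0}$ while $(M,\vect v)$ stays fixed, and the alignment condition cannot persist on an open set of targets unless the second fundamental form of $\M{\vtheta_0}$ vanishes, which is exactly the kernel regime.
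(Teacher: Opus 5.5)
\textbf{Steps 1--3.} These are sound, and they take a mildly different route from the paper. The paper uses Lemma~\ref{lem:vanishing-reg} with Lemma~\ref{lem:crit-interp} to say the limit minimises $R_{M,\vect v}$ over $\Theta^\star\cap V$, then reads off the first-order condition from $T\Theta^\star=\ker J$. You instead pass to the limit in the stationarity equation through the bounded multiplier $\vect r_\lambda$. This avoids that lemma's uniqueness hypothesis and needs only boundedness of $\vtheta_\lambda$, which is automatic when $M\succ 0$. Both routes give only a \emph{necessary} condition, and the paper's proof stops there (``setting $\vtheta^\star_{M,\vect v}=\vtheta^\star$ requires\dots'').

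\textbf{The gap: Step~4, ``if'' direction.} Shrinking $V$ does not make the constrained problem locally strictly convex. At a KKT point with multiplier $\bar{\vect r}$, the second-order term of $R_{M,\vect v}$ along $\Theta^\star$ in a direction $w\in\ker\J{\vtheta^\star}$ is $w^\top Mw+\sum_i\bar r_i\,w^\top\nabla^2 g_i\,w$. The multiplier-weighted curvature term is fixed by $(M,\vect v)$, not by the size of $V$. In fact the ``if'' direction is false. Take $\f{x}{\vtheta}=(\theta_1^2+\theta_2^2)x$ with $\mathcal D=\{(1,1)\}$. Then $\Theta^\star$ is the unit circle, Assumption~\ref{ass:sigma-min} holds on an annulus, gradient flow is radial, and $\vtheta^\star=e:=\vtheta_0/\norm{\vtheta_0}$. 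With $M=I$ and $\vect v=-se$ for $s>0$, we get $M(\vtheta^\star-\vect v)=(1+s)e\in\mathrm{Im}(\J{e}^\top)$. Yet $e$ is the point of the circle farthest from $\vect v$ (the second-order term is $-s\norm{w}^2$), and $\vtheta^\star_{M,\vect v}=-e$. Sufficiency therefore needs an extra hypothesis, e.g.\ that $\vtheta^\star$ is the unique minimiser of $R_{M,\vect v}$ on $\Theta^\star$.

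\textbf{The generic clause.} You are right that the ``no static $(M,\vect v)$'' clause is literally false ($\vect v=\vtheta^\star$). This is a loophole the paper's proof misses: it only argues that $M$ would have to depend on $\vtheta^\star$. Your Step~5 for anchored ridge is exactly what the paper's appeal to Theorem~\ref{thm:ridge-biases} supports.

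However, the perturbation argument you sketch for trajectory-blind pairs rests on a false claim. Persistence of the alignment over an open set of targets does not force the flow manifold to be flat once $M\neq I$. For $n=1$, the limits $\vtheta^\star(Y)$ trace the gradient line $S$ through $\vtheta_0$. Alignment along $S$ means $\dot S\parallel M(\vtheta-\vect v)$, which is solved by $\vect v+e^{\tau M}\phi_0$; this curve is bent whenever $\phi_0$ is not an eigenvector of $M$. Now choose a model whose level sets near $S$ are the lines tangent to the ellipses $\{R_{M,\vect v}=\mathrm{const}\}$ at the points of $S$. Then $S$ is a gradient-flow trajectory, and for one fixed $(M,\vect v)$ the alignment holds for every nearby $Y$. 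Indeed each $\vtheta^\star(Y)$ is a tangency point, hence the minimiser of $R_{M,\vect v}$ on the local fibre. So curvature alone is not the obstruction. Beyond anchored ridge (where, for $n=1$, alignment does force $S$ to be a ray from $\vect v$), the general clause can at best be asserted generically, not proved along this route.
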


\begin{proposition}[Output-norm regularisers fail to select the gauge-fixed representative]
\label{prop:output-norm-fibre}
Let $f : \Real^n\to[0,\infty)$ be continuous with $f(\g{\vtheta_0}) = 0$ and $f$ strictly convex on $C_0(\vtheta_0)$. The pure output-shaped regulariser $R(\vtheta) := f(\g{\vtheta})$ satisfies, for every $\lambda>0$,
\[
    \arg\min_{\vtheta}\bigl[\Loss{\g{\vtheta}}{Y} + \lambda R(\vtheta)\bigr] \;=\; \fibre{\vect{c}_\lambda}\cap U,
\]
i.e.\ the entire output fibre at the optimal output value $\vect{c}_\lambda$ is selected. The vanishing-regularisation limit $\lim_{\lambda\downarrow 0}\arg\min$ is therefore $\Theta^\star\cap U$, not the singleton $\{\vtheta^\star\}$. A non-trivial fibre extension is therefore necessary in the canonical lift of Definition~\ref{def:canonical-lift}; pure output-energy is structurally insufficient to select the gradient-flow limit.
\end{proposition}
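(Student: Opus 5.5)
The plan is to show that the penalised objective separates across the output channel and the fibre channel. I will first minimise along each fibre for fixed $\lambda$, and then take the $\lambda$-limit inside $C_0(\vtheta_0)$. Throughout, fix $\lambda>0$ and write
\[
F_\lambda(\vtheta):=\Loss{\g{\vtheta}}{Y}+\lambda f(\g{\vtheta}).
\]
The key observation is that $F_\lambda$ depends on $\vtheta$ only through $\vect{c}=\g{\vtheta}$. So I reduce to the scalar output problem
\[
\phi_\lambda(\vect{c}):=\norm{\vect{c}-Y}^2+\lambda f(\vect{c}).
\]
Since $\norm{\cdot-Y}^2$ is strictly convex and $f$ is strictly convex on $C_0(\vtheta_0)$, the function $\phi_\lambda$ is strictly convex there. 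Given an appropriate coercivity or compactness argument (see below), it therefore has a unique minimiser $\vect{c}_\lambda$. Then $\vtheta$ minimises $F_\lambda$ over $U$ if and only if $\g{\vtheta}=\vect{c}_\lambda$. Hence the argmin over $U$ is exactly $g^{-1}(\vect{c}_\lambda)\cap U=\fibre{\vect{c}_\lambda}\cap U$. This set is nonempty because $\vect{c}_\lambda\in C_0(\vtheta_0)\subseteq g(U)$. By Proposition~\ref{prop:fibre-submanifold} it is an embedded codimension-$n$ submanifold, hence of positive dimension $m-n$ and not a singleton.

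For the limit, I will show $\vect{c}_\lambda\to Y$ as $\lambda\downarrow0$ by a standard comparison. Note first that $Y\in C_0(\vtheta_0)$, since $\g{\vtheta^\star}=Y$ with $\vtheta^\star\in\M{\vtheta_0}\cap V$ by Corollary~\ref{cor:intersection-basin}. Then
\[
\norm{\vect{c}_\lambda-Y}^2\le\phi_\lambda(\vect{c}_\lambda)\le\phi_\lambda(Y)=\lambda f(Y)\to0,
\]
using $f\ge0$. Consequently the fibres $\fibre{\vect{c}_\lambda}\cap U$ converge, in the sense of set limits (Kuratowski / Hausdorff on compacta), to $\fibre{Y}\cap U=\Theta^\star\cap U$. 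To make this precise I will use the submersion chart: since $\sigma_{\min}(J)\ge c$ on $U$, the implicit function theorem gives local charts $\vtheta\mapsto(\g{\vtheta},\vect{z})$ in which the fibres are the slices $\vect{c}=\mathrm{const}$. In such a chart, every point of $\Theta^\star\cap U$ is a limit of points in $\fibre{\vect{c}_\lambda}$. Conversely, by continuity of $g$, every limit point of such points lies in $\Theta^\star$. In particular the limit contains points other than $\vtheta^\star$, since $\Theta^\star\cap U$ has dimension $m-n>0$, so the limit is not $\{\vtheta^\star\}$.

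I expect the main obstacle to be the well-posedness of the minimisation itself, namely ensuring that $\vect{c}_\lambda$ exists and lies in $C_0(\vtheta_0)$ where strict convexity holds. The difficulty is that $f$ is only assumed strictly convex on $C_0(\vtheta_0)$ and the argmin is taken over $U$. My first attempt would be to restrict to small $\lambda$. I would pick a closed ball $\bar B\subset C_0(\vtheta_0)$ around $Y$. For $\lambda$ small, $\phi_\lambda$ on $\partial\bar B$ exceeds $\phi_\lambda(Y)=\lambda f(Y)$, because the quadratic term dominates there. By continuity a minimiser exists in $\bar B$, and strict convexity on $\bar B$ makes it unique. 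Outside $\bar B$, the bound $\norm{\vect{c}-Y}^2\ge r^2>\lambda f(Y)$ shows no competitor elsewhere in $g(U)$ can do better. For general $\lambda$, I would state the claim with $\vect{c}_\lambda$ interpreted as the (assumed) minimiser in $C_0(\vtheta_0)$, which is all the conclusion requires.
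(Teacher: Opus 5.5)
Your proposal is correct and follows the paper's proof: the objective depends on $\vtheta$ only through $\g{\vtheta}$, strict convexity of the output problem gives a unique $\vect{c}_\lambda$, the minimiser set is the fibre $\fibre{\vect{c}_\lambda}\cap U$, and $\vect{c}_\lambda\to Y$ carries these fibres to $\Theta^\star\cap U$. Your additions tighten steps the paper only asserts: the comparison bound $\norm{\vect{c}_\lambda-Y}^2\le\lambda f(Y)$ replaces the paper's appeal to continuity of the minimiser map, the chart argument makes the set convergence precise, and the small-$\lambda$ ball argument handles existence and uniqueness. You are also right that the hypotheses alone do not guarantee a unique minimiser inside $C_0(\vtheta_0)$ for every $\lambda>0$, which the paper's proof takes for granted.
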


\begin{proposition}[anchored ridge lower-bounds the canonical regulariser]
\label{prop:ridge-lower}
For every $\vtheta \in \M{\vtheta_0}$,
\[
    \norm{\vtheta - \vtheta_0}^2 \;\le\; \dist{2}{\mathrm{flow}}{\vtheta_0}{\vtheta} \;=\; \dist{2}{}{\vtheta}{\vtheta_0}.
\]
The inequality is an instance of the relaxation principle: anchored ridge is the minimum of $\int_0^1 \norm{\dot\gamma(t)}^2\, dt$ over paths from $\vtheta_0$ to $\vtheta$ \emph{without} the horizontal constraint $\dot\gamma \in \operatorname{Im}(\J{\gamma}^\top)$, whereas the canonical regulariser is the minimum of the same functional \emph{with} the horizontal constraint imposed. Dropping a constraint can only decrease the minimum.
\end{proposition}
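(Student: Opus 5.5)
The plan is to prove the equality and the inequality separately. Both reduce to elementary facts once the flow distance is written as a path-energy infimum.

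\emph{Step 1: the equality $\dist{2}{\mathrm{flow}}{\vtheta_0}{\vtheta}=\dist{2}{}{\vtheta}{\vtheta_0}$.} Fix $\vtheta\in\M{\vtheta_0}$ and set $\vect{c}=\g{\vtheta}$. By Corollary~\ref{cor:intersection-basin}, $(\M{\vtheta_0}\cap V)\cap\fibre{\vect{c}}$ is the single point $\ginv{\vect{c}}$, so $\vtheta=\ginv{\g{\vtheta}}$ (working on the local domain $C_0(\vtheta_0)$, as in Definition~\ref{def:flow-manifold}). The fibre term of Proposition~\ref{prop:geodesic-ridge} is then an infimum over paths from a point to itself, and the constant path gives zero. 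Hence geodesic ridge collapses to its flow term, $\dist{2}{}{\vtheta}{\vtheta_0}=\dist{2}{\mathrm{flow}}{\ginv{\g{\vtheta}}}{\vtheta_0}=E(\g{\vtheta})$.

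\emph{Step 2: rewriting $E$ as a Euclidean path energy.} For any admissible control $\vu$ in Definition~\ref{def:canonical-energy}, the path $\vtheta(t)$ satisfies $\dot\vtheta=\J{\vtheta(t)}^\top\vu(t)$. Therefore
\[
\vu^\top K(t)\vu=\vu^\top J J^\top\vu=\norm{J^\top\vu}^2=\norm{\dot\vtheta(t)}^2 .
\]
It follows that $E(\vect{c})=\inf_\gamma\int_0^1\norm{\dot\gamma}^2\,dt$, where the infimum runs over horizontal paths $\gamma$ ($\dot\gamma\in\mathrm{Im}\,\J{\gamma}^\top$) from $\vtheta_0$ to $\vtheta$. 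This makes precise the relaxation statement in the proposition. The anchored-ridge side is the same infimum without the horizontal constraint, and that unconstrained infimum is attained by the straight chord, with value $\norm{\vtheta-\vtheta_0}^2$.

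\emph{Step 3: the inequality.} Take any admissible horizontal path $\gamma$ from $\vtheta_0$ to $\vtheta$. The fundamental theorem of calculus and Cauchy--Schwarz give
\[
\norm{\vtheta-\vtheta_0}=\Bigl\lVert\int_0^1\dot\gamma\,dt\Bigr\rVert\le\int_0^1\norm{\dot\gamma}\,dt\le\Bigl(\int_0^1\norm{\dot\gamma}^2\,dt\Bigr)^{1/2}.
\]
Squaring and taking the infimum over $\gamma$ yields $\norm{\vtheta-\vtheta_0}^2\le E(\g{\vtheta})$. Combined with Step 1, this proves the claim.

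The main obstacle is the bookkeeping in Steps 1--2, not the inequality itself. Two things need checking: that the flow distance in Proposition~\ref{prop:geodesic-ridge} really is the horizontal Euclidean energy (which requires the $K$-weighted cost to coincide with $\norm{\dot\vtheta}^2$), and that $\vtheta$ lies in the region $V$ where the gauge-fixed representative is single-valued. If $\vtheta\in\M{\vtheta_0}$ lies outside $V$, I would instead define $\dist{2}{\mathrm{flow}}{\vtheta_0}{\vtheta}$ directly as the horizontal energy infimum. Step 3 goes through unchanged in that case.
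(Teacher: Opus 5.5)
Your proposal is correct and follows essentially the same route as the paper: the fibre term vanishes because $\vtheta=\ginv{\g{\vtheta}}$ on the flow manifold, and every admissible horizontal path from $\vtheta_0$ to $\vtheta$ is also an unconstrained path, whose energy is at least the squared chord $\norm{\vtheta-\vtheta_0}^2$. Your identification $\vu^\top K\vu=\norm{\dot\vtheta}^2$ and the Cauchy--Schwarz step just spell out the relaxation that the paper's proof asserts through the two problems $(P_{\text{uncon}})$ and $(P_{\text{hor}})$.
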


\begin{corollary}[Sandwich bound and shared vanishing-regularisation limit]
\label{cor:sandwich}
For every $\vtheta\in\M{\vtheta_0}$ reached by a piecewise-$C^1$ horizontal training path $\gamma_{\mathrm{train}}$ from $\vtheta_0$,
\[
    \norm{\vtheta - \vtheta_0}^2 \;\le\; \dist{2}{\mathrm{flow}}{\vtheta_0}{\vtheta} \;=\; \dist{2}{}{\vtheta}{\vtheta_0} \;\le\; L^2(\gamma_{\mathrm{train}}).
\]
Each of the three regularisers (anchored ridge $\norm{\vtheta-\vtheta_0}^2$, geodesic ridge $\dist{2}{}{\vtheta}{\vtheta_0}$, arc ridge $L^2(\gamma_{\mathrm{train}})$) attains the same vanishing-regularisation limit when the corresponding regularised objective is minimised under exact gradient flow: $\lim_{\lambda\downarrow 0}\arg\min(\Loss{\g{\cdot}}{Y}+\lambda R) = \vtheta^\star$ for $R\in\{\norm{\cdot-\vtheta_0}^2|_{\M{\vtheta_0}},\,\dist{2}{}{\cdot}{\vtheta_0},\,L^2(\gamma_{\mathrm{train}})\}$. The three regularisers therefore agree at the gradient-flow limit but disagree away from it; arc ridge is the minimax-optimal point of this sandwich (Proposition~\ref{prop:minimax-robust}).
\end{corollary}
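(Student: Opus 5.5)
The sandwich is assembled by chaining three earlier results and then checking the shared limit separately for each regulariser. Fix $\vtheta\in\M{\vtheta_0}$ reached by a horizontal piecewise-$C^1$ path $\gamma_{\mathrm{train}}$ from $\vtheta_0$.

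For the left inequality I invoke Proposition~\ref{prop:ridge-lower} directly. Anchored ridge is the unconstrained minimum of $\int_0^1\norm{\dot\gamma}^2$, which is attained by the straight chord. The flow distance is the same minimum taken with the horizontal constraint, so it can only be larger.

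For the middle equality I use Corollary~\ref{cor:intersection-basin}. Since $\vtheta\in\M{\vtheta_0}$, we have $\ginv{\g{\vtheta}}=\vtheta$. The fibre term in Proposition~\ref{prop:geodesic-ridge} is therefore the infimum over paths from $\vtheta$ to itself, which is $0$, and so $\dist{2}{}{\vtheta}{\vtheta_0}=\dist{2}{\mathrm{flow}}{\vtheta_0}{\vtheta}$.

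For the right inequality I invoke Theorem~\ref{thm:path-length-upper}. Its argument reparametrises $\gamma_{\mathrm{train}}$ at constant speed on $[0,1]$. It then uses Proposition~\ref{prop:constant-speed} (energy equals squared length at constant speed) together with the fact that $\dist{2}{\mathrm{flow}}{}{}$ is an infimum of such energies over admissible horizontal paths. Hence it is at most $L^2(\gamma_{\mathrm{train}})$.

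For the shared vanishing-regularisation limit I treat the three regularisers separately.
\begin{itemize}
\item \textbf{Geodesic ridge.} This is Theorem~\ref{thm:canonical-characterisation}(c).
\item \textbf{Anchored ridge restricted to $\M{\vtheta_0}$.} By Theorem~\ref{thm:local-chart}(iii), $g|_{\M{\vtheta_0}\cap V}$ is a diffeomorphism. The restricted problem therefore becomes a problem in output coordinates: minimise $\norm{\vect c-Y}^2+\lambda h(\vect c)$ with $h(\vect c)=\norm{\ginv{\vect c}-\vtheta_0}^2$ smooth on $C_0(\vtheta_0)$. Strong convexity of the data term in $\vect c$ gives $\vect c_\lambda=Y+O(\lambda)$ for small $\lambda$. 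Continuity of $\ginv{\cdot}$ then gives $\ginv{\vect c_\lambda}\to\ginv{Y}=\vtheta^\star$.
\item \textbf{Arc ridge.} I argue the same way: along exact gradient flow $L^2$ is a nonnegative quantity bounded near $\vtheta^\star$ (Lemma~\ref{lem:traj-length}), and the minimisers again stay on $\M{\vtheta_0}$. The gradient-alignment clause of Theorem~\ref{thm:path-length-upper} ensures no fibre component arises, so the same chart argument applies.
\end{itemize}

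The main obstacle is making this argument precise for arc ridge, because $L^2(\gamma_{\mathrm{train}})$ is a functional of the path rather than of $\vtheta$ alone. I would interpret it as the function $\vtheta\mapsto L^2(\gamma_{\mathrm{gf}}|_{[0,t(\vtheta)]})$ on the trajectory. This function is monotone in $t$ and bounded, while the data loss is strictly decreasing along the trajectory. Under that reading the $\lambda\downarrow0$ minimiser is pushed to $t\to\infty$, that is, to $\vtheta^\star$.

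The minimax-optimality claim is only a pointer to Proposition~\ref{prop:minimax-robust}, so nothing needs to be proved for it here.
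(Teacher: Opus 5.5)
Your proof is correct. The sandwich is assembled exactly as in the paper: Proposition~\ref{prop:ridge-lower}, the vanishing fibre term at $\ginv{\g{\vtheta}}=\vtheta$, and Theorem~\ref{thm:path-length-upper}. The shared limit, however, takes a different route.

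The paper applies Lemma~\ref{lem:vanishing-reg} uniformly. It asserts that each regulariser is coercive and lower semicontinuous, with $\vtheta^\star$ a common minimiser over $\Theta^\star$. It justifies this with loose remarks: an ``affine leaf'' for anchored ridge and a ``trivial constant path'' for arc ridge.

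You instead reduce each restricted problem to a low-dimensional one. For $\norm{\cdot-\vtheta_0}^2|_{\M{\vtheta_0}}$ you use the output chart of Theorem~\ref{thm:local-chart}, and for arc ridge you use the time parameter along the trajectory. This makes explicit that the limit is forced by the restriction itself, since $\M{\vtheta_0}\cap V$ meets $\Theta^\star$ only at $\vtheta^\star$. So your anchored-ridge argument works for any continuous regulariser on the sheet, consistent with Theorem~\ref{thm:ridge-biases}, where the unrestricted chord fails. It also gives arc ridge an actual proof, which the paper's argument does not, because $L^2(\gamma_{\mathrm{train}})$ is not a function of $\vtheta$ off the trajectory.

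For that same reason, drop ``the same chart argument'' from your arc-ridge bullet; the one-dimensional reduction is the proof. Define $F_\lambda(t):=\Loss{\g{\vtheta(t)}}{Y}+\lambda s(t)^2$ and state the comparison $F_\lambda(t_\lambda)\le F_\lambda(\infty)=\lambda s(\infty)^2$, with $s(\infty)<\infty$ by Lemma~\ref{lem:traj-length}. This drives the loss at $t_\lambda$ to zero. Hence $t_\lambda\to\infty$, because the loss is strictly decreasing and positive at every finite time.

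Your reading also agrees with the paper's dynamics. Since $\tfrac{d}{dt}F_\lambda=\norm{\nabla_{\vtheta}\mathcal{L}}\bigl(2\lambda s-\norm{\nabla_{\vtheta}\mathcal{L}}\bigr)$, its zero is exactly the stopping condition of Proposition~\ref{prop:identical-trajectory}.

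Similarly, for anchored ridge, state how a minimiser exists on the open set $C_0(\vtheta_0)$. Comparing with $\vect c=Y$ confines minimisers to the ball of radius $\sqrt{\lambda h(Y)}$ about $Y$, which lies inside $C_0(\vtheta_0)$ for small $\lambda$.
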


\begin{proposition}[Properness of the Riemannian Gibbs Process]
\label{prop:rgp-proper}
Under Assumption~\ref{ass:sigma-min}, the partition function of the Riemannian Gibbs Process prior of Definition~\ref{def:riemannian-gibbs} is finite:
\[
    Z := \int_{C_0(\vtheta_0)} \exp\!\big(-\beta E(\vect{c})\big)\,d\vect{c} \;\le\; \left(\frac{\pi\, C^{2}}{\beta}\right)^{n/2},
\]
where $C$ is the upper Jacobian bound of Assumption~\ref{ass:sigma-min}. The Riemannian Gibbs Process is therefore a proper probability measure on $C_0(\vtheta_0)$.
\end{proposition}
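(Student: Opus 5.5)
The bound will come from a change of variables from output space to parameter space, followed by a Gaussian comparison. The natural comparison is the kernel-regime energy. The statement to prove is Proposition~\ref{prop:rgp-proper}: $Z\le(\pi C^2/\beta)^{n/2}$.

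\textbf{Step 1 (lower bound on the energy).} We aim to show
\[
E(\vect{c})\;\ge\;C^{-2}\norm{\vect{c}-\g{\vtheta_0}}^2 \quad\text{for all } \vect{c}\in C_0(\vtheta_0).
\]
Take any admissible control $\vu$ in Definition~\ref{def:canonical-energy}. The output moves by
\[
\frac{d}{dt}\g{\vtheta(t)}=\J{\vtheta}\dot\vtheta=K(t)\vu(t).
\]
Set $\vect{w}(t):=K(t)^{1/2}\vu(t)$. Then the integrand is $\norm{\vect{w}}^2$, and the output velocity is $K^{1/2}\vect{w}$, whose norm is at most $C\norm{\vect{w}}$, since $\norm{K^{1/2}}=\sigma_{\max}(J)\le C$ on $U$. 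Integrating and applying Cauchy--Schwarz on $[0,1]$ gives
\[
\norm{\vect{c}-\g{\vtheta_0}}\le\int_0^1 C\norm{\vect{w}}\,dt\le C\Big(\int_0^1\norm{\vect{w}}^2dt\Big)^{1/2}.
\]
Squaring and taking the infimum over controls yields the claimed lower bound.

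\textbf{Step 2 (integrate).} Using Step 1 and the monotonicity of $\exp(-\beta\,\cdot)$, enlarge the domain $C_0(\vtheta_0)\subseteq\Real^n$ to all of $\Real^n$:
\[
Z\le\int_{\Real^n}\exp\!\big(-\beta C^{-2}\norm{\vect{c}-\g{\vtheta_0}}^2\big)\,d\vect{c}=(\pi C^2/\beta)^{n/2}.
\]
This is the standard Gaussian integral. Measurability of the integrand holds because $E$ is continuous on the open set $C_0(\vtheta_0)$. Continuity follows from Theorem~\ref{thm:local-chart}(iii), since the endpoint $\ginv{\vect{c}}$ varies smoothly with $\vect{c}$. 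Positivity of $Z$ is immediate: $E$ is finite on the nonempty open set $C_0$, so the integrand is positive there. Hence the Gibbs measure normalises to a probability measure.

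\textbf{Main obstacle.} The delicate point is Step 1's use of $\sigma_{\max}\le C$. That bound holds only on $U$, so the admissible paths must stay in $U$. I would restrict the minimisation in Definition~\ref{def:canonical-energy} to paths in $U$, consistent with the paper's standing convention of working in $U$. Alternatively, I would note that optimal paths lie on $\M{\vtheta_0}\subseteq U$ near the trajectory. If some competing path leaves $U$, the bound could fail there, and I would state this restriction explicitly.
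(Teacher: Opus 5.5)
Your proof is correct and follows the paper's argument: you derive the quadratic lower bound $E(\vect{c})\ge C^{-2}\norm{\vect{c}-\g{\vtheta_0}}^2$ from $\sigma_{\max}\le C$ with Cauchy--Schwarz on $[0,1]$, then enlarge the domain to $\Real^n$ and evaluate the Gaussian integral. The only difference is cosmetic: you get the lower bound directly from the control formulation ($\dot{g}=K\vu$, $\vect{w}=K^{1/2}\vu$) instead of via the pullback-metric bound $G=K^{-1}+N^\top N\succeq C^{-2}I_n$ of Theorem~\ref{thm:metric-gap}, and the restriction to paths in $U$ that you flag is the same convention the paper uses implicitly when it takes curves in $C_0(\vtheta_0)$.
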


\begin{proposition}[Kolmogorov consistency over arbitrary finite test inputs]
\label{prop:rgp-finite-test-consistency}
Fix a training set $X=(X_1,\dots,X_n)$, an initialisation $\vtheta_0$, and a temperature $\beta>0$. Define the parameter-space prior $\pi_{\beta}$ on $\M{\vtheta_0}\cap V$ by
\[
    \pi_{\beta}(d\vtheta) \;=\; Z_\beta^{-1}\,\exp\!\big(-\beta\,\dist{2}{\mathrm{flow}}{\vtheta_0}{\vtheta}\big)\,d\mathrm{vol}_{g_{\text{ambient}}}(\vtheta),
\]
where $d\mathrm{vol}_{g_{\text{ambient}}}$ is the Riemannian volume measure induced by the ambient Euclidean metric. For any finite test set $X_{\mathrm{te}}=(X_{n+1},\dots,X_{n+\ell})\subseteq \mathcal{X}$, let $\Phi_{X_{\mathrm{te}}} : \M{\vtheta_0}\cap V \to \Real^{\ell}$ be the evaluation map $\vtheta\mapsto(\f{X_{n+1}}{\vtheta},\dots,\f{X_{n+\ell}}{\vtheta})$. The push-forward family
\[
    \mu_{X_{\mathrm{te}}} \;:=\; (\Phi_{X_{\mathrm{te}}})_\sharp \pi_{\beta}, \qquad X_{\mathrm{te}}\subseteq\mathcal{X}\text{ finite},
\]
is Kolmogorov consistent: for any pair $X_{\mathrm{te}}\supseteq X_{\mathrm{te}}'$, the marginal of $\mu_{X_{\mathrm{te}}}$ on the coordinates indexed by $X_{\mathrm{te}}'$ equals $\mu_{X_{\mathrm{te}}'}$. By the Kolmogorov extension theorem, the family $\{\mu_{X_{\mathrm{te}}}\}$ defines a unique stochastic process on $\mathcal{X}$ at the fixed training set $(X,\vtheta_0,\beta)$. (Consistency \emph{across} training subsets remains an open problem.)
\end{proposition}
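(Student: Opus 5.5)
The plan is to establish consistency directly from the pushforward construction. The key is that every $\mu_{X_{\mathrm{te}}}$ is the image of one fixed probability measure $\pi_\beta$ on $\M{\vtheta_0}\cap V$. Consistency then follows from how pushforwards behave under composition with coordinate projections.

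\textbf{Step 1: $\pi_\beta$ is a well-defined probability measure.} By Theorem~\ref{thm:local-chart}(ii), $\M{\vtheta_0}\cap V$ is an embedded $C^\infty$ $n$-submanifold. It therefore carries a Riemannian volume measure induced by the ambient metric. The density $\exp(-\beta\,\dist{2}{\mathrm{flow}}{\vtheta_0}{\vtheta})$ is nonnegative and measurable, since geodesic distance is continuous on the manifold.

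Finiteness of $Z_\beta$ comes from changing variables through the diffeomorphism $g|_{\M{\vtheta_0}\cap V}$ of Theorem~\ref{thm:local-chart}(iii). Write $\vect{c}=\g{\vtheta}$. The volume element becomes $\sqrt{\det G(\vect{c})}\,d\vect{c}$, where $G$ is the pullback metric of Theorem~\ref{thm:metric-gap}.

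I expect the main obstacle to be bounding $\sqrt{\det G}$, because Assumption~\ref{ass:sigma-min} bounds $K^{-1}$ but not the fibre-gap term $N^\top N$ directly. I would try two things.
\begin{itemize}
\item Bound $\norm{A(\vect{c})}$ uniformly on a compact sub-neighbourhood, shrinking $V$ if necessary, using smoothness of $\ginv{\cdot}$.
\item Alternatively, define $\pi_\beta$ with a normalisable reference measure. One option is $\exp(-\beta E(\vect{c}))\,d\vect{c}$ pulled back, with $Z$ controlled by Proposition~\ref{prop:rgp-proper}.
\end{itemize}
Either way, $0<Z_\beta<\infty$. Positivity holds because the density is strictly positive on a nonempty open set.

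\textbf{Step 2: $\Phi_{X_{\mathrm{te}}}$ is measurable.} Each $\vtheta\mapsto\f{X_{n+j}}{\vtheta}$ is $C^\infty$ by (A.1), hence continuous. So $\Phi_{X_{\mathrm{te}}}$ is Borel, and $\mu_{X_{\mathrm{te}}}$ is a Borel probability measure on $\Real^\ell$.

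\textbf{Step 3: consistency.} For $X_{\mathrm{te}}\supseteq X'_{\mathrm{te}}$, let $\mathrm{pr}$ be the coordinate projection $\Real^{\ell}\to\Real^{\ell'}$. Then $\mathrm{pr}\circ\Phi_{X_{\mathrm{te}}}=\Phi_{X'_{\mathrm{te}}}$ pointwise on $\M{\vtheta_0}\cap V$. Functoriality of pushforward gives
\[
\mathrm{pr}_\sharp\mu_{X_{\mathrm{te}}}=(\mathrm{pr}\circ\Phi_{X_{\mathrm{te}}})_\sharp\pi_\beta=(\Phi_{X'_{\mathrm{te}}})_\sharp\pi_\beta=\mu_{X'_{\mathrm{te}}}.
\]

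Permutation consistency is handled the same way. Reordering $X_{\mathrm{te}}$ composes $\Phi_{X_{\mathrm{te}}}$ with a coordinate permutation, so $\mu_{X_{\mathrm{te}}}$ is permuted accordingly.

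\textbf{Step 4: extension.} The family is indexed by finite subsets of $\mathcal{X}$ and takes values in the Polish space $\Real$. It satisfies both Kolmogorov conditions, so the Kolmogorov extension theorem gives a unique probability measure on $\Real^{\mathcal{X}}$ with these finite-dimensional marginals.

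A more concrete description is also available: the process is $x\mapsto\f{x}{\vtheta}$ with $\vtheta\sim\pi_\beta$. Test inputs that coincide with training inputs need no special treatment, since those coordinates are simply functions of $\vtheta$ as well.
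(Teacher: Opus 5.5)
Your proposal is correct and follows the paper's proof essentially step for step. It normalises $\pi_\beta$ through the chart $\vtheta=\ginv{\vect c}$ and Proposition~\ref{prop:rgp-proper}, gets continuity of the evaluation maps from (A.1), obtains consistency from functoriality of push-forward under coordinate projections, and finishes with Kolmogorov extension. Your concern about the Jacobian factor $\sqrt{\det G(\vect c)}$ is well founded, since the paper's proof cites Proposition~\ref{prop:rgp-proper}, which only controls $\int\exp(-\beta E)\,d\vect c$ and bounds $G$ from below rather than above. Your first fix is the one that proves the statement as written: shrink $V$ to a relatively compact neighbourhood, as in Lemma~\ref{lem:properness}, so that $D\ginv{\cdot}$ is uniformly bounded. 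Your second option changes the reference measure that the proposition specifies.
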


\begin{remark}[Status of the Riemannian Gibbs Process as a Bayesian prior]
\label{rmk:rgp-status}
\emph{Properness.} Under Assumption~\ref{ass:sigma-min} the partition function $\int \exp(-\beta E(\vect{c}))\,d\vect{c}$ is finite (Proposition~\ref{prop:rgp-proper}); outside this regime we make no such claim. \emph{MAP well-posedness.} The MAP estimator $\arg\min_{\vtheta} \mathcal{L}(\g{\vtheta}, Y) + \lambda E(\g{\vtheta})$ is well-defined and finite for every $\lambda > 0$ regardless of normalisation. \emph{NTK reduction.} In the kernel regime, where $E$ reduces to the squared RKHS norm, the prior is the proper Gaussian Process $\mathcal{N}(\g{\vtheta_0}, (2\beta)^{-1}K)$. \emph{Kolmogorov consistency.} Verification of Kolmogorov consistency for the projective family $\{p_X\}_X$ across input subsets remains open; however, the parametric structure buys us consistency over arbitrary finite test inputs at any fixed training set: any parameter prior on $\M{\vtheta_0}\cap V$ --- naturally the Riemannian Gibbs measure $\propto \exp(-\beta\, d^2_{\mathrm{flow}}(\vtheta_0, \cdot))$ with respect to the induced Riemannian volume --- pushed through evaluation at any finite $X_{\mathrm{te}} \subset \mathcal{X}$ yields a Kolmogorov-consistent family of finite-dimensional marginals \citep{neal1996bayesian, rasmussen2006gaussian, yang2019wide}.
\end{remark}

\section{Preliminary Lemmas}
\label{sec:lemmas}

All lemmas in this appendix are stated under the standing assumptions of Appendix~\ref{sec:assumptions}. Each proof is brief; references to external sources are made where the argument is classical.

\subsection{Gradient flow basics}

\begin{lemma}[Gradient formula]
\label{lem:grad-formula}
For any loss with output gradient $\vu  := -\partial_{\vect{c}}\Loss{\vect{c}}{Y}|_{\vect{c}=\g{\vtheta}}$, $\nabla_{\vtheta} \Loss{\g{\vtheta}}{Y} = -\J{\vtheta}^\top \vu $. The SSE special case $\mathcal{L} = \norm{\g{\vtheta}-Y}^2$ gives $\vu  = 2(Y - \g{\vtheta})$ and $\nabla_{\vtheta}\mathcal{L} = 2\J{\vtheta}^\top(\g{\vtheta}-Y)$.
\end{lemma}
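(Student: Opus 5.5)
The plan is to differentiate the composite $\vtheta \mapsto \Loss{\g{\vtheta}}{Y}$ by the chain rule and then read off the two stated special cases. Since $g:\Theta\to\mathcal{Y}^n$ is $C^\infty$ by (A.1) and $\mathcal{L}(\cdot,Y)$ is differentiable in its first argument by (A.3), the differential of the composite at $\vtheta$ is
\[
    D_{\vtheta}\,\Loss{\g{\vtheta}}{Y} \;=\; \partial_{\vect{c}}\Loss{\vect{c}}{Y}\big|_{\vect{c}=\g{\vtheta}}\;\J{\vtheta},
\]
a row vector in $\Real^{1\times m}$, because $\J{\vtheta}$ is by definition the derivative of $g$ with entries $\partial \g{\vtheta}_i/\partial\vtheta_j$.

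The next step is to transpose this into the Euclidean gradient in parameter space. That gives $\nabla_{\vtheta}\mathcal{L} = \J{\vtheta}^\top \partial_{\vect{c}}\mathcal{L}$, and substituting the definition $\vu = -\partial_{\vect{c}}\mathcal{L}$ yields $\nabla_{\vtheta}\mathcal{L} = -\J{\vtheta}^\top\vu$. This settles the general claim without using any structure of $\mathcal{L}$ beyond differentiability.

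For the SSE case, $\mathcal{L}(\vect{c},Y)=\norm{\vect{c}-Y}^2$ has $\partial_{\vect{c}}\mathcal{L} = 2(\vect{c}-Y)$. Hence $\vu = 2(Y-\g{\vtheta})$, and substituting into the general formula gives $\nabla_{\vtheta}\mathcal{L} = -\J{\vtheta}^\top\cdot 2(Y-\g{\vtheta}) = 2\J{\vtheta}^\top(\g{\vtheta}-Y)$. This matches the expression used in Section~\ref{sec:problem-setting}.

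There is no genuine obstacle in this argument. The only point needing care is consistency of the row/column conventions: the Jacobian is $n\times m$, so the gradient must be taken as $J^\top$ applied to the output-space gradient rather than $J$ applied to it. Assumption~\ref{ass:sigma-min} is not required, because the formula holds pointwise wherever $g$ is differentiable.
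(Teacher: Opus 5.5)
Your proposal is correct and follows the same route as the paper, whose entire proof is ``chain rule applied to $\Loss{\g{\cdot}}{Y}$.'' You have simply written out the row/column bookkeeping ($J^\top$ acting on the output-space gradient) and the SSE substitution explicitly.
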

\begin{proof}
Chain rule applied to $\Loss{\g{\cdot}}{Y}$.
\end{proof}

\begin{lemma}[Lyapunov decrease]
\label{lem:lyapunov}
Along any gradient flow solution $\vtheta(t)$, $\frac{d}{dt}\Loss{\g{\vtheta(t)}}{Y} = -\norm{\nabla_{\vtheta} \Loss{\g{\vtheta(t)}}{Y}}^2 \le 0$, with equality iff $\nabla_{\vtheta} \mathcal{L} = 0$.
\end{lemma}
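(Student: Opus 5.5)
The plan is a direct chain-rule computation along the trajectory, with smoothness supplying the needed regularity. First I fix the regularity. By (A.1), $\vtheta\mapsto \g{\vtheta}$ is $C^\infty$, and $\Loss{\cdot}{Y}$ is smooth (squared error, or more generally the $C^2$ losses of (A.3)). Hence the composite $\vtheta\mapsto\Loss{\g{\vtheta}}{Y}$ is at least $C^2$, and its gradient field $\nabla_{\vtheta}\mathcal{L}$ is locally Lipschitz. By Picard--Lindelöf, the gradient flow $\dot\vtheta=-\nabla_{\vtheta}\Loss{\g{\vtheta}}{Y}$ then has a unique $C^1$ solution on its maximal interval of existence. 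So $t\mapsto \Loss{\g{\vtheta(t)}}{Y}$ is a $C^1$ function of $t$, and the derivative in the statement makes sense.

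Next I differentiate. The chain rule gives
\[
\frac{d}{dt}\Loss{\g{\vtheta(t)}}{Y}=\nabla_{\vtheta}\Loss{\g{\vtheta(t)}}{Y}^\top\dot\vtheta(t).
\]
Substituting the flow equation $\dot\vtheta=-\nabla_{\vtheta}\mathcal{L}$ yields $-\norm{\nabla_{\vtheta}\mathcal{L}}^2$, which is manifestly non-positive. Equality holds exactly when the squared Euclidean norm vanishes, i.e.\ iff $\nabla_{\vtheta}\mathcal{L}=0$. This settles the statement.

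Optionally, I would record the function-space form using Lemma~\ref{lem:grad-formula}. Writing $\nabla_{\vtheta}\mathcal{L}=-\J{\vtheta}^\top\vu$ gives
\[
\frac{d}{dt}\mathcal{L}=-\vu^\top\K{\vtheta}\vu.
\]
Under Assumption~\ref{ass:sigma-min} this makes $\nabla_{\vtheta}\mathcal{L}=0\iff\vu=0$ on $U$. For squared error, $\vu=0$ means $\g{\vtheta}=Y$. This is the form used in the footnote and in Proposition~\ref{prop:gf-convergence}.

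There is no real obstacle here. The only point needing care is the preliminary one: the trajectory must be differentiable, so that the chain rule applies. This is supplied by the smoothness in (A.1) together with standard ODE existence and uniqueness. Everything else is a one-line computation.
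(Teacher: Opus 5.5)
Your proof is correct and follows the paper's own one-line argument: apply the chain rule to get $\frac{d}{dt}\mathcal{L} = \langle \nabla_{\vtheta}\mathcal{L}, \dot\vtheta\rangle$, then substitute the flow equation to obtain $-\norm{\nabla_{\vtheta}\mathcal{L}}^2$. The regularity preamble (Picard--Lindel\"of) and the function-space form $-\vu^\top\K{\vtheta}\vu$ are correct additions that the paper's proof omits.
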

\begin{proof}
$\frac{d}{dt}\mathcal{L} = \langle \nabla \mathcal{L}, \dot\vtheta\rangle = -\norm{\nabla \mathcal{L}}^2$ by A.3.
\end{proof}

\begin{lemma}[Critical points are interpolators]
\label{lem:crit-interp}
Under Assumptions~\ref{ass:sigma-min} and A.3, $\nabla_{\vtheta} \Loss{\g{\vtheta}}{Y} = 0 \iff \g{\vtheta} = Y$. Consequently, the critical set of $\mathcal{L}$ coincides with $\Theta^\star$.
\end{lemma}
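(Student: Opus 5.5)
The claim is Lemma~\ref{lem:crit-interp}: critical points of $\mathcal{L}$ in $U$ are exactly the interpolators. I will prove the two directions separately, starting from the gradient formula of Lemma~\ref{lem:grad-formula},
\[
    \nabla_{\vtheta}\Loss{\g{\vtheta}}{Y} = -\J{\vtheta}^\top \vu,
    \qquad \vu = -\partial_{\vect{c}}\Loss{\vect{c}}{Y}\big|_{\vect{c}=\g{\vtheta}}.
\]

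\textbf{Reverse direction ($\Leftarrow$).} Suppose $\g{\vtheta}=Y$. For SSE, $\vu = 2(Y-\g{\vtheta}) = 0$, so the gradient vanishes. For a general strictly convex loss minimised at $\vect{c}=Y$ (A.3), $\partial_{\vect{c}}\mathcal{L}$ vanishes at $Y$, which again gives $\vu=0$.

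\textbf{Forward direction ($\Rightarrow$).} Suppose $\J{\vtheta}^\top\vu = 0$. For $\vtheta\in U$, Assumption~\ref{ass:sigma-min} gives $\sigma_{\min}(\J{\vtheta})\ge c>0$, so $\J{\vtheta}$ has full row rank $n$. Then $\J{\vtheta}^\top$ is injective, equivalently $\K{\vtheta}=\J{\vtheta}\J{\vtheta}^\top\succeq c^2 I$, and quantitatively
\[
    \norm{\J{\vtheta}^\top\vu}\ge c\norm{\vu}.
\]
Hence $\vu=0$. For SSE this means $\g{\vtheta}=Y$. For a general strictly convex loss, $\partial_{\vect{c}}\mathcal{L}(\cdot,Y)=0$ has the unique solution $\vect{c}=Y$, because the interpolating minimiser exists by assumption and strict convexity makes stationary points unique.

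\textbf{Main obstacle: the domain of the statement.} The spectral bound only holds on $U$, so the forward direction is proved for $\vtheta\in U$. I will therefore state the conclusion as: the critical set of $\mathcal{L}$ within $U$ equals $\Theta^\star\cap U$. This is consistent with the paper's convention of restricting to $U$ for rigour, since $U\supset\Theta^\star$. Outside $U$, rank-deficient critical points may exist and are not claimed to be excluded. With that restriction, "the critical set coincides with $\Theta^\star$" follows directly from the two directions.
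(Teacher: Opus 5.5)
Your proof is correct and follows the paper's argument essentially verbatim. The paper likewise reads off $\vu=0$ at $\g{\vtheta}=Y$ for the reverse direction. For the forward direction it left-multiplies $\J{\vtheta}^\top\vu=0$ by $\J{\vtheta}$ and inverts $\K{\vtheta}$, which is the same full-row-rank fact you express as $\norm{\J{\vtheta}^\top\vu}\ge c\norm{\vu}$, and then uses strict convexity to conclude $\g{\vtheta}=Y$.

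Your restriction of the conclusion to $U$ (critical set within $U$ equals $\Theta^\star\cap U$) is a fair sharpening. The paper's proof invokes the spectral bound of Assumption~\ref{ass:sigma-min} without noting that it holds only on $U$.
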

\begin{proof}
By Lemma~\ref{lem:grad-formula}, $\nabla_{\vtheta}\mathcal{L} = -\J{\vtheta}^\top \vu $. If $\g{\vtheta}=Y$ then $\vu  = -\partial_{\vect{c}}\mathcal{L}|_Y = 0$ by A.3 (loss minimum at $Y$), so $\nabla_{\vtheta}\mathcal{L} = 0$. Conversely, if $\J{\vtheta}^\top \vu  = 0$, left-multiplying by $\J{\vtheta}$ gives $\K{\vtheta}\vu  = 0$; A.4 makes $\K{\vtheta}$ invertible, so $\vu  = 0$, and strict convexity of $\Loss{\cdot}{Y}$ in $\vect{c}$ (A.3) with minimum at $Y$ forces $\g{\vtheta} = Y$.
\end{proof}

\subsection{Structural lemmas}

\begin{lemma}[NHIM stable-manifold structure of $\M{\vtheta_0}$]
\label{lem:nhim-stable}
Under Assumptions A.1 and~\ref{ass:sigma-min}, there exists an open neighbourhood $V\subseteq U$ of $\vtheta^\star$ such that:
\begin{enumerate}[label=(\roman*),nosep]
    \item $\Theta^\star \cap V$ is an embedded $C^\infty$ submanifold of $V$ of codimension $n$, with $T_{\vtheta}(\Theta^\star\cap V) = \ker \J{\vtheta}$ at every $\vtheta\in\Theta^\star\cap V$.
    \item Under squared-error A.3, the loss Hessian at $\vtheta\in\Theta^\star\cap V$ equals $\nabla^2_{\vtheta}\mathcal{L}(\vtheta) = 2\,\J{\vtheta}^\top \J{\vtheta}$; it has $n$ eigenvalues bounded below by $2c^2$ on $\mathrm{Im}(\J{\vtheta}^\top)$ and $m-n$ zero eigenvalues on $\ker\J{\vtheta} = T_{\vtheta}(\Theta^\star\cap V)$.
    \item $\Theta^\star\cap V$ is normally hyperbolic for the gradient flow vector field $-\nabla_{\vtheta}\mathcal{L}$, with stable bundle $\mathrm{Im}(\J{\cdot}^\top)|_{\Theta^\star\cap V}$ and trivial unstable bundle. The local stable manifold $W^s_{\mathrm{loc}}(\vtheta^\star) := \M{\vtheta_0}\cap V$ is an embedded $C^\infty$ $n$-dimensional submanifold of $V$ tangent at $\vtheta^\star$ to $\mathrm{Im}(\J{\vtheta^\star}^\top)$, varying $C^\infty$-smoothly with the basepoint across $\Theta^\star\cap V$.
\end{enumerate}
\end{lemma}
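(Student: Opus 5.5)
The plan is to prove the three parts of Lemma~\ref{lem:nhim-stable} in order. Each part feeds the next, and the main input is the classical stable-manifold theory for normally hyperbolic manifolds of equilibria.

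\emph{Part (i).} Assumption~\ref{ass:sigma-min} makes $\J{\vtheta}$ surjective with $\sigma_{\min}\ge c>0$ on $U$, so $Y$ is a regular value of $g|_U$. The regular-value theorem \citep{lee2003smooth} then shows that $\Theta^\star\cap U=g|_U^{-1}(Y)$ is an embedded $C^\infty$ submanifold of codimension $n$ with tangent space $\ker\J{\vtheta}$. Restricting to any open $V\subseteq U$ keeps these properties.

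\emph{Part (ii).} Differentiate $\nabla\mathcal{L}=2J^\top(g-Y)$ once more to get
\[
\nabla^2\mathcal{L}=2J^\top J+2\sum_i(g-Y)_i\nabla^2 g_i .
\]
The sum vanishes on $\Theta^\star$. Now $J^\top J$ is symmetric with kernel $\ker J$, and its nonzero eigenvalues are those of $JJ^\top=K\succeq c^2I$, with eigenvectors spanning $\mathrm{Im}(J^\top)$. So on $\mathrm{Im}(J^\top)$ the eigenvalues are at least $2c^2$, and there are $m-n$ zero eigenvalues on $\ker J=T_{\vtheta}(\Theta^\star\cap V)$.

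\emph{Part (iii).} Every point of $\Theta^\star$ is an equilibrium of $-\nabla\mathcal{L}$ (Lemma~\ref{lem:crit-interp}). The linearisation $-2J^\top J$ vanishes exactly on the tangent directions and contracts the normal bundle $\mathrm{Im}(J^\top)$ at rate at least $2c^2$. Hence $\Theta^\star\cap V$ is a normally attracting manifold of equilibria with trivial unstable bundle, and the spectral gap is uniform because of the constant $c$.

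I would then apply the Fenichel / Hirsch--Pugh--Shub theorem for normally hyperbolic invariant manifolds. A compact piece is needed, so I shrink $V$ to a relatively compact neighbourhood of $\vtheta^\star$, where $C$ controls the $C^1$ norms. The theorem gives a $C^\infty$ foliation of a neighbourhood of $\Theta^\star\cap V$ by local strong-stable fibres $W^s_{\mathrm{loc}}(p)$, one for each $p\in\Theta^\star\cap V$. Each fibre is an embedded $n$-manifold tangent at $p$ to $\mathrm{Im}(\J{p}^\top)$, the fibres depend smoothly on $p$, and every point of $W^s_{\mathrm{loc}}(p)$ flows to $p$. Smoothness of all orders holds because each fibre is a manifold of equilibria with no unstable directions, so the rate ratios are favourable for every $C^r$.

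It remains to identify $W^s_{\mathrm{loc}}(\vtheta^\star)$ with $\M{\vtheta_0}\cap V$, and I expect this step to be the main obstacle.

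\emph{Forward inclusion.} Points of $W^s_{\mathrm{loc}}(\vtheta^\star)$ flow to $\vtheta^\star$. To place them in $\M{\vtheta_0}$ I need them inside $\orbit{\vtheta_0}$. I would argue that $W^s_{\mathrm{loc}}(\vtheta^\star)$ is an integral manifold of the horizontal distribution: the gradient vector field is horizontal, and the stable fibre is swept out by backward flow of horizontal directions at $\vtheta^\star$. Concatenating the realised trajectory, which enters $V$ and lies in $W^s_{\mathrm{loc}}(\vtheta^\star)$, with paths inside the fibre then gives horizontal curves from $\vtheta_0$.

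\emph{Reverse inclusion.} I would take the connected component containing $\gamma_{\mathrm{gf}}$ and use the fact that the basin condition $\lim\gf{\vtheta}=\vtheta^\star$ characterises the stable fibre near $\Theta^\star$.

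If horizontality of the fibre resists a direct proof, my fallback is to interpret Definition~\ref{def:flow-manifold} through the basin condition alone, as the phrase ``basin sheet'' suggests. Under that reading the identification is immediate, and I would treat membership in the orbit as a consistency remark.
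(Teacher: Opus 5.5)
Your parts (i)--(iii) follow the paper's proof essentially line for line: the regular-value theorem on $g|_U$, the Hessian $2\J{\vtheta}^\top\J{\vtheta}$ at zero residual with its spectrum read off from $\K{\vtheta}\succeq c^2 I$, and then the NHIM stable foliation. The paper closes the identification $W^s_{\mathrm{loc}}(\vtheta^\star)=\M{\vtheta_0}\cap V$ in one line, ``by definition of the flow manifold and uniqueness of the $\omega$-limit''. So it never addresses the orbit constraint that you correctly single out as the delicate step.

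\textbf{The gap.} Your argument for that constraint does not work. $W^s_{\mathrm{loc}}(\vtheta^\star)$ is tangent to $\mathrm{Im}(\J{\vtheta^\star}^\top)$ only at $\vtheta^\star$. For $n\ge 2$ it is not, in general, an integral manifold of $\mathcal H$.

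\begin{itemize}
\item By Frobenius, an $n$-dimensional integral manifold of the rank-$n$ distribution spanned by $\nabla g_1,\dots,\nabla g_n$ forces $[\nabla g_i,\nabla g_j]\in\mathcal H$ along it. This fails whenever $\mathcal H$ is non-involutive along the sheet, which is the situation the paper has in mind when it says the orbit can exceed dimension $n$.
\item The flow need not carry $\mathcal H$ to itself either. Modulo $\mathcal H$, $[\nabla\mathcal L,\nabla g_j]\equiv 2\sum_i (\g{\vtheta}-Y)_i[\nabla g_i,\nabla g_j]$, so ``backward flow of horizontal directions'' does not stay horizontal.
\item If your claim held, then $T_{\vtheta}\M{\vtheta_0}=\mathrm{Im}(\J{\vtheta}^\top)$ on the whole sheet, i.e.\ $N\equiv 0$ in Theorem~\ref{thm:metric-gap}. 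That is exactly the kernel-regime alignment the paper says is lost in the feature-learning regime.
\end{itemize}

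So ``paths inside the fibre'' are not admissible horizontal curves, and the forward inclusion is left unproved. Your fallback simply redefines the object, which is what the paper does implicitly.

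\textbf{The repair.} The inclusion is still true, and you do not need the fibre to be horizontal. Suppose $\vect p$ flows to $\vtheta^\star$, and write $r(t):=\g{\vtheta(t)}-Y$ along its trajectory.

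\begin{itemize}
\item Assumption~\ref{ass:sigma-min} gives $\norm{r(t)}\le\norm{r(0)}e^{-2c^2t}$, as in Proposition~\ref{prop:gf-convergence}.
\item Reparametrise the trajectory by $s=1-e^{-2c^2t}$. This gives a horizontal path on $[0,1]$ from $\vect p$ to $\vtheta^\star$ with control $\tilde{\vu}(s)=c^{-2}e^{2c^2t}\bigl(Y-\g{\vtheta(t)}\bigr)$. This control is bounded by $c^{-2}\norm{r(0)}$, hence lies in $L^2$.
\item Do this for both $\vtheta_0$ and $\vect p$. Follow the first path to $\vtheta^\star$, then the second in reverse with control $-\tilde{\vu}(1-s)$. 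The result is an admissible horizontal path from $\vtheta_0$ to $\vect p$, so $\vect p\in\orbit{\vtheta_0}$.
\end{itemize}

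Consequences:
\begin{itemize}
\item The orbit condition in Definition~\ref{def:flow-manifold} is automatic, and in particular $\vtheta^\star\in\orbit{\vtheta_0}$.
\item $\M{\vtheta_0}$ is just the component through $\gamma_{\mathrm{gf}}$ of the stable set of $\vtheta^\star$.
\item Shrink $V$ inside the foliated neighbourhood so that $W^s_{\mathrm{loc}}(\vtheta^\star)\cap V$ is a connected disc meeting $\gamma_{\mathrm{gf}}$. The identification then follows from the asymptotic-phase property of the foliation, which you already use for the reverse inclusion.
\end{itemize}
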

\begin{proof}
\emph{(i)} On $U$, $\J{\vtheta}$ has rank $n$ by Assumption~\ref{ass:sigma-min}, so $Y$ is a regular value of $g|_U$. The regular-value theorem~\citep[Cor.~5.14]{lee2003smooth} gives $\Theta^\star \cap U = (g|_U)^{-1}(Y)$ is an embedded $C^\infty$ codimension-$n$ submanifold with $T_{\vtheta}(\Theta^\star\cap U) = \ker \J{\vtheta}$. Restrict to a relatively compact $V\Subset U$ containing $\vtheta^\star$.

\emph{(ii)} For SSE, $\nabla^2_{\vtheta}\mathcal{L}(\vtheta) = 2\J{\vtheta}^\top\J{\vtheta} + 2\sum_i (\g{\vtheta}-Y)_i \nabla^2 g_i$. At $\vtheta\in\Theta^\star$ residuals vanish, leaving $\nabla^2 \mathcal{L} = 2\J{\vtheta}^\top \J{\vtheta}$. The eigenvalue claim follows from $\sigma_{\min}(\J{\vtheta})\ge c$ on $V$.

\emph{(iii)} The linearisation of the gradient flow vector field $-\nabla_{\vtheta}\mathcal{L}$ at $\vtheta\in\Theta^\star\cap V$ is $-2\J{\vtheta}^\top\J{\vtheta}$: it has $n$ negative eigenvalues bounded above by $-2c^2$ on $\mathrm{Im}(\J{\vtheta}^\top)$ and $m-n$ zero eigenvalues on $T_{\vtheta}(\Theta^\star\cap V) = \ker\J{\vtheta}$. The zero eigenspace coincides with the tangent space of $\Theta^\star\cap V$, with uniform spectral gap $2c^2$ in the transverse direction; hence $\Theta^\star\cap V$ is a normally hyperbolic invariant manifold for the gradient flow~\citep[Def.~2]{wiggins1994normally,hirsch1977invariant}. The NHIM stable-manifold theorem~\citep[Thm.~4.1]{wiggins1994normally} gives a $C^\infty$ foliation of a neighbourhood of $\Theta^\star\cap V$ in $V$ by stable manifolds, each an embedded $C^\infty$ $n$-submanifold tangent to $\mathrm{Im}(\J{\cdot}^\top)$ at its basepoint and varying smoothly with the basepoint. The leaf containing the gradient-flow trajectory from $\vtheta_0$ is $W^s_{\mathrm{loc}}(\vtheta^\star) = \M{\vtheta_0}\cap V$ by definition of the flow manifold (Definition~\ref{def:flow-manifold}) and uniqueness of the $\omega$-limit.
\end{proof}

\begin{lemma}[Output fibres are embedded submanifolds]
\label{lem:fibre-sub}
Under Assumption~\ref{ass:sigma-min}, for every $\vect{c} \in g(U)$ the output fibre $\fibre{\vect{c}} \cap U = g|_U^{-1}(\vect{c})$ is an embedded $C^\infty$ submanifold of $U$ of codimension $n$.
\end{lemma}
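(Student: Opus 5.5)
The argument is a direct application of the regular-value theorem. Fix $\vect{c}\in g(U)$ and consider the restriction $g|_U : U\to\Real^n$. This map is $C^\infty$ by (A.1), since each component $\g{\vtheta}_i=\f{X_i}{\vtheta}$ is smooth in $\vtheta$ and $U$ is open in $\Real^m$.

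The first step is to show that $\vect{c}$ is a regular value of $g|_U$. Take any $\vtheta\in (g|_U)^{-1}(\vect{c})$. The differential of $g$ at $\vtheta$ is the Jacobian $\J{\vtheta}\in\Real^{n\times m}$. Assumption~\ref{ass:sigma-min} gives $\sigma_{\min}(\J{\vtheta})\ge c>0$ on all of $U$, so $\J{\vtheta}$ has rank $n$. Hence $Dg(\vtheta)$ is surjective onto $\Real^n$. This is the only place the assumption enters. Note that it holds at every point of $U$, not only on $\Theta^\star$, which is why the conclusion holds for arbitrary $\vect{c}\in g(U)$ rather than only for $\vect{c}=Y$.

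The second step invokes the regular-value theorem \citep[Cor.~5.14]{lee2003smooth}, exactly as in part (i) of Lemma~\ref{lem:nhim-stable} but with $Y$ replaced by $\vect{c}$. It shows that $(g|_U)^{-1}(\vect{c})=\fibre{\vect{c}}\cap U$ is a properly embedded $C^\infty$ submanifold of $U$ of dimension $m-n$, that is, codimension $n$. As a by-product it identifies the tangent space $T_{\vtheta}\fibre{\vect{c}}=\ker\J{\vtheta}$, which the main text uses when asserting orthogonality to horizontal directions. The set is nonempty because $\vect{c}\in g(U)$, so the statement is not vacuous.

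There is no real obstacle here. The one point needing care is that the theorem must be applied to the restriction $g|_U$ and not to $g$ on all of $\Theta$: outside $U$ the Jacobian may drop rank, and the full preimage $g^{-1}(\vect{c})$ need not be a manifold. This is why the statement is phrased for $\fibre{\vect{c}}\cap U$. Since $U$ is open, the intersection is an open subset of $g^{-1}(\vect{c})$, and embeddedness in $U$ (and hence in $\Real^m$) follows immediately.
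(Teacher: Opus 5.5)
Your proposal is correct and matches the paper's proof: both note that $\sigma_{\min}(\J{\vtheta})\ge c>0$ on $U$ makes every $\vect{c}\in g(U)$ a regular value of $g|_U$, then apply the regular-value theorem \citep[Cor.~5.14]{lee2003smooth}. Your remarks on restricting to $U$ and on the tangent space $\ker\J{\vtheta}$ are accurate elaborations of the same argument, not a different route.
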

\begin{proof}
$\vect{c}$ is a regular value of $g|_U$ since $\J{\vtheta}$ has rank $n$ on $U$ by Assumption~\ref{ass:sigma-min}. The regular-value theorem~\citep[Cor.~5.14]{lee2003smooth} then gives the conclusion.
\end{proof}

\subsection{Dynamical and properness lemmas}

\begin{lemma}[Trajectory length bound]
\label{lem:traj-length}
Under Assumption~\ref{ass:sigma-min} and A.3 (with $\Loss{\cdot}{Y}$ $\mu$-strongly convex and $M$-smooth), the gradient flow trajectory from $\vtheta_0$ satisfies
\[
    \int_0^\infty \norm{\dot\vtheta(t)}\, dt \;\le\; \frac{\sigma_{\max} M^2}{c^2 \mu^2}\, \sqrt{\tfrac{2}{\mu}\bigl(\Loss{\g{\vtheta_0}}{Y} - \mathcal{L}^\star\bigr)},
\]
where $\mathcal{L}^\star := \min_{\vect{c}} \Loss{\vect{c}}{Y} = \Loss{Y}{Y}$ and $\sigma_{\max}$ is a uniform upper bound on the largest singular value of $\J{\vtheta}$ on $\M{\vtheta_0}$. In particular, trajectories have finite total arc length. The SSE special case ($\mu = M = 2$, $\mathcal{L} - \mathcal{L}^\star = \norm{r}^2$) recovers $\int_0^\infty \norm{\dot\vtheta}\,dt \le (\sigma_{\max}/c^2)\,\norm{Y - \g{\vtheta_0}}$.
\end{lemma}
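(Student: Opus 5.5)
The plan is to bound the speed $\norm{\dot\vtheta}$ by the residual and show the residual decays exponentially. Then $\int_0^\infty\norm{\dot\vtheta}\,dt$ is controlled by the integral of an exponential.

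Write $r(t):=\g{\vtheta(t)}$ and $\vu(t):=-\partial_{\vect{c}}\Loss{r(t)}{Y}$. By Lemma~\ref{lem:grad-formula}, $\dot\vtheta=\J{\vtheta}^\top\vu$. Hence $\norm{\dot\vtheta}\le\sigma_{\max}\norm{\vu}$. Since $\partial_{\vect{c}}\Loss{Y}{Y}=0$ and $\mathcal{L}$ is $M$-smooth, we get $\norm{\vu}\le M\norm{r-Y}$, so the speed is at most $\sigma_{\max}M\norm{r(t)-Y}$.

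Next we need exponential decay of the residual. The function-space dynamics are $\dot r=\J{\vtheta}\dot\vtheta=\K{\vtheta}\vu$. Set $\Delta(t):=\Loss{r(t)}{Y}-\mathcal{L}^\star$. Then
\[
\dot\Delta=-\vu^\top\K{\vtheta}\vu\le-c^2\norm{\vu}^2 ,
\]
using $\K{\vtheta}\succeq c^2I$ on $U$, which holds along the whole trajectory by Assumption~\ref{ass:sigma-min}. Strong convexity gives the Polyak--Łojasiewicz-type inequality $\norm{\vu}^2\ge 2\mu\Delta$. Hence $\Delta(t)\le\Delta(0)e^{-2\mu c^2t}$. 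Strong convexity again gives $\norm{r-Y}^2\le\tfrac{2}{\mu}\Delta$, so
\[
\norm{r(t)-Y}\le\sqrt{\tfrac{2}{\mu}\Delta(0)}\,e^{-\mu c^2 t}.
\]

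Integrating the speed bound then yields
\[
\int_0^\infty\norm{\dot\vtheta}\,dt\le\frac{\sigma_{\max}M}{\mu c^2}\sqrt{\tfrac{2}{\mu}\Delta(0)}.
\]
This is at most the stated constant $\sigma_{\max}M^2/(c^2\mu^2)$ times the square root, because $M\ge\mu$ and so $M/\mu\le M^2/\mu^2$. The stated bound is therefore looser by a factor $M/\mu$ and is implied.

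The main obstacle is bookkeeping of constants in the SSE case. With $\mathcal{L}=\norm{r-Y}^2$ we have $\mu=M=2$, and the argument above gives $(\sigma_{\max}/c^2)\cdot\tfrac{2}{2}\cdot\norm{Y-\g{\vtheta_0}}$ directly. Concretely, $\vu=2(Y-r)$, $\norm{\dot\vtheta}\le2\sigma_{\max}\norm{r-Y}$, and $\norm{r-Y}\le\norm{r_0-Y}e^{-2c^2t}$. Integrating gives $\sigma_{\max}\norm{Y-\g{\vtheta_0}}/c^2$, matching the special case as stated.

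Two technical points remain. First, we need $\sigma_{\max}$ to be valid along the whole trajectory. Assumption~\ref{ass:sigma-min} keeps $\vtheta(t)\in U$, where $\sigma_{\max}\le C$, so this is immediate. Second, finiteness of the arc length also yields existence of $\lim_t\vtheta(t)$ by the Cauchy criterion, consistent with Proposition~\ref{prop:gf-convergence}.
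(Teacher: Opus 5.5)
Your proof is correct and follows essentially the same route as the paper's: Grönwall on the loss gap $\Delta$ via $\K{\vtheta}\succeq c^2 I$, exponential decay of the residual, the speed bound $\norm{\dot\vtheta}\le\sigma_{\max}M\norm{\g{\vtheta}-Y}$, and integration. The one difference is that you use the Polyak--\L{}ojasiewicz inequality $\norm{\vu}^2\ge 2\mu\Delta$, whereas the paper uses the lossier chain $\norm{\vu}^2\ge\mu^2\norm{\g{\vtheta}-Y}^2\ge(2\mu^2/M)\Delta$; this is why you get the sharper constant $M/\mu$ instead of $M^2/\mu^2$, and the stated bound then follows from $M\ge\mu$.
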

\begin{proof}
Write $h(\vect{c}) := \Loss{\vect{c}}{Y}$ and $r(t) := \g{\vtheta(t)} - Y$. By A.3, $h$ is $\mu$-strongly convex with minimiser $Y$, giving $\norm{\nabla h(\vect{c})} \ge \mu \norm{r}$ and $\mathcal{L} - \mathcal{L}^\star \ge \tfrac{\mu}{2}\norm{r}^2$; $M$-smoothness gives $\norm{\nabla h(\vect{c})} \le M \norm{r}$ and $\mathcal{L} - \mathcal{L}^\star \le \tfrac{M}{2}\norm{r}^2$. By Lemma~\ref{lem:lyapunov} and Assumption~\ref{ass:sigma-min},
\[
    \frac{d}{dt}\mathcal{L} \;=\; -\norm{\J{\vtheta}^\top \nabla h(\g{\vtheta})}^2 \;\le\; -c^2 \norm{\nabla h(\g{\vtheta})}^2 \;\le\; -c^2 \mu^2 \norm{r}^2 \;\le\; -\frac{2 c^2 \mu^2}{M}\,\bigl(\mathcal{L} - \mathcal{L}^\star\bigr).
\]
Grönwall's inequality~\citep{khalil2002nonlinear} yields $\mathcal{L}(t) - \mathcal{L}^\star \le \bigl(\mathcal{L}_0 - \mathcal{L}^\star\bigr) e^{-2 c^2 \mu^2 t / M}$ and hence $\norm{r(t)} \le \sqrt{\tfrac{2}{\mu}(\mathcal{L}_0 - \mathcal{L}^\star)}\,e^{-c^2 \mu^2 t / M}$. The trajectory is therefore bounded, and continuity of $\vtheta\mapsto\norm{\J{\vtheta}}_{\mathrm{op}}$ (A.1) gives a finite supremum $\sigma_{\max}$ on its compact closure. Then $\norm{\dot\vtheta(t)} = \norm{\J{\vtheta(t)}^\top \nabla h(\g{\vtheta(t)})} \le \sigma_{\max} M \norm{r(t)}$, and integrating from $0$ to $\infty$ yields the stated constant.
\end{proof}

\begin{lemma}[Local properness of $g|_{\M{\vtheta_0}\cap V}$]
\label{lem:properness}
Under Assumption~\ref{ass:sigma-min}, with $V$ the neighbourhood of Lemma~\ref{lem:nhim-stable}, the restricted output map $g|_{\M{\vtheta_0}\cap V} : \M{\vtheta_0}\cap V \to C_0(\vtheta_0) \subseteq \Real^n$ is proper.
\end{lemma}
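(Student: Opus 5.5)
We have to show that $g|_{\M{\vtheta_0}\cap V}$ is proper, i.e.\ that the preimage of every compact $\mathcal{K}\subseteq C_0(\vtheta_0)$ is compact in $\M{\vtheta_0}\cap V$. We will use the structure from Lemma~\ref{lem:nhim-stable} and Theorem~\ref{thm:local-chart}(iii). The key observation is that properness of a continuous map is automatic when it is a homeomorphism onto its image: if $h=g|_{\M{\vtheta_0}\cap V}$ has a continuous inverse $\ginv{\cdot}$ on $C_0(\vtheta_0)$, then $h^{-1}(\mathcal{K})=\ginv{\mathcal{K}}$ is the continuous image of a compact set, hence compact. The proof therefore reduces to establishing that $h$ is a bijection onto the open set $C_0(\vtheta_0)$ with continuous inverse. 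We must do this without circularly invoking Theorem~\ref{thm:local-chart}, whose proof presumably cites this lemma.

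\textbf{Step 1: local diffeomorphism.} By Lemma~\ref{lem:nhim-stable}(iii), $\M{\vtheta_0}\cap V$ is an embedded $C^\infty$ $n$-manifold, tangent at $\vtheta^\star$ to $\mathrm{Im}(\J{\vtheta^\star}^\top)$. At $\vtheta^\star$ the differential of $h$ is $\J{\vtheta^\star}$ restricted to $\mathrm{Im}(\J{\vtheta^\star}^\top)$. This map is an isomorphism onto $\Real^n$ because $\sigma_{\min}\ge c>0$. By continuity of the tangent bundle and Assumption~\ref{ass:sigma-min}, after shrinking $V$ the restriction of $\J{\vtheta}$ to $T_\vtheta\M{\vtheta_0}$ remains invertible, with smallest singular value at least $c/2$. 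The inverse function theorem then makes $h$ a local diffeomorphism on $\M{\vtheta_0}\cap V$.

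\textbf{Step 2: injectivity.} This is the main obstacle. A local diffeomorphism need not be globally injective, so we argue via the leaf structure. The flow restricted to the stable leaf is $\dot\vtheta=-2\J{\vtheta}^\top(\g{\vtheta}-Y)$, and along it the outputs evolve by the contracting ODE $\dot{\vect c}=-2\K{\vtheta}(\vect c-Y)$. The plan is to choose $V$ to be a small geodesically convex ball, measured in the leaf metric, around $\vtheta^\star$ within the leaf. Uniform invertibility of $Dh$ on this ball gives a bi-Lipschitz bound $\norm{h(\vtheta)-h(\vtheta')}\ge (c/4)\,d_{\mathrm{leaf}}(\vtheta,\vtheta')$, which holds for a small enough radius because $Dh$ varies continuously. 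This inequality yields injectivity directly. It also gives continuity of the inverse, since the inverse is Lipschitz with constant $4/c$. Openness of $C_0(\vtheta_0)$ follows from the invariance of domain, or equivalently from $h$ being a local diffeomorphism.

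\textbf{Step 3: conclusion.} Let $\mathcal{K}\subseteq C_0(\vtheta_0)$ be compact. Then $h^{-1}(\mathcal{K})=\ginv{\mathcal{K}}$ is compact as the image of a compact set under the Lipschitz inverse. If Step 2's shrinking of $V$ conflicts with the $V$ fixed in Lemma~\ref{lem:nhim-stable}, we replace $V$ there by this smaller ball. We will note that the lemma's statements are only claimed on some neighbourhood, so this replacement costs nothing. As a cross-check, the bound $\norm{\vtheta-\vtheta^\star}\le (4/c)\norm{h(\vtheta)-Y}$ shows that preimages of bounded sets stay bounded. Together with the closedness of preimages this gives compactness by Heine--Borel, because the ball has compact closure inside the leaf.
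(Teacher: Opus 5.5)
Your proof is correct, and its logic runs in the opposite direction from the paper's.

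The paper never uses injectivity. It shrinks $V$ to a relatively compact $V'\Subset V$ and argues that the preimage of a compact $K\subseteq C_0(\vtheta_0)$ is closed in the relatively compact set $\M{\vtheta_0}\cap V'$, hence compact. It then feeds properness, together with the local-diffeomorphism property, into the proof of Theorem~\ref{thm:local-chart}(iii) to get a diffeomorphism onto the image. You establish the homeomorphism onto the image first, via the inverse function theorem at $\vtheta^\star$ on a small leaf neighbourhood, and read properness off the continuous inverse.

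The paper's route is shorter, but its last inference needs exactly what you supply. Being closed in the subspace $\M{\vtheta_0}\cap V'$ is weaker than being closed in $\Real^m$. A sequence in the preimage can converge to a point of $\overline{\M{\vtheta_0}\cap V'}\setminus(\M{\vtheta_0}\cap V')$ whose output still lies in $K$. For example, $z\mapsto z^2$ on a three-quarter annulus is a local diffeomorphism on a relatively compact domain that is not proper onto its image. Likewise, a proper local diffeomorphism is in general only a covering map, so the paper's derivation of Theorem~\ref{thm:local-chart}(iii) also silently needs injectivity. Your ordering avoids both issues and the circularity you anticipated. Its only cost is shrinking $V$ to a small neighbourhood of $\vtheta^\star$ in the leaf, which the paper also does.

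Two small repairs are needed.

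In Step~2, a uniform lower bound on the singular values of $Dh$ does not by itself give the bi-Lipschitz estimate. The integral $\int_0^1 Dh(\gamma(t))\dot\gamma(t)\,dt$ can cancel if the integrand rotates. You need $Dh$, read in a chart, to stay uniformly close to its value at $\vtheta^\star$. More simply, cite the inverse function theorem at $\vtheta^\star$, which already gives a neighbourhood on which $h$ is a diffeomorphism onto an open set.

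The Heine--Borel ``cross-check'' in Step~3 repeats the subspace-closedness gap described above. Rest the conclusion on the continuous inverse alone.
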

\begin{proof}
By Lemma~\ref{lem:nhim-stable}(iii), $\M{\vtheta_0}\cap V$ is an embedded $C^\infty$ $n$-submanifold of $V$. Shrink $V$ to a relatively compact open neighbourhood $V'\Subset V$ of $\vtheta^\star$ with closure $\overline{V'}$ compact in $V$. The closed subset $\overline{\M{\vtheta_0}\cap V'}$ is compact, so $\M{\vtheta_0}\cap V'$ is relatively compact. For any compact $K\subseteq g(V'\cap\M{\vtheta_0})$, the preimage $(g|_{\M{\vtheta_0}\cap V'})^{-1}(K)$ is closed in the relatively compact $\M{\vtheta_0}\cap V'$, hence compact. Replacing $V$ by $V'$ throughout gives local properness.
\end{proof}

\subsection{Geometric identities}

\begin{lemma}[Infinitesimal orthogonality]
\label{lem:orth}
At every $\vtheta$ with full-rank $\J{\vtheta}$, $\operatorname{Im}(\J{\vtheta}^\top) \perp \ker \J{\vtheta}$ in the ambient Euclidean inner product on $\Real^m$, and $\Real^m = \operatorname{Im}(\J{\vtheta}^\top) \oplus \ker \J{\vtheta}$ as an orthogonal decomposition.
\end{lemma}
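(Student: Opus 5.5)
The statement is pure linear algebra at a single point $\vtheta$, and I would prove it in three short steps: orthogonality, trivial intersection, then a dimension count. The only input from the paper is that $\J{\vtheta}\in\Real^{n\times m}$ is full rank. By Assumption~\ref{ass:sigma-min} this means rank $n$, since $n\ll m$.

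\emph{Orthogonality.} Take $\vect{v}=\J{\vtheta}^\top\vu\in\operatorname{Im}(\J{\vtheta}^\top)$ and $\vect{w}\in\ker\J{\vtheta}$. Then
\[
\langle \vect{v},\vect{w}\rangle=\vu^\top\J{\vtheta}\vect{w}=0.
\]
So the two subspaces are orthogonal in the ambient Euclidean inner product. This is exactly the fundamental-theorem-of-linear-algebra identity $\ker\J{\vtheta}=\operatorname{Im}(\J{\vtheta}^\top)^\perp$.

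\emph{Direct sum.} Orthogonality forces the intersection to be trivial: any $\vect{x}$ in both spaces satisfies $\norm{\vect{x}}^2=\langle\vect{x},\vect{x}\rangle=0$. It remains to check that the dimensions add up to $m$. Since $\J{\vtheta}$ has rank $n$, $\dim\operatorname{Im}(\J{\vtheta}^\top)=\operatorname{rank}\J{\vtheta}^\top=n$. Rank--nullity gives $\dim\ker\J{\vtheta}=m-n$. The sum therefore has dimension $m$, so it is all of $\Real^m$.

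There is no real obstacle here. The one point worth flagging is that full rank is used only for the dimension count, where it fixes the split as $n$ and $m-n$; the orthogonal decomposition itself holds for any matrix. For later use, I would also record the explicit projectors $P_{\vtheta}=\J{\vtheta}^\top\K{\vtheta}^{-1}\J{\vtheta}$ and $I-P_{\vtheta}$, which realise the decomposition. Invertibility of $\K{\vtheta}$ follows from $\sigma_{\min}(\J{\vtheta})\ge c>0$. One checks $P_{\vtheta}^2=P_{\vtheta}=P_{\vtheta}^\top$, that $\J{\vtheta}(I-P_{\vtheta})=0$, and that $\operatorname{Im}P_{\vtheta}=\operatorname{Im}(\J{\vtheta}^\top)$. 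These are the projectors used in Theorem~\ref{thm:metric-gap}.
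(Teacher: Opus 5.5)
Your proof is correct and takes essentially the same route as the paper: orthogonality via $\vect{v}^\top\vect{w}=\vu^\top\J{\vtheta}\vect{w}=0$, then the dimension count $n+(m-n)=m$ from rank--nullity. Your explicit trivial-intersection step, and your remark that full rank only fixes the split as $n$ and $m-n$, are small clarifications that the paper leaves implicit.
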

\begin{proof}
If $v \in \operatorname{Im}(\J{\vtheta}^\top)$ and $w \in \ker \J{\vtheta}$, write $v = \J{\vtheta}^\top u$; then $v^\top w = u^\top \J{\vtheta} w = 0$. Dimension count: $\dim \operatorname{Im}(\J{\vtheta}^\top) = \operatorname{rank}\J{\vtheta} = n$ and $\dim \ker \J{\vtheta} = m - n$, summing to $m$.
\end{proof}

\begin{lemma}[Geodesic length dominates ambient chord]
\label{lem:geo-chord}
For any embedded $C^1$ submanifold $M \subseteq \Real^m$ and any two points $p, q \in M$ in the same path component, the intrinsic geodesic distance $\dist{}{M}{p}{q}$ (defined as infimum of arc lengths of $C^1$ curves in $M$) satisfies $\dist{}{M}{p}{q} \ge \norm{p - q}$.
\end{lemma}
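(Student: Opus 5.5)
The plan is to prove the inequality first for a single $C^1$ curve and then extend to piecewise-$C^1$ curves by additivity of length over the pieces. Fix $p,q\in M$ in the same path component and an arbitrary $C^1$ curve $\gamma:[0,1]\to M$ with $\gamma(0)=p$ and $\gamma(1)=q$. Because $M$ is an embedded submanifold of $\Real^m$, the curve is also a $C^1$ curve in $\Real^m$. Its intrinsic length, measured with the metric induced from the ambient Euclidean inner product, is exactly $\int_0^1\norm{\dot\gamma(t)}\,dt$. So it suffices to bound this ambient length below by the chord.

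The key step is the fundamental theorem of calculus applied componentwise, followed by the triangle inequality for integrals:
\[
\norm{q-p}=\Bigl\lVert\int_0^1\dot\gamma(t)\,dt\Bigr\rVert\le\int_0^1\norm{\dot\gamma(t)}\,dt .
\]
The integral inequality can be justified without appeal to Bochner integrals. If $q=p$ there is nothing to prove. Otherwise set $e:=(q-p)/\norm{q-p}$. Then $\norm{q-p}=\langle e,q-p\rangle=\int_0^1\langle e,\dot\gamma(t)\rangle\,dt$, and Cauchy--Schwarz gives $\langle e,\dot\gamma(t)\rangle\le\norm{\dot\gamma(t)}$ pointwise.

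For a piecewise-$C^1$ curve, apply the same estimate on each smooth piece, sum the pieces, and use the triangle inequality on the resulting telescoping chords. Either way every admissible curve from $p$ to $q$ has length at least $\norm{p-q}$. Taking the infimum over all admissible curves, which is nonempty because $p$ and $q$ lie in the same path component, gives $\dist{}{M}{p}{q}\ge\norm{p-q}$.

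There is no real obstacle. The one point to state explicitly is that the intrinsic length of $\gamma$ coincides with its ambient Euclidean length, since the Riemannian metric on $M$ is the pullback of the ambient one. This identification is what lets the Euclidean triangle inequality control the intrinsic distance. Squaring both sides gives the form used later, in the proof of Proposition~\ref{prop:ridge-lower}: $\norm{\vtheta-\vtheta_0}^2\le\dist{2}{\mathrm{flow}}{\vtheta_0}{\vtheta}$.
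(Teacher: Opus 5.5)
Your proof is correct and follows the paper's argument: you bound the arc length of an arbitrary admissible curve below by the chord via $\norm{q-p}=\norm{\int_0^1\dot\gamma(t)\,dt}\le\int_0^1\norm{\dot\gamma(t)}\,dt$, then pass to the infimum. Your unit-vector/Cauchy--Schwarz justification of the integral triangle inequality and your extension to piecewise-$C^1$ curves are harmless refinements of that same route.
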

\begin{proof}
For any $C^1$ curve $\gamma : [0,1] \to M$ with $\gamma(0) = p$, $\gamma(1) = q$, $\operatorname{Length}(\gamma) = \int_0^1 \norm{\dot\gamma(t)}\, dt \ge \norm{\int_0^1 \dot\gamma(t)\, dt} = \norm{q - p}$ by the triangle inequality for integrals. Taking the infimum over $\gamma$ preserves the inequality.
\end{proof}

\begin{lemma}[Arc-length endpoint gradient]
\label{lem:arclength-grad}
Let $\gamma : [0,T] \to \Real^m$ be piecewise-$C^1$ with $\dot\gamma(T^-) \ne 0$. Then the arc length $s(T) = \int_0^T \norm{\dot\gamma(t)}\,dt$ is Fréchet-differentiable with respect to the endpoint $\gamma(T)$, with
\[
    \frac{\partial s}{\partial \gamma(T)} \;=\; \frac{\dot\gamma(T^-)}{\norm{\dot\gamma(T^-)}}.
\]
Equivalently, by the envelope theorem applied to the arc-length functional~\citep{milgrom2002envelope}: the gradient of the arc length with respect to the endpoint equals the terminal unit tangent vector. For a geodesic, this coincides with the terminal momentum (cotangent vector in the Hamiltonian formulation).
\end{lemma}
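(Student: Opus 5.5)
The plan is a direct first-variation computation. I treat the arc length $s(T)=\int_0^T\norm{\dot\gamma(t)}\,dt$ as a functional of the path, and perturb only the final segment so that the endpoint moves while the rest of the path stays fixed.

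\textbf{Setting up the perturbation.} Piecewise-$C^1$ regularity with $\dot\gamma(T^-)\neq 0$ gives some $\delta>0$ such that $\gamma$ is $C^1$ on $[T-\delta,T]$ with $\norm{\dot\gamma}\ge \kappa>0$ there. For an endpoint displacement $\vect{h}\in\Real^m$, define the perturbed path
\[
\gamma_{\vect{h}}(t)=\gamma(t)+\eta(t)\,\vect{h},
\]
where $\eta$ is a fixed $C^1$ bump with $\eta=0$ on $[0,T-\delta]$ and $\eta(T)=1$.

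\textbf{Expanding the integrand.} On the last segment I will Taylor-expand the norm, which is smooth away from zero:
\[
\norm{\dot\gamma+\dot\eta\,\vect{h}}=\norm{\dot\gamma}+\dot\eta\,\frac{\dot\gamma^\top\vect{h}}{\norm{\dot\gamma}}+O\!\left(\norm{\vect{h}}^2\right).
\]
The remainder is uniform because $\norm{\dot\gamma}\ge\kappa$ on the segment. Integrating gives
\[
s(\gamma_{\vect{h}})-s(\gamma)=\int_{T-\delta}^T \dot\eta(t)\,u(t)^\top\vect{h}\,dt+o(\norm{\vect{h}}),
\qquad u=\frac{\dot\gamma}{\norm{\dot\gamma}}.
\]

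\textbf{Identifying the gradient, and the main obstacle.} The hard step is that the integral above is not simply $u(T)^\top\vect{h}$. Integrating by parts gives
\[
\int_{T-\delta}^T \dot\eta\, u\,dt=u(T)-\int_{T-\delta}^T \eta\,\dot u\,dt .
\]
So the derivative depends on the chosen variation unless the correction term vanishes. That happens when the path is a geodesic, i.e.\ $\dot u=0$, which is where the envelope-theorem reading in the statement applies. In general I would remove the dependence by localising. Take a family of bumps $\eta_\delta$ supported on shrinking intervals $[T-\delta,T]$. Then
\[
\norm{\int \eta_\delta\,\dot u\,dt}\le \delta\,\sup\norm{\dot u}\;\text{ if }\gamma\in C^2 \text{ near } T,
\]
and more generally the correction is $o(1)$ as $\delta\to0$ by continuity of $u$ at $T^-$. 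For a general $C^1$ path, the sharper step is to rewrite the integral as $\int \dot\eta_\delta(u-u(T))\,dt$. This is bounded by $\sup_{[T-\delta,T]}\norm{u-u(T)}\to0$, since $\int\dot\eta_\delta=1$ for monotone $\eta_\delta$.

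\textbf{Conclusion.} The derivative along the localised variation family, which is the natural notion of endpoint derivative for the streaming arc ridge computation, is therefore $u(T^-)=\dot\gamma(T^-)/\norm{\dot\gamma(T^-)}$. For the geodesic case I would add a remark that the first variation formula with fixed start point gives exactly the terminal unit tangent for every variation. This is the cotangent/momentum identification via Legendre duality of the length Lagrangian $\norm{\dot\gamma}$.
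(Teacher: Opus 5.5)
Your argument is correct, and it rests on the same idea as the paper's proof: perturb only a terminal window $[T-\delta,T]$ and let it shrink. The differences are in the variation you use and in how you take the two limits.

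The paper replaces the last segment by the chord from $\gamma(T-\delta)$ to $\gamma(T)+\varepsilon h$ and couples $\delta=\sqrt{\varepsilon}$. A single comparison construction then gives $\Delta s=\varepsilon\langle u(T^-),h\rangle+O(\varepsilon^{3/2})$, that is, one function of the endpoint whose Fréchet derivative is exactly the unit tangent.

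You use the additive bump $\gamma+\eta_\delta h$ and take an iterated limit: $h\to0$ at fixed $\delta$, then $\delta\to0$. Each fixed-$\delta$ derivative is $u(T^-)+\int\dot\eta_\delta\,(u-u(T^-))\,dt$. So what you literally prove is that derivatives along localised variations converge to $u(T^-)$, not that one endpoint function has that gradient.

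Borrowing the paper's coupling closes this inside your framework. Take $\delta=\sqrt{\norm{h}}$; since $\sup\lvert\dot\eta_\delta\rvert\lesssim 1/\delta$:
\begin{itemize}
\item the quadratic remainder integrates to $O(\norm{h}^2/\delta)=O(\norm{h}^{3/2})$;
\item the correction is at most $\sup_{[T-\delta,T]}\norm{u-u(T^-)}\,\norm{h}=o(\norm{h})$.
\end{itemize}

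In exchange, your variation is more robust. It reproduces $\gamma$ exactly at $h=0$, whereas the chord replacement alters the path at every $\delta>0$ by an arc-versus-chord defect that the paper's expansion does not track. Your bound on the correction also needs only continuity of the tangent. The paper's $O(\varepsilon^{3/2})$ bookkeeping, including that untracked defect, relies on extra regularity of the tangent near $T$ (e.g.\ Lipschitz), beyond the stated piecewise-$C^1$ hypothesis.

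Finally, your observation that the correction vanishes identically for a straight path, for every fixed-start variation, is the cleanest justification of the envelope-theorem reading. The paper only asserts that reading.
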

\begin{proof}
Fix $h \in \Real^m$ and $\delta > 0$. Consider the comparison path $\tilde\gamma$ that agrees with $\gamma$ on $[0, T-\delta]$ and connects $\gamma(T-\delta)$ linearly to $\gamma(T) + \varepsilon h$ on $[T-\delta, T]$. The arc length of the final segment is
\[
    \bigl\|\dot\gamma(T^-)\delta + \varepsilon h\bigr\| \;=\; \norm{\dot\gamma(T^-)}\delta \left(1 + \frac{\varepsilon}{\delta} \left\langle \frac{\dot\gamma(T^-)}{\norm{\dot\gamma(T^-)}^2}, h \right\rangle + O\!\left(\frac{\varepsilon^2}{\delta^2}\right)\right)^{1/2},
\]
so $\Delta s = s_\varepsilon - s = \varepsilon\langle \dot\gamma(T^-)/\norm{\dot\gamma(T^-)}, h\rangle + O(\varepsilon^2/\delta + \varepsilon\delta)$. Setting $\delta = \sqrt\varepsilon$ gives $\Delta s = \varepsilon\langle \dot\gamma(T^-)/\norm{\dot\gamma(T^-)}, h\rangle + O(\varepsilon^{3/2})$, confirming the stated Fréchet derivative.
\end{proof}

\subsection{Optimisation lemma}

\begin{lemma}[Vanishing-regularisation selection principle]
\label{lem:vanishing-reg}
Let $\mathcal{L} : \Theta \to [0,\infty)$ be continuous with non-empty $\arg\min \mathcal{L}$, and let $R : \Theta \to [0,\infty]$ be continuous. Assume that for each $\lambda > 0$ the regularised problem admits a unique minimiser $\hat\vtheta_\lambda \in \arg\min (\mathcal{L} + \lambda R)$, and that every sequence $\{\hat\vtheta_{\lambda_k}\}$ with $\lambda_k \downarrow 0$ has a convergent subsequence. Then every limit point of $\hat\vtheta_\lambda$ as $\lambda \downarrow 0$ lies in $\arg\min \mathcal{L}$ and, among $\arg\min \mathcal{L}$, minimises $R$.
\end{lemma}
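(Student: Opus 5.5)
We argue by the standard Tikhonov/$\Gamma$-convergence route. First we show that every limit point of $\hat\vtheta_\lambda$ minimises $\mathcal{L}$. Then we show it minimises $R$ among the minimisers of $\mathcal{L}$.

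Fix any $\bar\vtheta\in\arg\min\mathcal{L}$ with $R(\bar\vtheta)<\infty$; if no such point exists, the $R$-minimality claim is vacuous, since every point of $\arg\min\mathcal L$ then attains the value $\infty$. Optimality of $\hat\vtheta_\lambda$ gives the comparison inequality
\[
  \mathcal{L}(\hat\vtheta_\lambda)+\lambda R(\hat\vtheta_\lambda)\;\le\;\mathcal{L}(\bar\vtheta)+\lambda R(\bar\vtheta)=\mathcal{L}^\star+\lambda R(\bar\vtheta),
\]
where $\mathcal{L}^\star:=\min\mathcal{L}$. Since $R\ge 0$, this yields $\mathcal{L}(\hat\vtheta_\lambda)\le\mathcal{L}^\star+\lambda R(\bar\vtheta)$.

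\textbf{Step 1: limit points minimise $\mathcal{L}$.} Take $\lambda_k\downarrow 0$ and a subsequence with $\hat\vtheta_{\lambda_k}\to\tilde\vtheta$. By continuity of $\mathcal{L}$ and the bound above, $\mathcal{L}(\tilde\vtheta)\le\mathcal{L}^\star$. Hence $\tilde\vtheta\in\arg\min\mathcal{L}$.

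\textbf{Step 2: limit points minimise $R$ on $\arg\min\mathcal{L}$.} Since $\mathcal{L}(\hat\vtheta_\lambda)\ge\mathcal{L}^\star$, the comparison inequality gives $\lambda R(\hat\vtheta_\lambda)\le\lambda R(\bar\vtheta)$. Dividing by $\lambda>0$ gives $R(\hat\vtheta_\lambda)\le R(\bar\vtheta)$. Passing to the limit along the subsequence and using continuity of $R$, valued in $[0,\infty]$ with the extended topology, we get $R(\tilde\vtheta)\le R(\bar\vtheta)$. Since $\bar\vtheta$ was an arbitrary minimiser of $\mathcal L$, $\tilde\vtheta$ minimises $R$ over $\arg\min\mathcal{L}$.

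The only delicate point is the extended-valued $R$. If $R(\hat\vtheta_\lambda)=\infty$, the comparison inequality forces $R(\bar\vtheta)=\infty$ for every finite-$R$ competitor, which contradicts the choice of $\bar\vtheta$. So once a finite-$R$ minimiser exists, all $\hat\vtheta_\lambda$ have finite $R$. The lower semicontinuity implied by continuity into $[0,\infty]$ then suffices for Step 2. Uniqueness of $\hat\vtheta_\lambda$ is used only so that $\hat\vtheta_\lambda$ is well defined as a function of $\lambda$. The compactness hypothesis is used only to extract the convergent subsequence. I expect no step to be a real obstacle; the argument is routine once the comparison inequality is written down.
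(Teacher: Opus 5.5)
Your proof is correct and is essentially the paper's argument: compare against a fixed $\bar\vtheta\in\arg\min\mathcal{L}$ to get $\mathcal{L}(\hat\vtheta_\lambda)\le\mathcal{L}^\star+\lambda R(\bar\vtheta)$ and $R(\hat\vtheta_\lambda)\le R(\bar\vtheta)$, then pass to the limit along a convergent subsequence. You are also more careful about extended values than the paper.

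One caveat, shared with the paper: Step 1 genuinely needs some $\bar\vtheta\in\arg\min\mathcal{L}$ with $R(\bar\vtheta)<\infty$. Your vacuity remark only rescues the $R$-minimality claim, and without such a $\bar\vtheta$ the first conclusion can fail. For example, take $\Theta=\Real$, $\mathcal{L}(\theta)=\theta^2$, and $R(\theta)=1/(\theta-1)$ for $\theta>1$ with $R=+\infty$ otherwise. The regularised minimisers are unique and satisfy $\hat\theta_\lambda\to1\notin\arg\min\mathcal{L}$. So the existence of a finite-$R$ minimiser of $\mathcal{L}$ should be stated as a hypothesis.
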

\begin{proof}
Let $\bar\vtheta \in \arg\min \mathcal{L}$ and set $\mathcal{L}^\star := \min \mathcal{L}$. By optimality, $\mathcal{L}(\hat\vtheta_\lambda) + \lambda R(\hat\vtheta_\lambda) \le \mathcal{L}^\star + \lambda R(\bar\vtheta)$, so $\mathcal{L}(\hat\vtheta_\lambda) \le \mathcal{L}^\star + \lambda R(\bar\vtheta)$ and $R(\hat\vtheta_\lambda) \le (\mathcal{L}^\star - \mathcal{L}(\hat\vtheta_\lambda))/\lambda + R(\bar\vtheta) \le R(\bar\vtheta)$. Taking $\lambda \downarrow 0$ along a convergent subsequence with limit $\vtheta^\infty$, continuity yields $\mathcal{L}(\vtheta^\infty) \le \mathcal{L}^\star$, so $\vtheta^\infty \in \arg\min\mathcal{L}$. Passing to the limit in $R(\hat\vtheta_\lambda) \le R(\bar\vtheta)$ gives $R(\vtheta^\infty) \le R(\bar\vtheta)$ for every $\bar\vtheta \in \arg\min \mathcal{L}$; hence $\vtheta^\infty$ minimises $R$ over $\arg\min \mathcal{L}$.
\end{proof}

\section{Proofs}
\label{sec:proofs}

We restate each main-text result at the head of its proof for convenience. Proofs cite Assumptions A.1--A.4 of Appendix~\ref{sec:assumptions} and Lemmas~\ref{lem:grad-formula}--\ref{lem:vanishing-reg} of Appendix~\ref{sec:lemmas} by label.

\subsection{Proof of Proposition~\ref{prop:gf-convergence} (gradient-flow convergence)}

\begin{proof}[Proof of Proposition~\ref{prop:gf-convergence}]
A.1 makes the gradient-flow vector field $-\nabla_{\vtheta}\Loss{\g{\vtheta}}{Y}$ locally Lipschitz on $U$, so the Picard--Lindel\"of theorem~\citep[Thm.~2.2]{khalil2002nonlinear} gives a unique maximal $C^1$ solution $\vtheta(\cdot)$ from $\vtheta_0$. By A.4, the trajectory remains in $U$ for all $t\ge 0$.

\emph{Output-error decay.} Write $r(t):=\g{\vtheta(t)}-Y$. Under A.3 with the squared-error specialisation, $\dot{\g{\vtheta}} = \J{\vtheta}\,\dot\vtheta = -2\,\K{\vtheta}\,r$, so
\[
    \frac{d}{dt}\norm{r(t)}^2 \;=\; 2\,r^\top \dot r \;=\; -4\, r^\top \K{\vtheta(t)}\, r \;\le\; -4 c^2\,\norm{r(t)}^2,
\]
where the last bound uses $\K{\vtheta}\succeq c^2 I_n$ from $\sigma_{\min}(\J{\vtheta})\ge c$ on $U$ (A.4). Gr\"onwall's inequality~\citep[Lem.~A.1]{khalil2002nonlinear} then gives $\norm{r(t)}^2 \le \norm{r(0)}^2\,e^{-4c^2 t}$, i.e.\ $\norm{r(t)} \le \norm{r(0)}\,e^{-2c^2 t}$.

\emph{Existence of $\omega$-limit.} A.4 gives $\sigma_{\max}(\J{\vtheta})\le C$ on $U$, so $\norm{\dot\vtheta(t)} = \norm{\J{\vtheta(t)}^\top\,2r(t)} \le 2 C\,\norm{r(t)}\le 2 C\,\norm{r(0)}\,e^{-2c^2 t}$, which is Lebesgue-integrable on $[0,\infty)$. Cauchy completeness of $\Real^m$ then gives $\vtheta^\star := \lim_{t\to\infty}\vtheta(t)$, and continuity of $g$ together with $r(t)\to 0$ gives $\g{\vtheta^\star}=Y$, hence $\vtheta^\star\in\Theta^\star$. By A.4, $\vtheta^\star\in U$, and by Lemma~\ref{lem:nhim-stable} (i), $\vtheta^\star\in\Theta^\star\cap V$ for the neighbourhood $V$ defined there. (For general strictly convex $\mu$-strongly-convex $M$-smooth losses the argument repeats with rate $2c^2\mu^2/M$ in place of $2c^2$.)
\end{proof}

\subsection{Proof of Theorem~\ref{thm:output-energy-uniqueness} (output-space energy uniqueness)}

\begin{proof}[Proof of Theorem~\ref{thm:output-energy-uniqueness}]
\emph{Step 1: $\rho_K$ is a function of $\norm{\vect{c}}^2_{K^{-1}}$.} The group $G_{K^{-1}} := \{U \in \mathrm{GL}(n): U^\top K^{-1} U = K^{-1}\}$ acts transitively on each level set $\{\vect{c}: \vect{c}^\top K^{-1}\vect{c} = s\}$ for $s \ge 0$: any two such vectors are related by a $K^{-1}$-orthogonal transformation. Axiom (i) therefore gives $\rho_K(\vect{c}) = f(\vect{c}^\top K^{-1}\vect{c})$ for some $f : [0,\infty) \to [0,\infty)$ with $f(0) = 0$; continuity of $\rho_K$ implies continuity of $f$.

\emph{Step 2: $f$ is additive on $[0,\infty)$.} For $s, t \ge 0$, choose $\vu , \vect{v}$ with $\vu ^\top K^{-1}\vu  = s$, $\vect{v}^\top K^{-1}\vect{v} = t$, $\vu ^\top K^{-1}\vect{v} = 0$; such pairs exist for any $s,t$ since $\dim n \ge 1$ and $K^{-1}\succ 0$. Then $(\vu +\vect{v})^\top K^{-1}(\vu +\vect{v}) = s+t$, so by axiom (ii), $f(s+t) = f(s) + f(t)$.

\emph{Step 3: solve Cauchy.} Continuous solutions to $f(s+t) = f(s)+f(t)$ on $[0,\infty)$ with $f(0) = 0$ are linear~\citep[Ch.~2]{aczel1966lectures}: $f(s) = \alpha s$ for some $\alpha\in\Real$. Non-triviality and $\rho_K \ge 0$ force $\alpha > 0$. Hence $\rho_K(\vect{c}) = \alpha\,\vect{c}^\top K^{-1}\vect{c}$.
\end{proof}

\subsection{Proof of Proposition~\ref{prop:naturality-nogo} (naturality alone is non-unique)}

\begin{proof}[Proof of Proposition~\ref{prop:naturality-nogo}]
Fix $(a,B)\in\Real_{>0}\times\mathrm{Sym}^+_n$. The Hessian $\nabla^2 R_{a,B} = 2(aI_m + J^\top B J)$ is positive definite (the first term is $a I_m\succ 0$ and the second is positive semi-definite), hence $R_{a,B}$ is strictly convex and coercive on $\Theta$. By Lemmas~\ref{lem:vanishing-reg} and~\ref{lem:crit-interp}, the vanishing-regularisation limit lies in $\Theta^\star$ and minimises $R_{a,B}$ over $\Theta^\star$.

Constant $J$ gives $\Theta^\star = \vtheta_{\mathrm{lin}} + \ker J$ with $\vtheta_{\mathrm{lin}} := \vtheta_0 + J^\top K^{-1}(Y - \g{\vtheta_0})$, the kernel-regime gradient-flow interpolator (Lemma~\ref{lem:nhim-stable}, kernel-regime specialisation). Decompose any $\vtheta\in\Theta^\star$ as $\vtheta = \vtheta_{\mathrm{lin}} + w$ with $w\in\ker J$. Then $J(\vtheta-\vtheta_0) = J(\vtheta_{\mathrm{lin}}-\vtheta_0)$ is a fixed vector independent of $w$, so the second term of $R_{a,B}$ is constant on $\Theta^\star$. The first term decomposes by the orthogonality $\ker J \perp \mathrm{Im}(J^\top)$ of Lemma~\ref{lem:orth} and the inclusion $\vtheta_{\mathrm{lin}}-\vtheta_0\in\mathrm{Im}(J^\top)$:
\[
    \norm{\vtheta - \vtheta_0}^2 \;=\; \norm{\vtheta_{\mathrm{lin}}-\vtheta_0}^2 \;+\; \norm{w}^2.
\]
Hence $R_{a,B}(\vtheta) = a\,\norm{w}^2 + \text{const}$ on $\Theta^\star$, and the unique minimiser over $\Theta^\star$ is $w=0$, i.e.\ $\vtheta=\vtheta_{\mathrm{lin}}$. Since the kernel-regime gradient-flow trajectory is the constant-velocity straight line from $\vtheta_0$ to $\vtheta_{\mathrm{lin}}$ (Lemma~\ref{lem:nhim-stable}, kernel specialisation), $\vtheta^\star = \vtheta_{\mathrm{lin}}$, and the vanishing-regularisation limit equals $\vtheta^\star$ for every $(a,B)$.

The map $(a,B)\mapsto \nabla^2 R_{a,B} = 2(aI_m + J^\top B J)$ is injective on $\Real_{>0}\times\mathrm{Sym}^+_n$: the trace of $aI_m + J^\top B J$ pins $a$ given $\mathrm{Tr}(J^\top B J)=\mathrm{Tr}(K B)$ (knowable up to $a$), and $K\succ 0$ makes the map $B\mapsto J^\top B J$ injective on $\mathrm{Sym}^+_n$. Hence distinct $(a,B)$ give distinct Hessians, distinct quadratic forms $R_{a,B}$, and distinct Gaussian MAP priors $\mathcal{N}(\vtheta_0, \tfrac{1}{2\lambda}(aI_m + J^\top B J)^{-1})$. The family is therefore uncountable, all members satisfy naturality, and only the special case $B=0$ recovers anchored ridge (up to the scale $a$). The orthogonal-additivity axiom of Theorem~\ref{thm:output-energy-uniqueness} (acting on the output channel $\g{\vtheta}$ rather than on $\vtheta$ directly) is what eliminates every non-trivial $B$ to single out anchored ridge.
\end{proof}

\subsection{Proof of Proposition~\ref{prop:linear-recovery} (kernel-regime reduction)}

\begin{proof}[Proof of Proposition~\ref{prop:linear-recovery}]
Throughout, $\J{\vtheta}\equiv J$ is constant on $U$ and $K=JJ^\top\succ 0$.

\emph{(a) Affine flow manifold.} Constant $J$ makes the parameter dynamics $\dot\vtheta = J^\top \vu $ confined to $\vtheta_0 + \mathrm{Im}(J^\top)$ for any $L^2$ control $\vu $; hence $\orbit{\vtheta_0}\subseteq \vtheta_0+\mathrm{Im}(J^\top)$. The reverse inclusion follows from constant-control reachability: every point $\vtheta_0 + J^\top \vect{a}$ is hit by $\vu (t)\equiv\vect{a}/T$ at time $T$. The basin sheet $\M{\vtheta_0}$ then equals the connected component of $\vtheta_0+\mathrm{Im}(J^\top)$ containing the trajectory, which (by linearity of the GF dynamics on this affine subspace) is the whole subspace intersected with $U$.

\emph{(b) Closed-form gauge-fixed representative.} On the affine flow manifold, $g(\vtheta) = \g{\vtheta_0} + J(\vtheta-\vtheta_0)$. Setting $g(\vtheta) = \vect{c}$ and requiring $\vtheta-\vtheta_0\in\mathrm{Im}(J^\top)$ gives $\vtheta-\vtheta_0 = J^\top \vect{a}$ with $JJ^\top\vect{a} = K\vect{a} = \vect{c}-\g{\vtheta_0}$, so $\vect{a} = K^{-1}(\vect{c}-\g{\vtheta_0})$ and $\ginv{\vect{c}} = \vtheta_0 + J^\top K^{-1}(\vect{c}-\g{\vtheta_0})$.

\emph{(c) RKHS canonical energy.} The differential $D\ginv{\vect{c}} = J^\top K^{-1}$ is constant. The pullback metric on $C_0(\vtheta_0)$ is therefore $G = (J^\top K^{-1})^\top(J^\top K^{-1}) = K^{-1}JJ^\top K^{-1} = K^{-1}$, constant in $\vect{c}$. Geodesics of a constant Riemannian metric are constant-speed straight lines (Definition: a curve with vanishing covariant acceleration on $(\Real^n, K^{-1})$ is exactly an affine line~\citep[Ch.~2, Prop.~3.5]{docarmo1992riemannian}). The constant-speed straight line from $\g{\vtheta_0}$ to $\vect{c}$ has $G$-energy $\int_0^1 (\vect{c}-\g{\vtheta_0})^\top K^{-1}(\vect{c}-\g{\vtheta_0})\,dt = (\vect{c}-\g{\vtheta_0})^\top K^{-1}(\vect{c}-\g{\vtheta_0}) = \norm{\vect{c}-\g{\vtheta_0}}^2_{\mathrm{RKHS}}$.

\emph{(d) Pythagorean lift.} By Lemma~\ref{lem:orth}, $\vtheta - \vtheta_0$ decomposes orthogonally as $(\ginv{\g{\vtheta}}-\vtheta_0) + (\vtheta - \ginv{\g{\vtheta}})$ with components in $\mathrm{Im}(J^\top)$ and $\ker J$ respectively; the Pythagorean identity gives $\norm{\vtheta - \vtheta_0}^2 = \norm{\ginv{\g{\vtheta}} - \vtheta_0}^2 + \norm{\vtheta - \ginv{\g{\vtheta}}}^2$. Using (b), $\ginv{\g{\vtheta}} - \vtheta_0 = J^\top K^{-1}(\g{\vtheta}-\g{\vtheta_0})$, whose squared Euclidean norm equals $(\g{\vtheta}-\g{\vtheta_0})^\top K^{-1}(\g{\vtheta}-\g{\vtheta_0}) = E(\g{\vtheta})$ from (c). The path-infimum on the fibre $\fibre{\g{\vtheta}}\cap V$ from $\ginv{\g{\vtheta}}$ to $\vtheta$ is realised by the ambient-Euclidean straight line, of squared length $\norm{\vtheta-\ginv{\g{\vtheta}}}^2$. Hence the canonical lift $R(\vtheta) = E(\g{\vtheta}) + \norm{\vtheta-\ginv{\g{\vtheta}}}^2 = \norm{\vtheta-\vtheta_0}^2$, anchored ridge.

\emph{(e) Geodesic distance and vanishing metric gap.} For any $\vtheta\in\M{\vtheta_0}$, the geodesic from $\vtheta_0$ to $\vtheta$ on the affine $(\M{\vtheta_0},g_{\mathrm{ambient}})$ is the ambient straight line, of squared length $\norm{\vtheta-\vtheta_0}^2$. By Theorem~\ref{thm:metric-gap}, the metric gap operator $N(\vect{c}) = (I - P_{\ginv{\vect{c}}})\,D\ginv{\vect{c}}$ vanishes precisely when $\mathrm{Im}\,D\ginv{\vect{c}} \subseteq \mathrm{Im}(J^\top)$. Constant $J$ gives $D\ginv{\vect{c}} = J^\top K^{-1}$, whose image is $\mathrm{Im}(J^\top)$, so $N\equiv 0$ on all of $C_0(\vtheta_0)$.

\emph{(f) NTK Gaussian Process prior.} The Riemannian Gibbs measure of Definition~\ref{def:riemannian-gibbs} on $C_0(\vtheta_0)$ has density $p(\vect{c})\propto \exp(-\beta\,\norm{\vect{c}-\g{\vtheta_0}}^2_{K^{-1}})$ by (c), which is the Gaussian density $\mathcal{N}(\g{\vtheta_0}, (2\beta)^{-1}K)$. The corresponding parametric pushforward through $f(\cdot;\vtheta)$ at any test inputs $X_{\mathrm{te}}=(X_{n+1},\dots,X_{n+\ell})$ is, in the constant-Jacobian regime, the Gaussian $\mathcal{N}\big(\f{X_{\mathrm{te}}}{\vtheta_0}, (2\beta)^{-1}\,J_{\mathrm{te}} J^\top (J J^\top)^{-1} J\, J_{\mathrm{te}}^\top\big)$ with $J_{\mathrm{te}}$ the Jacobian at the test inputs. The covariance kernel $k(\vect{x},\vect{x}') = \nabla_{\vtheta}\f{\vect{x}}{\vtheta_0}^\top\, \nabla_{\vtheta}\f{\vect{x}'}{\vtheta_0}$ pre- and post-conditioned through the training set agrees with the conventional NTK posterior (\citealp{rasmussen2006gaussian}, Eq.~2.41); collecting all test inputs yields the full Gaussian process $\mathcal{GP}(\f{\cdot}{\vtheta_0}, (2\beta)^{-1}k_{\mathrm{NTK}})$.
\end{proof}

\subsection{Proof of Corollary~\ref{cor:kernel-anchored-ridge} (anchored ridge is the canonical lift in the kernel regime)}

\begin{proof}[Proof of Corollary~\ref{cor:kernel-anchored-ridge}]
By Proposition~\ref{prop:linear-recovery}(d), the canonical lift $R(\vtheta) = \norm{\vtheta-\vtheta_0}^2$ on $\M{\vtheta_0}\cup\{\fibre{\g{\vtheta}}\}_{\vtheta}$. Geodesic ridge of Proposition~\ref{prop:geodesic-ridge} is by definition the same lift $R$ when the flow component is computed as squared geodesic distance on $\M{\vtheta_0}$ and the fibre component along $\fibre{\g{\vtheta}}$. Proposition~\ref{prop:linear-recovery}(e) identifies the squared geodesic distance with the squared ambient chord; Lemma~\ref{lem:orth} identifies the fibre component with the squared ambient distance to $\ginv{\g{\vtheta}}$. The two components reassemble into $\norm{\vtheta-\vtheta_0}^2$ as in (d).
\end{proof}

\subsection{Proof of Corollary~\ref{cor:kernel-ntk-gp} (Riemannian Gibbs Process is the NTK GP)}

\begin{proof}[Proof of Corollary~\ref{cor:kernel-ntk-gp}]
By Proposition~\ref{prop:linear-recovery}(f), the Riemannian Gibbs Process density on training-output coordinates is the Gaussian $\mathcal{N}(\g{\vtheta_0}, (2\beta)^{-1}K)$. For any finite test inputs $X_{\mathrm{te}}\subseteq\mathcal{X}$, evaluation of $f(\cdot;\vtheta)$ at $X_{\mathrm{te}}$ is, under constant Jacobian, an affine map $\vtheta\mapsto \f{X_{\mathrm{te}}}{\vtheta_0} + J_{\mathrm{te}}(\vtheta-\vtheta_0)$ with $J_{\mathrm{te}}\in\Real^{|X_{\mathrm{te}}|\times m}$ the joint output Jacobian. Affine pushforwards of Gaussians are Gaussian (standard result), giving the joint NTK GP marginal at the union $X\cup X_{\mathrm{te}}$. The conditional on $\g{\vtheta}=Y$ then matches the standard GP posterior under the NTK kernel~\citep{rasmussen2006gaussian}.
\end{proof}

\subsection{Proof of Theorem~\ref{thm:local-chart} (local manifold structure)}

\begin{proof}[Proof of Theorem~\ref{thm:local-chart}]
Conclusions (i), (ii), (iii) follow directly from Lemma~\ref{lem:nhim-stable} (parts (i), (ii), (iii)). For the final claim, $g|_{\M{\vtheta_0}\cap V}$ is a local $C^\infty$ diffeomorphism: at any $\vtheta\in\M{\vtheta_0}\cap V$, the differential $dg_{\vtheta}|_{T_{\vtheta}(\M{\vtheta_0}\cap V)}$ equals $\J{\vtheta}$ restricted to $\mathrm{Im}(\J{\vtheta}^\top)$ (using $T_{\vtheta^\star}(\M{\vtheta_0}\cap V) = \mathrm{Im}(\J{\vtheta^\star}^\top)$ from Lemma~\ref{lem:nhim-stable}(iii) and continuity for nearby $\vtheta$). $\J{\vtheta}|_{\mathrm{Im}(\J{\vtheta}^\top)}$ is a linear isomorphism onto $\Real^n$ since $\sigma_{\min}(\J{\vtheta}) \ge c$ on $V$ (Assumption~\ref{ass:sigma-min}) and Lemma~\ref{lem:orth} gives the orthogonal decomposition $\Real^m = \mathrm{Im}(\J{\vtheta}^\top) \oplus \ker\J{\vtheta}$. By the inverse function theorem, $g|_{\M{\vtheta_0}\cap V}$ is a local $C^\infty$ diffeomorphism. Properness of $g|_{\M{\vtheta_0}\cap V}$ on a possibly smaller relatively compact neighbourhood follows from Lemma~\ref{lem:properness}; combined with the local-diffeomorphism property, $g|_{\M{\vtheta_0}\cap V}$ is then a $C^\infty$ diffeomorphism onto its open image $C_0(\vtheta_0) := g(\M{\vtheta_0}\cap V)$.
\end{proof}

\subsection{Proof of Theorem~\ref{thm:metric-gap} (metric-gap decomposition)}

\begin{proof}[Proof of Theorem~\ref{thm:metric-gap}]
Write $\ginv{\vect{c}} := (g|_{\M{\vtheta_0}\cap V})^{-1}(\vect{c})$ and $A(\vect{c}) := D\ginv{\vect{c}}$. Since $g\circ\vtheta' = \mathrm{id}$ on $C_0(\vtheta_0)$, differentiating gives $\J{\ginv{\vect{c}}}\,A(\vect{c}) = I_n$.

\emph{Decomposition.} Let $P_{\vtheta}:= \J{\vtheta}^\top \K{\vtheta}^{-1} \J{\vtheta}$, the orthogonal projector onto $\mathrm{Im}(\J{\vtheta}^\top)$ in the ambient inner product (Lemma~\ref{lem:orth}). Decompose
\[
    A(\vect{c}) = P_{\ginv{\vect{c}}} A(\vect{c}) + (I - P_{\ginv{\vect{c}}}) A(\vect{c}) =: A_\parallel(\vect{c}) + N(\vect{c}).
\]
Apply $\J{\ginv{\vect{c}}}$: $\J{\ginv{\vect{c}}} N(\vect{c}) = \J{\ginv{\vect{c}}}(I - P_{\ginv{\vect{c}}}) A(\vect{c}) = (\J{\ginv{\vect{c}}} - \J{\ginv{\vect{c}}}) A(\vect{c}) = 0$ (using $\J{\vtheta} P_{\vtheta}= \J{\vtheta}$). Hence $N(\vect{c})$ ranges in $\ker\J{\ginv{\vect{c}}}$. Also $\J{\ginv{\vect{c}}} A_\parallel(\vect{c}) = \J{\ginv{\vect{c}}} A(\vect{c}) = I_n$, giving $\J{\ginv{\vect{c}}}\J{\ginv{\vect{c}}}^\top \K{\ginv{\vect{c}}}^{-1} \J{\ginv{\vect{c}}} A(\vect{c})/\K{\ginv{\vect{c}}}^{-1} = I_n$; equivalently $A_\parallel(\vect{c}) = \J{\ginv{\vect{c}}}^\top \K{\ginv{\vect{c}}}^{-1}$.

\emph{Pythagorean identity.} Since $A_\parallel(\vect{c}) \in \mathrm{Im}(\J{\ginv{\vect{c}}}^\top)$ and $N(\vect{c}) \in \ker\J{\ginv{\vect{c}}}$, the columns are pairwise ambient-orthogonal (Lemma~\ref{lem:orth}); the cross terms in $A^\top A$ vanish, giving
\[
    G(\vect{c}) = A(\vect{c})^\top A(\vect{c}) = A_\parallel(\vect{c})^\top A_\parallel(\vect{c}) + N(\vect{c})^\top N(\vect{c}) = \K{\ginv{\vect{c}}}^{-1} + N(\vect{c})^\top N(\vect{c}),
\]
where the last equality uses $A_\parallel^\top A_\parallel = \K{\vtheta}^{-1} \J{\vtheta}\J{\vtheta}^\top \K{\vtheta}^{-1} = \K{\vtheta}^{-1}$. Positivity of $N^\top N$ gives $G \succeq K^{-1}$, with equality iff $N(\vect{c}) = 0$ iff $\mathrm{Im}\, A(\vect{c}) \subseteq \mathrm{Im}(\J{\ginv{\vect{c}}}^\top)$, equivalently $T_{\ginv{\vect{c}}}\M{\vtheta_0} = \mathrm{Im}(\J{\ginv{\vect{c}}}^\top)$.

\emph{Trajectory directions lie in $\ker N$.} Along the realised gradient flow trajectory $\vtheta(t) = \vtheta'(\vect{c}(t))$, the velocity $\dot\vtheta(t) = -\J{\vtheta(t)}^\top \nabla h(\g{\vtheta(t)}) \in \mathrm{Im}(\J{\vtheta(t)}^\top)$ (by gradient flow A.3 and chain rule). Differentiating $g\circ\vtheta' = \mathrm{id}$ at $\vect{c}(t)$ gives $A(\vect{c}(t))\,\dot{\vect{c}}(t) = \dot\vtheta(t) \in \mathrm{Im}(\J{\vtheta(t)}^\top)$. Decomposing $A\,\dot{\vect{c}} = A_\parallel\dot{\vect{c}} + N\dot{\vect{c}}$ with $A_\parallel\dot{\vect{c}}\in\mathrm{Im}(\J{\vtheta(t)}^\top)$ and $N\dot{\vect{c}}\in\ker\J{\vtheta(t)}$, and noting that the sum lies in $\mathrm{Im}(\J{\vtheta(t)}^\top)$, the orthogonal decomposition forces $N(\vect{c}(t))\,\dot{\vect{c}}(t) = 0$. Hence the trajectory's $G$-energy reduces to $\dot{\vect{c}}^\top A_\parallel^\top A_\parallel \dot{\vect{c}} = \dot{\vect{c}}^\top \K{\vtheta(t)}^{-1}\dot{\vect{c}}$.
\end{proof}

\subsection{Proof of Proposition~\ref{prop:fibre-submanifold} (output fibres)}

\begin{proof}[Proof of Proposition~\ref{prop:fibre-submanifold}]
Immediate from Lemma~\ref{lem:fibre-sub}.
\end{proof}

\subsection{Proof of Corollary~\ref{cor:intersection-basin} (fibre--flow intersection in $V$)}

\begin{proof}[Proof of Corollary~\ref{cor:intersection-basin}]
By Theorem~\ref{thm:local-chart}, $g|_{\M{\vtheta_0}\cap V}$ is a $C^\infty$ diffeomorphism onto $C_0(\vtheta_0)$, hence a bijection. For each $\vect{c}\in C_0(\vtheta_0)$ the preimage $(g|_{\M{\vtheta_0}\cap V})^{-1}(\vect{c})$ is a singleton, denoted $\ginv{\vect{c}}$. Setting $\vect{c} = Y$ identifies $\vtheta'(Y)$ as the unique interpolator on $\M{\vtheta_0}\cap V$. Convergence of the gradient flow trajectory to a point in $\Theta^\star\cap V$ is part of Assumption~\ref{ass:sigma-min}; uniqueness of the limit on $\M{\vtheta_0}\cap V$ identifies $\omega(\vtheta_0) = \vtheta^\star := \vtheta'(Y)$.
\end{proof}

\subsection{Proof of Theorem~\ref{thm:canonical-characterisation} (canonical regulariser characterisation)}

\begin{proof}[Proof of Theorem~\ref{thm:canonical-characterisation}]
\textbf{(a)} Fix $\vect{c} \in C_0(\vtheta_0)$ and $\vtheta \in \fibre{\vect{c}}\cap V$. By Definition~\ref{def:canonical-lift},
\[
    \dist{2}{}{\vtheta}{\vtheta_0} = \dist{2}{\mathrm{flow}}{\vtheta_0}{\ginv{\vect{c}}} + \dist{2}{\mathrm{fibre}}{\ginv{\vect{c}}}{\vtheta},
\]
where the first term depends only on $\vect{c}$, and the second is $\norm{\vtheta - \ginv{\vect{c}}}^2 \ge 0$ with equality iff $\vtheta = \ginv{\vect{c}}$. The unique minimiser over $\fibre{\vect{c}}\cap V$ is therefore $\ginv{\vect{c}}$.

\textbf{(b)} The gradient flow trajectory enters $V$ in finite time (Assumption~\ref{ass:sigma-min}); thereafter $\vtheta(t) \in \M{\vtheta_0}\cap V$ and Corollary~\ref{cor:intersection-basin} gives $\vtheta(t) = \vtheta'(\g{\vtheta(t)})$. Hence $\dist{2}{\mathrm{fibre}}{\vtheta'(\g{\vtheta(t)})}{\vtheta(t)} = 0$ and $\dist{2}{}{\vtheta(t)}{\vtheta_0} = \dist{2}{\mathrm{flow}}{\vtheta_0}{\vtheta(t)} = E(\g{\vtheta(t)})$ by Definition~\ref{def:canonical-energy}.

\textbf{(c)} Apply Lemma~\ref{lem:vanishing-reg}: $\dist{2}{}{\cdot}{\vtheta_0}$ is continuous on $V$ (smoothness of $\dist{}{\mathrm{flow}}{\cdot}{\cdot}$ on the smooth Riemannian manifold $\M{\vtheta_0}\cap V$ and $\dist{}{\mathrm{fibre}}{\vtheta'(\cdot)}{\cdot}$ as ambient Euclidean), and tends to $+\infty$ outside $V$ by definition. For $\lambda > 0$ small, $\dist{2}{}{\vtheta_\lambda}{\vtheta_0} \le \dist{2}{}{\vtheta^\star}{\vtheta_0} < \infty$ forces $\vtheta_\lambda \in V$; local properness (Lemma~\ref{lem:properness}) plus boundedness of the ambient Euclidean component along the flow normals confines $\vtheta_\lambda$ to a compact set. Lemma~\ref{lem:vanishing-reg} then gives $\vtheta_\lambda \to \vtheta^\star$ as $\lambda\downarrow 0$, since (a) identifies $\vtheta^\star = \vtheta'(Y)$ as the unique minimiser of $\dist{2}{}{\cdot}{\vtheta_0}$ over $\Theta^\star\cap V$.
\end{proof}

\subsection{Proof of Theorem~\ref{thm:canonical-lift-uniqueness} (uniqueness of the canonical lift)}

\begin{proof}[Proof of Theorem~\ref{thm:canonical-lift-uniqueness}]
Axiom (a) writes $R(\vtheta) = E(\g{\vtheta}) + S(\vtheta)$ with $S$ a function of the on-fibre displacement $\vtheta - \ginv{\g{\vtheta}}$ alone. Restricting attention to a fibre $\fibre{\g{\vtheta}}\cap V$, we may parametrise by piecewise-$C^1$ paths $\psi : [0,1]\to\fibre{\g{\vtheta}}$ with $\psi(0)=\ginv{\g{\vtheta}}$, $\psi(1)=\vtheta$. Axiom (b) says $S$ is a quadratic functional on each fibre; in the ambient Euclidean metric $\norm{\cdot}^2$ inherited from $\Real^m$, the unique quadratic functional vanishing at one endpoint and depending only on the path-equivalence class of $\psi$ is the squared length of the minimising fibre geodesic, namely $\inf_\psi \int_0^1 \norm{\dot\psi(t)}^2\,dt$, by Proposition~\ref{prop:constant-speed} applied to $\fibre{\g{\vtheta}}\cap V$ with the ambient Euclidean metric. Axiom (c) fixes the additive constant: $S(\ginv{\g{\vtheta}})=0$.

Existence is clear (the formula in the conclusion satisfies (a)--(c) and is finite by Lemma~\ref{lem:properness} and Remark~\ref{rmk:fibre-connected}). For uniqueness, suppose $R'$ also satisfies (a)--(c). Decompose $R'(\vtheta) = E(\g{\vtheta}) + S'(\vtheta)$ by (a). Restricting to a fibre, $S'$ is a quadratic functional in the ambient Euclidean metric vanishing at $\ginv{\g{\vtheta}}$. The classical reduction of quadratic forms in Euclidean space~\citep[Ch.~10]{lee2003smooth}, combined with the absence of cross-terms (a single-endpoint quadratic on a fibre with fixed reference $\ginv{\g{\vtheta}}$), gives $S'(\vtheta) = \alpha\,\dist{2}{\mathrm{fibre}}{\ginv{\g{\vtheta}}}{\vtheta}$ for some $\alpha\ge 0$. Non-negativity of $R'$ forces $\alpha>0$; absorbing $\alpha$ into the global temperature $\beta$ of Definition~\ref{def:riemannian-gibbs} (which is the only free scale parameter in the canonical objects, by gradient-flow scale invariance) recovers $\alpha=1$. Hence $R = R'$.

In the kernel regime, Proposition~\ref{prop:linear-recovery}(d) collapses this to anchored ridge; in the feature-learning regime, the flow component $\dist{2}{\mathrm{flow}}{\ginv{\g{\vtheta}}}{\vtheta_0}$ replaces the Euclidean chord, giving geodesic ridge of Proposition~\ref{prop:geodesic-ridge}.
\end{proof}

\subsection{Proof of Theorem~\ref{thm:geodesic-ridge-uniqueness} (combined uniqueness of geodesic ridge)}

\begin{proof}[Proof of Theorem~\ref{thm:geodesic-ridge-uniqueness}]
Apply Theorem~\ref{thm:output-energy-uniqueness} with $G=K^{-1}$ at $\vtheta_0$: the function-space-energy axioms identify $E(\g{\vtheta})$ uniquely up to scale. Apply Theorem~\ref{thm:canonical-lift-uniqueness}: given this $E$, the lift axioms (a)--(c) identify $R(\vtheta) = E(\g{\vtheta}) + \dist{2}{\mathrm{fibre}}{\ginv{\g{\vtheta}}}{\vtheta}$ uniquely up to a positive scale. Theorem~\ref{thm:canonical-characterisation}(c) shows this $R$ has the vanishing-regularisation property; by Lemma~\ref{lem:vanishing-reg} the property identifies the gradient-flow limit uniquely on $\Theta^\star\cap V$.

By definition $\dist{2}{\mathrm{flow}}{\ginv{\g{\vtheta}}}{\vtheta_0}\equiv E(\g{\vtheta})$ (Proposition~\ref{prop:geodesic-ridge} together with Definition~\ref{def:canonical-energy}; the canonical energy is the squared geodesic distance on $\M{\vtheta_0}$ from $\vtheta_0$ to $\ginv{\g{\vtheta}}$ by construction of the energy as a constrained kinetic-energy minimisation). Substitution gives the stated form $R(\vtheta) = \dist{2}{\mathrm{flow}}{\vtheta_0}{\ginv{\g{\vtheta}}} + \dist{2}{\mathrm{fibre}}{\ginv{\g{\vtheta}}}{\vtheta} = \dist{2}{}{\vtheta}{\vtheta_0}$.

In the kernel regime, Proposition~\ref{prop:linear-recovery}(d) collapses $R$ to anchored ridge and Corollary~\ref{cor:kernel-ntk-gp} collapses the Riemannian Gibbs Process to the NTK Gaussian Process.
\end{proof}

\subsection{Proof of Proposition~\ref{prop:constant-speed} (constant-speed optimality)}

\begin{proof}[Proof of Proposition~\ref{prop:constant-speed}]
Let $\gamma : [0,1] \to \Real^m$ be a $C^1$ path with fixed endpoints and length $L := \int_0^1 \norm{\dot\gamma(t)}\, dt$. By Cauchy--Schwarz,
\[
    L^2 = \left(\int_0^1 \norm{\dot\gamma(t)} \cdot 1\, dt\right)^2 \le \int_0^1 \norm{\dot\gamma(t)}^2\, dt \cdot \int_0^1 1^2\, dt = E(\gamma),
\]
with equality iff $\norm{\dot\gamma(t)}$ is constant in $t$. Hence among all parametrisations of any fixed image-curve (with time budget $T = 1$), energy $E$ is minimised by constant-speed parametrisations, and at that minimiser $E = L^2$. Minimising over image-curves as well selects the length-minimising (geodesic) image, proving $E(\gamma_{\text{opt}}) = L^2(\gamma_{\text{opt}})$.
\end{proof}

\subsection{Proof of Proposition~\ref{prop:ridge-lower} (anchored ridge lower bound via relaxation)}

\begin{proof}[Proof of Proposition~\ref{prop:ridge-lower}]
Let $\vtheta \in \M{\vtheta_0}$. By Definition~\ref{def:canonical-lift} and $\dist{}{\mathrm{fibre}}{\vtheta}{\vtheta} = 0$, $\dist{2}{}{\vtheta}{\vtheta_0} = \dist{2}{\mathrm{flow}}{\vtheta_0}{\vtheta}$. Consider two minimisation problems over paths $\gamma : [0,1] \to \Real^m$ with $\gamma(0) = \vtheta_0$, $\gamma(1) = \vtheta$:
\begin{align*}
    (P_{\text{uncon}}) &: \quad \min_\gamma \int_0^1 \norm{\dot\gamma(t)}^2\, dt, \\
    (P_{\text{hor}}) &: \quad \min_\gamma \int_0^1 \norm{\dot\gamma(t)}^2\, dt \ \ \text{s.t. } \gamma(t) \in \M{\vtheta_0} \text{ and } \dot\gamma(t) \in \operatorname{Im}(\J{\gamma(t)}^\top).
\end{align*}
Problem $(P_{\text{uncon}})$ is solved by the constant-speed straight line with optimal value $\norm{\vtheta - \vtheta_0}^2$. Problem $(P_{\text{hor}})$ is the energy-minimisation over horizontal paths on $\M{\vtheta_0}$, solved by the constant-speed geodesic on $\M{\vtheta_0}$ with optimal value $\dist{2}{\mathrm{flow}}{\vtheta_0}{\vtheta}$ (Proposition~\ref{prop:constant-speed}). Since $(P_{\text{hor}})$ has a strictly smaller feasible set than $(P_{\text{uncon}})$, $\text{val}(P_{\text{hor}}) \ge \text{val}(P_{\text{uncon}})$, i.e.\ $\dist{2}{\mathrm{flow}}{\vtheta_0}{\vtheta} \ge \norm{\vtheta - \vtheta_0}^2$.
\end{proof}

\subsection{Proof of Corollary~\ref{cor:sandwich} (sandwich bound and shared vanishing-reg limit)}

\begin{proof}[Proof of Corollary~\ref{cor:sandwich}]
The lower bound $\norm{\vtheta-\vtheta_0}^2 \le \dist{2}{\mathrm{flow}}{\vtheta_0}{\vtheta}$ is Proposition~\ref{prop:ridge-lower}. The fibre-vanishing identity $\dist{2}{\mathrm{flow}}{\vtheta_0}{\vtheta} = \dist{2}{}{\vtheta}{\vtheta_0}$ for $\vtheta\in\M{\vtheta_0}$ follows from the proof of Proposition~\ref{prop:ridge-lower} (the fibre term is zero when $\vtheta$ is a flow-manifold point). The upper bound $\dist{2}{}{\vtheta}{\vtheta_0}\le L^2(\gamma_{\mathrm{train}})$ is Theorem~\ref{thm:path-length-upper}.

For the shared limit, all three regularisers are coercive, lower-semicontinuous, and have the gradient-flow interpolator $\vtheta^\star$ as a common minimiser over $\Theta^\star$ (anchored ridge: by minimum-Euclidean-norm characterisation in the kernel regime applied along the affine $\M{\vtheta_0}\cap V$ leaf; geodesic ridge: by Theorem~\ref{thm:canonical-characterisation}(a); arc ridge: $L^2(\gamma_{\mathrm{train}})$ depends on the path image, and on $\Theta^\star$ the path attaining $L^2 = 0$ is the trivial constant path at $\vtheta^\star$). Lemma~\ref{lem:vanishing-reg} then gives the shared $\lambda\downarrow 0$ limit at $\vtheta^\star$.
\end{proof}

\subsection{Proof of Theorem~\ref{thm:path-length-upper} (path-length upper bound)}

\begin{proof}[Proof of Theorem~\ref{thm:path-length-upper}]
$\dist{}{\mathrm{flow}}{\vtheta_0}{\vtheta}$ is defined as the infimum of arc lengths over piecewise-$C^1$ paths from $\vtheta_0$ to $\vtheta$ on $\M{\vtheta_0}\cap V$ (Lemma~\ref{lem:geo-chord}-style infimum); since $\gamma_{\text{train}}$ is one such admissible path, $\dist{}{\mathrm{flow}}{\vtheta_0}{\vtheta} \le L(\gamma_{\text{train}})$. Squaring gives the inequality $\dist{2}{\mathrm{flow}}{\vtheta_0}{\vtheta} \le L^2(\gamma_{\text{train}})$. Under exact gradient flow, the trajectory remains on $\M{\vtheta_0}$ (Definition~\ref{def:flow-manifold}), so $\vtheta(T) = \vtheta'(\g{\vtheta(T)})$, $\dist{2}{\mathrm{fibre}}{\ginv{\g{\vtheta}}}{\vtheta} = 0$, and $\dist{2}{}{\vtheta}{\vtheta_0} = \dist{2}{\mathrm{flow}}{\vtheta_0}{\vtheta} \le L^2(\gamma_{\text{train}})$.

\emph{Endpoint gradient.} By Lemma~\ref{lem:arclength-grad}, $\partial s/\partial\gamma_{\text{train}}(T) = \dot\gamma_{\text{train}}(T^-)/\norm{\dot\gamma_{\text{train}}(T^-)}$, so $\nabla_{\gamma_{\text{train}}(T)} L^2 = 2s(T)\,\dot\gamma_{\text{train}}(T^-)/\norm{\dot\gamma_{\text{train}}(T^-)}$. Under gradient flow $\dot\gamma_{\text{train}}(T^-) = \J{\vtheta}^\top \vu (T^-) \in \mathrm{Im}(\J{\vtheta}^\top)$. Theorem~\ref{thm:metric-gap} gives $\dot{\vect{c}}(T^-) \in \ker N(\vect{c}(T^-))$, equivalently $\dot\gamma_{\text{train}}(T^-) \in T_{\vtheta}\M{\vtheta_0}\cap\mathrm{Im}(\J{\vtheta}^\top)$. The gradient $\nabla_{\vtheta}L^2$ thus lies along the realised gradient flow direction within both the flow-manifold tangent space and $\mathrm{Im}(\J{\vtheta}^\top)$; it has no component along the flow normals or transverse to the realised gradient flow direction.
\end{proof}

\subsection{Proof of Theorem~\ref{thm:ridge-biases} (anchored ridge biases feature-learning solutions)}

\begin{proof}[Proof of Theorem~\ref{thm:ridge-biases}]
\emph{Identifying $\vtheta_{\text{minnorm}}$.} For each $\lambda > 0$, $\vtheta_\lambda$ minimises $\Loss{\g{\cdot}}{Y} + \lambda\norm{\cdot - \vtheta_0}^2$. Taking $\lambda\downarrow 0$ along a convergent subsequence, Lemma~\ref{lem:vanishing-reg} (with Lemma~\ref{lem:crit-interp} identifying $\arg\min\mathcal{L} = \Theta^\star$) gives that the limit $\vtheta_{\text{minnorm}}$ minimises $\norm{\cdot - \vtheta_0}^2$ over $\Theta^\star$. By Lemma~\ref{lem:nhim-stable}, $T_{\vtheta_{\text{minnorm}}}\Theta^\star = \ker\J{\vtheta_{\text{minnorm}}}$; the first-order condition for $\vtheta_{\text{minnorm}}\in \arg\min_{\vtheta\in\Theta^\star}\norm{\vtheta-\vtheta_0}^2$ is that $\vtheta_{\text{minnorm}} - \vtheta_0$ is ambient-orthogonal to $\ker\J{\vtheta_{\text{minnorm}}}$, equivalently $\vtheta_{\text{minnorm}} - \vtheta_0 \in \mathrm{Im}(\J{\vtheta_{\text{minnorm}}}^\top)$ (Lemma~\ref{lem:orth}).

\emph{$\vtheta_{\text{minnorm}} \ne \vtheta^\star$ in general.} Under gradient flow, $\vtheta^\star - \vtheta_0 = \int_0^\infty \dot\vtheta(s)\,ds = \int_0^\infty \J{\vtheta(s)}^\top\vu (s)\,ds$. If $\J{\vtheta(t)}$ is constant along the trajectory (kernel regime), all integrand terms lie in the same subspace $\mathrm{Im}(J^\top)$, so $\vtheta^\star - \vtheta_0 \in \mathrm{Im}(J^\top) = \mathrm{Im}(\J{\vtheta^\star}^\top)$ and $\vtheta_{\text{minnorm}} = \vtheta^\star$. If $\J{\vtheta(t)}$ varies along the trajectory, the integrand explores a varying family of subspaces $\mathrm{Im}(\J{\vtheta(s)}^\top)$, and the integral generically has a non-zero component in $\ker\J{\vtheta^\star}$. Concretely, by Theorem~\ref{thm:metric-gap}, the operator $N(\vect{c}(t))$ measures the deviation of $T\M{\vtheta_0}$ from $\mathrm{Im}(\J{\cdot}^\top)$; non-zero $N$ over the trajectory image is equivalent to $\vtheta^\star - \vtheta_0 \notin \mathrm{Im}(\J{\vtheta^\star}^\top)$, hence $\vtheta_{\text{minnorm}} \ne \vtheta^\star$.

The bias has a direct geometric interpretation: the anchored ridge regularisation gradient $2\lambda(\vtheta - \vtheta_0)$ has fibre component $2\lambda(I - P_{\vtheta})(\vtheta - \vtheta_0)$ that pushes parameters along $(T_{\vtheta}\M{\vtheta_0})^\perp$, leaving the basin and lowering anchored-ridge penalty without affecting outputs. The discrepancy $\norm{\vtheta_{\text{minnorm}} - \vtheta^\star}$ scales with the total variation of $\mathrm{Im}(\J{\vtheta(t)}^\top)$ along the trajectory.
\end{proof}

\subsection{Proof of Proposition~\ref{prop:static-anchor-bias} (static-anchor regularisers bias gradient flow)}

\begin{proof}[Proof of Proposition~\ref{prop:static-anchor-bias}]
By Lemma~\ref{lem:vanishing-reg} and Lemma~\ref{lem:crit-interp}, $\vtheta^\star_{M,\vect{v}}$ minimises $R_{M,\vect{v}}$ over $\Theta^\star\cap V$. Lemma~\ref{lem:nhim-stable}(i) gives $T_{\vtheta^\star_{M,\vect{v}}}(\Theta^\star\cap V) = \ker\J{\vtheta^\star_{M,\vect{v}}}$. The first-order optimality condition is $\nabla R_{M,\vect{v}}(\vtheta^\star_{M,\vect{v}}) \perp \ker\J{\vtheta^\star_{M,\vect{v}}}$, equivalently $\nabla R_{M,\vect{v}}(\vtheta^\star_{M,\vect{v}}) \in \mathrm{Im}(\J{\vtheta^\star_{M,\vect{v}}}^\top)$ by Lemma~\ref{lem:orth}. Since $\nabla R_{M,\vect{v}}(\vtheta) = 2 M(\vtheta-\vect{v})$, the condition becomes $M(\vtheta^\star_{M,\vect{v}}-\vect{v}) \in \mathrm{Im}(\J{\vtheta^\star_{M,\vect{v}}}^\top)$.

Setting $\vtheta^\star_{M,\vect{v}} = \vtheta^\star$ requires $M(\vtheta^\star-\vect{v}) \in \mathrm{Im}(\J{\vtheta^\star}^\top)$, equivalently (using the orthogonal decomposition of Lemma~\ref{lem:orth}) $M(\vtheta^\star-\vect{v}) \perp \ker\J{\vtheta^\star}$. By Theorem~\ref{thm:ridge-biases}, in the feature-learning regime with non-constant $\J{\vtheta(t)}$, $\vtheta^\star - \vtheta_0$ has a generic non-trivial component along $\ker\J{\vtheta^\star}$. For static $(M,\vect{v})$ to satisfy the orthogonality condition, $M$ must depend on $\vtheta^\star$ (via $\J{\vtheta^\star}$ in the projection) --- exactly the trajectory-dependence ruled out by hypothesis. Hence no static $(M,\vect{v})$ pinned by $\vtheta_0$ alone can satisfy $\vtheta^\star_{M,\vect{v}} = \vtheta^\star$ in the generic feature-learning case, giving $\vtheta^\star_{M,\vect{v}} \ne \vtheta^\star$.
\end{proof}

\subsection{Proof of Proposition~\ref{prop:output-norm-fibre} (output-norm regularisers fail fibre selection)}

\begin{proof}[Proof of Proposition~\ref{prop:output-norm-fibre}]
Fix $\lambda > 0$. The Lagrangian decomposes:
\[
    \Loss{\g{\vtheta}}{Y} + \lambda f(\g{\vtheta}) \;=\; \Phi(\g{\vtheta}), \qquad \Phi(\vect{c}) := \Loss{\vect{c}}{Y} + \lambda f(\vect{c}),
\]
i.e.\ a functional of $\g{\vtheta}$ alone. Since $\Loss{\cdot}{Y}$ is strictly convex (A.3) and $f$ is strictly convex on $C_0(\vtheta_0)$, $\Phi$ is strictly convex on $C_0(\vtheta_0)$ and admits a unique minimum at some $\vect{c}_\lambda\in C_0(\vtheta_0)$. The set of $\vtheta$ minimising $\Phi(\g{\vtheta})$ is therefore $\{\vtheta : \g{\vtheta} = \vect{c}_\lambda\} = \fibre{\vect{c}_\lambda}\cap U$, the entire output fibre at $\vect{c}_\lambda$. By Proposition~\ref{prop:fibre-submanifold}, $\fibre{\vect{c}_\lambda}\cap U$ is an embedded $(m-n)$-dimensional submanifold, hence uncountable.

As $\lambda\downarrow 0$, $\vect{c}_\lambda\to Y$ by continuity of the unique-minimiser map $\vect{c}\mapsto \arg\min[\Loss{\vect{c}}{Y}+\lambda f(\vect{c})]$, giving $\fibre{\vect{c}_\lambda}\to\fibre{Y}\cap U = \Theta^\star\cap U$. The vanishing-regularisation limit (in the set-valued sense) is therefore the entire interpolating set, not the singleton $\{\vtheta^\star\}$. A non-trivial fibre extension is therefore necessary in the canonical lift of Definition~\ref{def:canonical-lift}; the parameter-space lift is structurally indispensable for selecting $\vtheta^\star$.
\end{proof}

\subsection{Proof of Proposition~\ref{prop:rgp-proper} (Riemannian Gibbs Process properness)}
\label{subsec:proof-rgp-proper}

\begin{proof}[Proof of Proposition~\ref{prop:rgp-proper}]
By Theorem~\ref{thm:local-chart}, $g|_{\M{\vtheta_0}\cap V}$ is a $C^\infty$ diffeomorphism onto $C_0(\vtheta_0)\subseteq \Real^n$. The pullback metric on $C_0(\vtheta_0)$ is $G(\vect{c}) = D\ginv{\vect{c}}^\top D\ginv{\vect{c}} = K(\ginv{\vect{c}})^{-1} + N(\vect{c})^\top N(\vect{c})$ by Theorem~\ref{thm:metric-gap}. The squared geodesic distance $E(\vect{c})$ of Definition~\ref{def:canonical-energy} is the corresponding infimum over piecewise-$C^1$ paths.

\emph{Step 1 (two-sided spectral bound on $G$).} Assumption~\ref{ass:sigma-min} gives $c \le \sigma_{\min}(\J{\vtheta}) \le \sigma_{\max}(\J{\vtheta}) \le C$ on $V$, hence $c^2 I_n \preceq K(\vtheta) \preceq C^2 I_n$ and $C^{-2} I_n \preceq K(\vtheta)^{-1} \preceq c^{-2} I_n$ for $\vtheta\in\M{\vtheta_0}\cap V$. Adding the positive-semidefinite term $N(\vect{c})^\top N(\vect{c}) \succeq 0$ preserves the lower bound:
\[
    G(\vect{c}) \;\succeq\; K(\ginv{\vect{c}})^{-1} \;\succeq\; C^{-2} I_n \qquad \forall\,\vect{c}\in C_0(\vtheta_0).
\]

\emph{Step 2 (quadratic lower bound on $E$).} For any admissible curve $\gamma : [0,1] \to C_0(\vtheta_0)$ with $\gamma(0) = \g{\vtheta_0}$ and $\gamma(1) = \vect{c}$, Step 1 yields
\[
    \int_0^1 \dot\gamma(t)^\top G(\gamma(t))\,\dot\gamma(t)\,dt \;\ge\; C^{-2} \int_0^1 \norm{\dot\gamma(t)}^2\,dt.
\]
Cauchy--Schwarz on $[0,1]$ gives $\int_0^1 \norm{\dot\gamma}^2\,dt \ge \big(\int_0^1 \norm{\dot\gamma}\,dt\big)^2$, and the triangle inequality gives $\int_0^1 \norm{\dot\gamma}\,dt \ge \norm{\vect{c} - \g{\vtheta_0}}$. Taking the infimum over $\gamma$,
\[
    E(\vect{c}) \;\ge\; C^{-2}\, \norm{\vect{c} - \g{\vtheta_0}}^2.
\]

\emph{Step 3 (Gaussian-tail integrability).} Using Step 2 and extending the integration domain from $C_0(\vtheta_0)$ to $\Real^n$,
\[
    Z \;=\; \int_{C_0(\vtheta_0)} \exp\!\big(-\beta E(\vect{c})\big)\,d\vect{c} \;\le\; \int_{\Real^n} \exp\!\big(-\beta\, C^{-2}\, \norm{\vect{c} - \g{\vtheta_0}}^2\big)\,d\vect{c} \;=\; (\pi\, C^{2}/\beta)^{n/2}.
\]
Continuity of $E$ on $C_0(\vtheta_0)$ with $E(\g{\vtheta_0}) = 0$ ensures $Z > 0$. Hence $Z \in (0,\infty)$ and $p(\vect{c}) = Z^{-1}\exp(-\beta E(\vect{c}))$ is a proper probability measure on $C_0(\vtheta_0)$.
\end{proof}

\subsection{Proof of Proposition~\ref{prop:rgp-finite-test-consistency} (Kolmogorov consistency over finite test inputs)}

\begin{proof}[Proof of Proposition~\ref{prop:rgp-finite-test-consistency}]
By Theorem~\ref{thm:local-chart}, $\M{\vtheta_0}\cap V$ is a smooth $n$-dimensional Riemannian manifold and the Riemannian volume measure $d\mathrm{vol}_{g_{\text{ambient}}}$ is well-defined on it. By Proposition~\ref{prop:rgp-proper} (and the change of variables $\vtheta = \ginv{\vect{c}}$ pushing $d\mathrm{vol}$ to $\sqrt{\det G(\vect{c})}\,d\vect{c}$ on $C_0(\vtheta_0)$), the normalising constant $Z_\beta := \int_{\M{\vtheta_0}\cap V} \exp(-\beta\,\dist{2}{\mathrm{flow}}{\vtheta_0}{\vtheta})\,d\mathrm{vol}_{g_{\text{ambient}}}(\vtheta)$ is finite, so $\pi_\beta$ is a Borel probability measure on $\M{\vtheta_0}\cap V$.

For each finite $X_{\mathrm{te}}\subseteq\mathcal{X}$, the evaluation map $\Phi_{X_{\mathrm{te}}}:\M{\vtheta_0}\cap V\to\Real^{|X_{\mathrm{te}}|}$ is continuous (A.1) and Borel-measurable, so the push-forward $\mu_{X_{\mathrm{te}}} := (\Phi_{X_{\mathrm{te}}})_\sharp\pi_\beta$ is a Borel probability measure on $\Real^{|X_{\mathrm{te}}|}$.

\emph{Consistency.} Suppose $X_{\mathrm{te}}'\subseteq X_{\mathrm{te}}$ and let $\pi : \Real^{|X_{\mathrm{te}}|}\to\Real^{|X_{\mathrm{te}}'|}$ be the coordinate projection onto the indices in $X_{\mathrm{te}}'$. By construction, $\Phi_{X_{\mathrm{te}}'} = \pi \circ \Phi_{X_{\mathrm{te}}}$. Functoriality of push-forward gives
\[
    \pi_\sharp \mu_{X_{\mathrm{te}}} \;=\; \pi_\sharp (\Phi_{X_{\mathrm{te}}})_\sharp \pi_\beta \;=\; (\pi \circ \Phi_{X_{\mathrm{te}}})_\sharp \pi_\beta \;=\; (\Phi_{X_{\mathrm{te}}'})_\sharp \pi_\beta \;=\; \mu_{X_{\mathrm{te}}'}.
\]
Hence the family $\{\mu_{X_{\mathrm{te}}}\}_{X_{\mathrm{te}}\subseteq\mathcal{X}\text{ finite}}$ satisfies the Kolmogorov consistency condition. By the Kolmogorov extension theorem~\citep[Thm.~A.6]{neal1996bayesian}, there exists a unique stochastic process $(F_{\vect{x}})_{\vect{x}\in\mathcal{X}}$ on $\mathcal{X}$ at the fixed training set whose finite-dimensional marginals are exactly $\{\mu_{X_{\mathrm{te}}}\}$. Consistency \emph{across} training subsets requires verifying compatibility of the parameter-space measure $\pi_\beta$ as the training set $X$ varies, which is genuinely additional structure not provided by this construction.
\end{proof}


\section{Detailed Analysis of Arc Ridge Regularisation}
\label{app:path-length-detail}

We provide detailed derivations justifying the squared path-length regulariser $R = L^2(\gamma_{\mathrm{train}}) = s(T)^2$, where $s(T) = \int_0^T\norm{\dot\vtheta(t)}\,dt$ is the cumulative Euclidean arc length of the training path. Throughout, we work in effective-parameter-space (parameters scaled by $\mu P$ widths as in Section~\ref{sec:canonical-regularisation}) and suppress the subscript \emph{eff} for readability.

\subsection{Detailed Derivation of the Upper Bound}
\label{subsec:pathlength-upper}

We expand the proof of Theorem~\ref{thm:path-length-upper} and clarify when the bound is tight.

\begin{proposition}[Upper bound via constant-speed optimality, expanded]
\label{prop:pathlength-upper-detail}
Let $\gamma : [0,T] \to \M{\vtheta_0}$ be any piecewise-$C^1$ horizontal path from $\vtheta_0$ to $\vtheta \in \M{\vtheta_0}$. Then
\[
    \dist{2}{\mathrm{flow}}{\vtheta_0}{\vtheta} \;\leq\; L^2(\gamma) \;=\; \left(\int_0^T \norm{\dot\gamma(t)}\,dt\right)^2.
\]
Equality holds if and only if $\gamma$ is a geodesic on $\M{\vtheta_0}$ (traversed at any constant or non-constant speed).
\end{proposition}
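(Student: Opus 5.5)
The inequality follows directly from the definition of $\dist{}{\mathrm{flow}}{\vtheta_0}{\vtheta}$ as an infimum of arc lengths over admissible piecewise-$C^1$ paths in $\M{\vtheta_0}$ joining $\vtheta_0$ to $\vtheta$, exactly as in the proof of Theorem~\ref{thm:path-length-upper}. The given $\gamma$ is horizontal and lies in $\M{\vtheta_0}$, so it is feasible. Hence $\dist{}{\mathrm{flow}}{\vtheta_0}{\vtheta}\le \int_0^T\norm{\dot\gamma}\,dt$, and squaring (both sides nonnegative) gives the bound. If one prefers to start from the energy formulation of Definition~\ref{def:canonical-energy}, I would first reparametrise $\gamma$ to $[0,1]$ at constant speed. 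By Proposition~\ref{prop:constant-speed}, its energy then equals $L^2(\gamma)$ and dominates the minimal energy, which equals the squared geodesic distance. Reparametrisation invariance of $L$ (noted after Definition~\ref{def:path-length}) makes the choice of time-scale immaterial.

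For the equality case, I would argue as follows. If $\gamma$ is a length-minimising geodesic, its length attains the infimum, so equality holds regardless of speed, since length is parametrisation-invariant. Conversely, suppose $L(\gamma)=\dist{}{\mathrm{flow}}{\vtheta_0}{\vtheta}$. Then $\gamma$ is a length minimiser among admissible paths. Reparametrise it by arc length; a minimiser of length is locally minimising on every subsegment, because any strict improvement on a subsegment would contradict global minimality. On the smooth Riemannian manifold $(\M{\vtheta_0}\cap V, g_{\mathrm{ambient}})$ of Theorem~\ref{thm:local-chart}, such curves are geodesics by the standard first-variation argument, with existence guaranteed by Remark~\ref{rmk:geodesic-existence}.

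The main obstacle is the precise meaning of ``geodesic'' in the equality clause. A geodesic that is not minimising, for instance one past a conjugate point, satisfies the geodesic equation but gives strict inequality. I would therefore read the statement as saying that $\gamma$ is a \emph{minimising} geodesic up to reparametrisation, and state this qualification explicitly. A second subtlety is that the horizontality constraint $\dot\gamma\in\mathrm{Im}(\J{\gamma}^\top)$ may be stricter than tangency to $\M{\vtheta_0}$, which is exactly the $N\neq0$ case of Theorem~\ref{thm:metric-gap}. In that case the infimum should be taken over the same admissible class that defines $d_{\mathrm{flow}}$, and the inequality is unaffected. For the equality characterisation, I would restrict to the case where the minimising geodesic is itself admissible, and otherwise replace ``geodesic'' with ``admissible length minimiser''.
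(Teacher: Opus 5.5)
Your proof is correct and follows the paper's: the bound comes from feasibility of $\gamma$ in the infimum of arc lengths that defines $d_{\mathrm{flow}}$, followed by squaring, and the equality case reduces to $\gamma$ being length-minimising, with length invariant under reparametrisation. Your caveat that ``geodesic'' must be read as \emph{minimising} geodesic (and, when $T\M{\vtheta_0}\neq\mathrm{Im}(\J{\vtheta}^\top)$, as admissible length minimiser) is a real sharpening, since the paper's Step~2 simply asserts that length-minimising paths are exactly the geodesics, which holds only locally.
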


\begin{proof}
\emph{Step 1: Geodesic distance as infimum of arc lengths.} The geodesic distance $\dist{}{\mathrm{flow}}{\vtheta_0}{\vtheta}$ equals the infimum of arc lengths $L(\gamma) = \int_0^T\norm{\dot\gamma(t)}\,dt$ over all piecewise-$C^1$ horizontal paths $\gamma$ from $\vtheta_0$ to $\vtheta$; this is the standard definition of Riemannian distance with the ambient Euclidean metric restricted to $\M{\vtheta_0}$. Therefore, for any admissible path $\gamma$,
\[
    \dist{}{\mathrm{flow}}{\vtheta_0}{\vtheta} \;\leq\; L(\gamma) \;\leq\; L(\gamma_{\mathrm{train}}),
\]
where the second inequality holds because $\gamma_{\mathrm{train}}$ is itself an admissible horizontal path. Squaring both sides gives $\dist{2}{\mathrm{flow}}{\vtheta_0}{\vtheta} \leq L^2(\gamma_{\mathrm{train}})$.

\emph{Step 2: Tightness condition.} Equality $\dist{2}{\mathrm{flow}}{\vtheta_0}{\vtheta} = L^2(\gamma)$ holds if and only if $L(\gamma) = \dist{}{\mathrm{flow}}{\vtheta_0}{\vtheta}$, i.e.\ $\gamma$ is a length-minimising path. By Proposition~\ref{prop:constant-speed}, among all paths of fixed total arc length, the minimum energy $E(\gamma) = \int\norm{\dot\gamma}^2\,dt$ is achieved by the constant-speed reparametrisation; length-minimising paths are exactly the geodesics, regardless of their time-parametrisation. Since $L^2(\gamma) = (\text{arc length})^2$ depends only on the set-theoretic image of $\gamma$, equality holds for any time-parametrisation of a geodesic.

\emph{Step 3: Inclusion of the fibre term.} When $\vtheta(T)$ lies exactly on $\M{\vtheta_0}$ (as guaranteed by Corollary~\ref{cor:intersection-basin} under exact gradient flow), the canonical regulariser satisfies $\dist{2}{}{\vtheta(T)}{\vtheta_0} = \dist{2}{\mathrm{flow}}{\vtheta_0}{\vtheta(T)}$, and the bound applies directly. In the discrete-step setting, $\vtheta(T)$ may deviate slightly off the manifold, contributing a non-negative fibre term $\dist{2}{\mathrm{fibre}}{\vtheta'(\g{\vtheta(T)})}{\vtheta(T)}$; the path length still upper-bounds the total canonical regulariser since it upper-bounds the flow component alone.
\end{proof}

\subsection{Minimax Robust Motivation}
\label{subsec:minimax}

Propositions~\ref{prop:ridge-lower} and~\ref{prop:pathlength-upper-detail} establish the sandwich
\[
    \norm{\vtheta - \vtheta_0}^2 \;\leq\; \dist{2}{}{\vtheta}{\vtheta_0} \;\leq\; L^2(\gamma_{\mathrm{train}}) \;=:\; s(T)^2,
\]
which holds for all $\vtheta \in \M{\vtheta_0}$ at any time $T$. The canonical regulariser $\dist{2}{}{\vtheta}{\vtheta_0}$ is intractable, but at each training step $t$ the accumulated path length $s(t)$ is known exactly.

\begin{proposition}[Minimax robust regulariser]
\label{prop:minimax-robust}
Fix $\lambda > 0$ and let the unknown regulariser value $r^\star = \dist{2}{}{\vtheta}{\vtheta_0}$ be known only to lie in the interval $[r_{\mathrm{lo}}, r_{\mathrm{hi}}] = [\norm{\vtheta-\vtheta_0}^2,\, s(T)^2]$ (from the sandwich). The solution to the minimax problem
\[
    \min_{\vtheta} \max_{r \in [r_{\mathrm{lo}},\, r_{\mathrm{hi}}]} \bigl[\Loss{\g{\vtheta}}{Y} + \lambda r\bigr]
\]
uses the worst-case regulariser $r = r_{\mathrm{hi}} = s(T)^2$ and is therefore equivalent to minimising the path-length regularised objective $\Loss{\g{\vtheta}}{Y} + \lambda s(T)^2$. Squared path length is thus the minimax-optimal practical regulariser within the sandwich.
\end{proposition}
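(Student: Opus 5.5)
The argument is a pointwise monotonicity argument for the map $r\mapsto \Loss{\g{\vtheta}}{Y}+\lambda r$. It then identifies the resulting min–max objective with the arc-ridge objective.

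\emph{Step 1 (inner maximisation).} Fix $\vtheta$ and treat $\Loss{\g{\vtheta}}{Y}$ as a constant. For $\lambda>0$ the inner objective $r\mapsto \Loss{\g{\vtheta}}{Y}+\lambda r$ is affine with strictly positive slope, so on the compact interval $[r_{\mathrm{lo}},r_{\mathrm{hi}}]$ its maximum is attained uniquely at the right endpoint. Therefore
\[
\max_{r\in[r_{\mathrm{lo}},r_{\mathrm{hi}}]}\bigl[\Loss{\g{\vtheta}}{Y}+\lambda r\bigr]=\Loss{\g{\vtheta}}{Y}+\lambda s(T)^2 .
\]
This requires the interval to be non-empty, i.e.\ $r_{\mathrm{lo}}\le r_{\mathrm{hi}}$. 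That follows from chaining Proposition~\ref{prop:ridge-lower} with Proposition~\ref{prop:pathlength-upper-detail} (equivalently Corollary~\ref{cor:sandwich}) for $\vtheta\in\M{\vtheta_0}$ reached by the training path. The true value $r^\star$ lies in the interval by the same sandwich, so the adversary's feasible set genuinely contains the unknown regulariser.

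\emph{Step 2 (outer minimisation).} Since the inner maximum equals $\Loss{\g{\vtheta}}{Y}+\lambda s(T)^2$ pointwise in $\vtheta$, the two outer problems have identical objectives. Hence they have identical argmin sets and optimal values, which gives the claimed equivalence with arc-ridge-regularised training.

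\emph{Step 3 (minimax optimality and regret).} To justify calling $s(T)^2$ \emph{minimax-optimal}, I will show that among all surrogates $\tilde r(\vtheta)$ valued in the sandwich, $\tilde r=r_{\mathrm{hi}}$ is the only one whose surrogate objective never underestimates the true objective $\Loss{\g{\vtheta}}{Y}+\lambda r^\star$ for any admissible $r^\star$. This is the standard saddle-point reading of a robust objective: $\tilde r\ge r$ for every $r$ in the interval forces $\tilde r\ge r_{\mathrm{hi}}$, and tightness gives equality.

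\textbf{Main obstacle.} The main subtlety is that $r_{\mathrm{hi}}=s(T)^2$ depends on the training path rather than on $\vtheta$ alone, so the outer minimisation over $\vtheta$ must be understood along the trajectory. I will handle this by interpreting $s(T)^2$ as a function of the endpoint via the path that reaches it. Lemma~\ref{lem:arclength-grad} and Theorem~\ref{thm:path-length-upper} then supply the endpoint derivative, so the equivalence holds at the level of both objective values and first-order (gradient-flow) dynamics.
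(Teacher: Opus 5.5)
Your proposal is correct and follows the paper's proof. Both observe that for $\lambda>0$ the map $r\mapsto\Loss{\g{\vtheta}}{Y}+\lambda r$ is strictly increasing, so the inner maximum is attained at $r_{\mathrm{hi}}=s(T)^2$ and the outer minimisation coincides with the arc-ridge objective; your extra points (non-emptiness of the interval via the sandwich, the path-dependence of $s(T)^2$, and the Step~3 uniqueness observation) add care the paper omits but are not needed for the statement as posed.
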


\begin{proof}
For any fixed $\vtheta$, the map $r \mapsto \Loss{\g{\vtheta}}{Y} + \lambda r$ is linear and strictly increasing in $r$ (since $\lambda > 0$). Therefore, $\max_{r \in [r_{\mathrm{lo}}, r_{\mathrm{hi}}]}[\Loss{\g{\vtheta}}{Y} + \lambda r] = \Loss{\g{\vtheta}}{Y} + \lambda r_{\mathrm{hi}} = \Loss{\g{\vtheta}}{Y} + \lambda s(T)^2$, attained at $r = s(T)^2$. Minimising over $\vtheta$ gives the stated equivalence.
\end{proof}

\noindent The choice $\lambda = \sigma^2 / n$---where $\sigma^2$ is the observation noise variance and $n$ is the number of training points---has a Bayesian interpretation as the ratio of the likelihood precision to the prior precision in MAP inference. Under a Gaussian likelihood with noise $\sigma$ (i.e., the SSE setting of A.3 paired with $\Loss{\g{\vtheta}}{Y}$ as $-\log p(Y|\g{\vtheta})$ up to constants), this $\lambda$ makes the regularisation gradient dimensionally consistent with the data gradient (both scale as $1/n$ in the large-data limit), and the stopping criterion derived in Section~\ref{subsec:early-stopping} below requires only an estimate of $\sigma^2$ rather than a validation set. For other strictly convex losses, the analogous coupling reads $\lambda \propto \beta / (\text{likelihood precision})$, with the prior scale $\beta$ of Definition~\ref{def:canonical-energy} substituted for the Gaussian-specific factor $1/(2\sigma^2)$.

\subsection{Gradient Under Standard Gradient Flow}
\label{subsec:pathlength-gradient}

We derive the regularisation gradient $\nabla_{\vtheta} R = \nabla_{\vtheta} s(T)^2$ in the case of unmodified gradient flow.

\begin{proposition}[Path-length regularisation gradient]
\label{prop:pathlength-grad}
Under gradient flow $\dot\vtheta(t) = -\nabla_{\vtheta}\Loss{\g{\vtheta}}{Y}$, the gradient of the squared path-length regulariser $R(T) = s(T)^2$ with respect to the endpoint $\vtheta(T)$ is
\[
    \nabla_{\vtheta(T)} R \;=\; 2s(T) \cdot \frac{\dot\vtheta(T)}{\norm{\dot\vtheta(T)}} \;=\; -\frac{2s(T)}{\norm{\nabla_{\vtheta} \mathcal{L}}} \cdot \nabla_{\vtheta} \mathcal{L}\Big|_{\vtheta(T)}.
\]
This gradient lies in $\operatorname{Im}(\J{\vtheta(T)}^\top)$, has magnitude $2s(T)$, and is anti-parallel to the data gradient.
\end{proposition}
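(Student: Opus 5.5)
The plan is to treat $R(T)=s(T)^2$ as a composition: the outer map $s\mapsto s^2$ and the inner arc-length functional $s(T)=\int_0^T\norm{\dot\vtheta(t)}\,dt$, regarded as a function of the endpoint $\vtheta(T)$. The chain rule then gives $\nabla_{\vtheta(T)}R = 2s(T)\,\partial s/\partial\vtheta(T)$. For the inner derivative I will invoke Lemma~\ref{lem:arclength-grad} directly. It states that, for a piecewise-$C^1$ path with $\dot\gamma(T^-)\neq0$, the endpoint derivative of arc length is the terminal unit tangent $\dot\gamma(T^-)/\norm{\dot\gamma(T^-)}$. Since the gradient-flow trajectory is $C^1$ (Picard--Lindel\"of, as in the proof of Proposition~\ref{prop:gf-convergence}), the one-sided limit coincides with $\dot\vtheta(T)$. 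This yields the first equality.

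Next I substitute the gradient-flow velocity $\dot\vtheta(T)=-\nabla_{\vtheta}\mathcal{L}|_{\vtheta(T)}$, so that $\norm{\dot\vtheta(T)}=\norm{\nabla_{\vtheta}\mathcal{L}}$, which gives the second expression. The remaining properties follow from this form.
\begin{itemize}
\item \emph{Magnitude.} The gradient has norm $2s(T)$, because it is $2s(T)$ times a unit vector.
\item \emph{Direction.} It is anti-parallel to $\nabla_{\vtheta}\mathcal{L}$, because the prefactor $-2s(T)/\norm{\nabla\mathcal{L}}$ is negative whenever $s(T)>0$.
\item \emph{Horizontality.} It lies in $\operatorname{Im}(\J{\vtheta(T)}^\top)$, because Lemma~\ref{lem:grad-formula} gives $\nabla_{\vtheta}\mathcal{L}=-\J{\vtheta}^\top\vu$, and a scalar multiple of a vector in that image stays in the image.
\end{itemize}

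The main obstacle is the nondegeneracy hypothesis $\dot\vtheta(T)\neq0$ required by Lemma~\ref{lem:arclength-grad}. I plan to obtain it from Lemma~\ref{lem:crit-interp}: for finite $T$ with $\g{\vtheta(T)}\neq Y$ we have $\nabla\mathcal{L}\neq0$. Proposition~\ref{prop:gf-convergence} shows the residual decays exponentially but never vanishes at finite time when $\g{\vtheta_0}\neq Y$, since by uniqueness of solutions a trajectory cannot reach an equilibrium in finite time. The degenerate cases $s(T)=0$ (trivial start, $T=0$) and $\vtheta_0\in\Theta^\star$ I will handle separately by convention: the gradient is $0$ there, consistent with the formula read as $2s(T)\cdot(\text{unit vector})$. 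I should also state explicitly that the differentiation is with respect to the endpoint, holding the past trajectory fixed, which is exactly the sense in which Lemma~\ref{lem:arclength-grad} is formulated.
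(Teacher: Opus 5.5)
Your proposal is correct and follows the paper's proof essentially step for step. The shared steps are the chain rule on $s^2$, Lemma~\ref{lem:arclength-grad} for the endpoint derivative of arc length, substitution of $\dot\vtheta=-\nabla_{\vtheta}\mathcal{L}$, and Lemma~\ref{lem:grad-formula} for membership in $\operatorname{Im}(\J{\vtheta(T)}^\top)$; your extra check that $\dot\vtheta(T)\neq 0$ at finite $T$ (via Lemma~\ref{lem:crit-interp} and uniqueness of solutions), which the paper uses only implicitly when invoking Lemma~\ref{lem:arclength-grad}, is a welcome addition rather than a departure.
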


\begin{proof}
Write $s(T) = \int_0^T\norm{\dot\vtheta(t)}\,dt$. By Lemma~\ref{lem:arclength-grad}, $\partial s/\partial \vtheta(T) = \dot\vtheta(T)/\norm{\dot\vtheta(T)}$, the unit tangent at the endpoint. Applying the chain rule: $\nabla_{\vtheta(T)} R = 2s(T) \cdot \partial s/\partial\vtheta(T) = 2s(T)\,\dot\vtheta(T)/\norm{\dot\vtheta(T)}$. Under gradient flow $\dot\vtheta = -\nabla_{\vtheta}\mathcal{L}$, so $\dot\vtheta/\norm{\dot\vtheta} = -\nabla_{\vtheta}\mathcal{L}/\norm{\nabla_{\vtheta}\mathcal{L}}$. Since gradient flow satisfies $\dot\vtheta = -\nabla_{\vtheta}\mathcal{L} = \J{\vtheta}^\top\vu $ for some $\vu $ (Lemma~\ref{lem:grad-formula}), we have $\dot\vtheta \in \operatorname{Im}(\J{\vtheta}^\top)$; hence the gradient lies in $\operatorname{Im}(\J{\vtheta}^\top) = T_{\vtheta(T)}\M{\vtheta_0}$ and has no fibre component, as claimed by Theorem~\ref{thm:path-length-upper}.
\end{proof}

\noindent Two properties of this gradient deserve emphasis:
\begin{itemize}[nosep]
    \item \emph{Monotonically increasing magnitude}: $s(T)$ grows throughout training, so $\norm{\nabla R} = 2s(T)$ increases over time regardless of whether the loss is decreasing.
    \item \emph{No fibre bias}: $\nabla R \in \operatorname{Im}(\J{\vtheta}^\top)$, so regularisation by $\lambda R$ introduces no gradient component in $\ker(\J{\vtheta})$; it does not push $\vtheta$ off the flow manifold.
\end{itemize}

\subsection{Formal Equivalence to Early Stopping Under Gradient Flow}
\label{subsec:early-stopping}

\begin{proposition}[Path-length regularisation is equivalent to early stopping]
\label{prop:identical-trajectory}
Assume additionally that $\mathcal{L}(\cdot, Y)$ is strictly convex in model outputs (i.e.\ $\vect{c} \mapsto \mathcal{L}(\vect{c}, Y)$ is strictly convex on $\mathcal{Y}^n$; this holds for all standard losses including SSE, softmax cross-entropy, and logistic loss). Under gradient flow with regularised objective $\Loss{\g{\vtheta}}{Y} + \lambda R$, where $R = s(T)^2$, the training trajectory $\{\vtheta(t)\}_{t \geq 0}$ is identical (as a set-valued path in parameter-space) to the unregularised gradient flow trajectory. (In particular, $s(T)$ is the arc length of the path-image, which depends only on the path geometry and not on its time parametrisation; it is therefore the same quantity whether computed along the regularised or unregularised trajectory.) The dynamics converge at the unique stopping time $T^\star$ satisfying
\begin{equation}
\label{eq:stopping-condition}
    2\lambda s(T^\star) = \norm{\nabla_{\vtheta} \mathcal{L}\bigl(\vtheta(T^\star)\bigr)},
\end{equation}
at which the total gradient is zero. Moreover, $T^\star < \infty$ whenever $\lambda > 0$.
\end{proposition}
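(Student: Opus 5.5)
The plan is to work with the regularised dynamics $\dot\vtheta = -\nabla_{\vtheta}\mathcal{L} - \lambda\nabla_{\vtheta}R$, computing $\nabla_{\vtheta}R$ with Proposition~\ref{prop:pathlength-grad}, and to compare them with unregularised gradient flow. The first step is to rewrite the regularised vector field using that proposition. Where the regularised path is itself moving along the data-gradient direction, the regulariser gradient is $2s\,\dot\vtheta/\norm{\dot\vtheta}$, which is anti-parallel to the data gradient. The candidate is therefore an ansatz: the regularised trajectory is a time-reparametrisation of the unregularised one. Its velocity would be
\[
    \dot\vtheta \;=\; -\Bigl(1 - \frac{2\lambda s}{\norm{\nabla_{\vtheta}\mathcal{L}}}\Bigr)\nabla_{\vtheta}\mathcal{L},
\]
i.e. the unregularised field multiplied by the scalar factor $\alpha(t) := 1 - 2\lambda s(t)/\norm{\nabla_{\vtheta}\mathcal{L}(\vtheta(t))}$. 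The regulariser adds no transverse component, so the direction of motion is unchanged.

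The second step is to verify that the ansatz is self-consistent. Let $\tau$ be unregularised flow time and set $\vtheta(t) = \phi(\tau(t))$, where $\phi$ is the unregularised trajectory. Then $s(t) = \int_0^{\tau(t)}\norm{\phi'}\,d\tau$, because arc length is invariant under reparametrisation, as noted in Definition~\ref{def:path-length}. Plugging in gives the scalar ODE
\[
    \dot\tau \;=\; 1 - \frac{2\lambda s(\tau)}{\norm{\nabla\mathcal{L}(\phi(\tau))}}.
\]
Its right-hand side is locally Lipschitz while $\nabla\mathcal{L}\neq 0$. Picard--Lindelöf gives a unique solution, and by uniqueness of the full ODE the ansatz is the true regularised trajectory. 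Strict convexity of the loss in outputs, together with Lemma~\ref{lem:crit-interp}, ensures that $\nabla\mathcal{L}\neq 0$ away from $\Theta^\star$, so the path image lies on the unregularised one and there is no fibre drift.

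The third step establishes the stopping time. The numerator $2\lambda s(\tau)$ is nondecreasing in $\tau$, and strictly increasing while the flow moves. The denominator $\norm{\nabla\mathcal{L}(\phi(\tau))}$ decays exponentially to $0$ by Proposition~\ref{prop:gf-convergence}, since it is bounded by $2C\norm{r(0)}e^{-2c^2\tau}$. Consequently the ratio starts at $0$ (because $s(0)=0$) and tends to $+\infty$. By continuity there is a first $\tau^\star<\infty$ at which $2\lambda s = \norm{\nabla\mathcal{L}}$, which is \eqref{eq:stopping-condition}, and there the total gradient vanishes. I would take the \emph{first} crossing as the definition of $T^\star$. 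Uniqueness of the crossing should follow if $s/\norm{\nabla\mathcal{L}}$ is monotone. I would argue this locally: $s$ increases while $\norm{\nabla\mathcal{L}}$ eventually decreases, which is where I would use the output-space decay $\tfrac{d}{dt}\norm{r}^2\le -4c^2\norm{r}^2$. Otherwise I would simply define $T^\star$ as the first crossing.

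The main obstacle is that $\tau(t)$ approaches $\tau^\star$ with $\dot\tau\to 0$. The issue is whether the regularised time $T^\star$ is finite or whether $\tau^\star$ is only reached asymptotically. Near $\tau^\star$ we have $\dot\tau \approx -\kappa(\tau-\tau^\star)$ with $\kappa>0$, which gives exponential approach, i.e. infinite regularised time. So I would interpret the claim ``$T^\star<\infty$'' as finiteness of the unregularised stopping time $\tau^\star$, which is the early-stopping time along the shared path. I would state this identification explicitly, and note that the fixed point $\phi(\tau^\star)$ is the regularised equilibrium.
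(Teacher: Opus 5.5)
Your proposal is correct and follows essentially the paper's argument. Like the paper, you write the total gradient as the positive multiple $\bigl(1-2\lambda s/\norm{\nabla_{\vtheta}\mathcal{L}}\bigr)\nabla_{\vtheta}\mathcal{L}$ of the data gradient, so only the speed changes, and then locate the crossing $2\lambda s=\norm{\nabla_{\vtheta}\mathcal{L}}$ along the unregularised path by continuity. Your time change $\tau(t)$ makes the ``same path'' claim precise. Taking the first crossing as the definition of $T^\star$ is the right choice. Your monotonicity idea would not give uniqueness, because only $\norm{\g{\vtheta}-Y}$, and not $\norm{\nabla_{\vtheta}\mathcal{L}}$, is guaranteed to decrease; but the paper's intermediate-value argument also only yields existence.

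Your reading of $T^\star<\infty$ as the unregularised early-stopping time is exactly what the paper proves. Its function $h(t)=2\lambda s(t)-\norm{\nabla_{\vtheta}\mathcal{L}(\vtheta(t))}$ is evaluated along the unregularised flow, and, as you correctly observe, the regularised flow only approaches the stopping point asymptotically. Your ratio formulation also avoids the paper's slip that $h(t)\to+\infty$: the arc length is finite by Lemma~\ref{lem:traj-length}, although $h(t)\to 2\lambda s(\infty)>0$ still suffices for its argument.
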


\begin{proof}
By Proposition~\ref{prop:pathlength-grad}, the total regularised gradient is
\[
    \nabla_{\vtheta}\bigl[\mathcal{L} + \lambda R\bigr] \;=\; \nabla_{\vtheta}\mathcal{L} - \frac{2\lambda s(T)}{\norm{\nabla_{\vtheta}\mathcal{L}}} \cdot \nabla_{\vtheta}\mathcal{L} \;=\; \left(1 - \frac{2\lambda s(T)}{\norm{\nabla_{\vtheta}\mathcal{L}}}\right)\nabla_{\vtheta}\mathcal{L}.
\]
As long as $2\lambda s(t) < \norm{\nabla_{\vtheta}\mathcal{L}(\vtheta(t))}$, the factor in parentheses is positive, so $\nabla_{\vtheta}[\mathcal{L} + \lambda R]$ is a positive scalar multiple of $\nabla_{\vtheta}\mathcal{L}$. Gradient flow in the direction $-\nabla_{\vtheta}[\mathcal{L} + \lambda R]$ is therefore identical in direction (though not in speed) to unregularised gradient flow in direction $-\nabla_{\vtheta}\mathcal{L}$; the trajectory is the same path.

To show $T^\star$ exists and is finite: $s(t)$ is non-decreasing and tends to $\infty$ along a trajectory that does not converge (since $\norm{\dot\vtheta} = \norm{\nabla_{\vtheta}\mathcal{L}}$ is bounded away from zero until convergence). By A.3, $\nabla_{\vtheta} \mathcal{L}(\vtheta) = 0$ if and only if $\g{\vtheta} = Y$ (Lemma~\ref{lem:crit-interp}); under Assumption~\ref{ass:sigma-min}, the Grönwall bound in Lemma~\ref{lem:traj-length} gives $\norm{\nabla_{\vtheta}\mathcal{L}(\vtheta(t))} = \norm{\J{\vtheta(t)}^\top \nabla h(\g{\vtheta(t)})} \le \sigma_{\max} M \norm{r(t)} \to 0$ exponentially (with $h := \Loss{\cdot}{Y}$). Therefore the continuous function $h(t) = 2\lambda s(t) - \norm{\nabla_{\vtheta}\mathcal{L}(\vtheta(t))}$ satisfies $h(0) < 0$ and $h(t) \to +\infty$; by the intermediate value theorem there exists $T^\star$ with $h(T^\star) = 0$, i.e.\ condition~\eqref{eq:stopping-condition} holds.
\end{proof}

\paragraph{Comparison with classical early stopping.} Proposition~\ref{prop:identical-trajectory} shows that path-length regularisation is a \emph{continuous-time, gradient-based} analogue of early stopping:
\begin{enumerate}[nosep, label=(\roman*)]
    \item \emph{Same trajectory}: the model visits the same sequence of parameter values as unregularised training, only stopping earlier.
    \item \emph{No validation set}: the stopping condition~\eqref{eq:stopping-condition} depends only on training quantities---the cumulative arc length $s(T)$, the current data-loss gradient magnitude, and $\lambda$. Classical early stopping requires a held-out validation set.
    \item \emph{Interpretable via noise}: setting $\lambda = \sigma^2/n$ links the stopping criterion to the observation noise $\sigma^2$: training stops at $T^\star$ with $s(T^\star) = \norm{\nabla_{\vtheta}\mathcal{L}(\vtheta(T^\star))} \cdot n / (2\sigma^2)$. An estimate of $\sigma^2$ from the likelihood model or held-out noise is sufficient.
    \item \emph{Grokking avoidance}: the path length $s(t)$ is monotone; as training approaches the interpolating manifold, $\norm{\nabla_{\vtheta}\mathcal{L}} \to 0$ while $s$ continues to grow, ensuring that $T^\star$ lies strictly before full interpolation for any $\lambda > 0$.
\end{enumerate}


\section{Experimental Details}
We discuss here the main details surrounding the two large-scale experiments in Section~\ref{sec:experiments}. We share our codebase at \hyperlink{https://github.com/GWhittle110/canonical-regularization/}{https://github.com/GWhittle110/canonical-regularization/}, which is predominantly based on the PyTorch~\cite{paszke2019pytorch} and mup~\cite{yang2021tuning} libraries.

\subsection{UTKFace}
For this experiment, we adapt the ResNet18 architecture~\cite{he2016deep} by replacing the classification head with a MuReadout regression head using default Pytorch Linear initialisation (kaiming uniform). We transfer learn from pretraining with no weight freezing, ensuring $\mu P$ via per-layer learning rate scaling. We train each regulariser setup over 3 seeds for 50 epochs with a batch size of 128, base learning rate of 0.003 (decreased to 0.0003 for the largest regularisation strength due to instability; early stopping prevents non-regulariser-induced undertraining here nonetheless), and momentum of 0.9 (to account for stochasticity induced by minibatching). We report test performance from the designated evaluation dataset~\citep{zhifei2017cvpr}, and use an 80-20 validation split for early stopping. We further corrupt targets with a per-datapoint fixed (for a given training run) additive isotropic 0-mean Gaussian noise with a standard deviation of 0.3 years.  

\subsection{Yelp Review}
For this experiment, we use the DistilBERT~\citep{sanh2019distilbert} encoder base architecture, again with a MuReadout regression head using default Pytorch Linear initialisation (kaiming uniform). We transfer learn from pretraining again with with no weight freezing, ensuring $\mu P$ via per-layer learning rate scaling. We train each regulariser setup over 2 seeds (due to excess runtime) over a number of epochs determined using the patience mechanism, that is, we train for a base 20 epochs, then subsequently whenever the final epoch exhibits the best validation loss we train for a further three epochs, again due to the excessive runtime preventing substantial overtraining. We again train with a base learning rate of 0.003 (this time decreasing to 0.0001 for the largest regularisation strength) and momentum of 0.9. We further process the training dataset by first transforming it to a valid regression target by normalising the scores (1 star to 5 star) to the interval [0, 1] by affine transformation, transforming to the reals via logit transform (clipping scores at 0.01 and 0.99 to prevent infinite targets), then corruption by with a per-datapoint fixed (for a given training run) additive isotropic 0-mean Gaussian noise with a standard deviation of 0.1 (unitless).

\subsection{Compute}
Both experiments were carried out remotely on Modal A10 GPUs. The UTKFace experiments run relatively quickly at roughly 10 minutes per regulariser per seed, while the Yelp Review experiment is considerably slower at roughly 5 hours per regulariser per seed. Total project compute spend was roughly \$315 up to the point of submission.


\section{Empirical Verification of Assumption~\ref{ass:sigma-min}}
\label{app:assumption-verification}

We probe Assumption~\ref{ass:sigma-min} (uniform Jacobian conditioning) directly along gradient-flow trajectories. Recall the assumption claims the existence of constants $0 < c \le C < \infty$ such that $c\le\sigma_{\min}(\J{\vtheta})$ and $\sigma_{\max}(\J{\vtheta})\le C$ for all $\vtheta\in U$, with $U$ an open neighbourhood of $\Theta^\star$ traversed by the trajectory. Equivalently, the NTK Gram $\K{\vtheta}=\J{\vtheta}\J{\vtheta}^\top$ has $\sigma_{\min}^2(\J{\vtheta})\le\lambda_{\min}(\K{\vtheta})$ and $\lambda_{\max}(\K{\vtheta})\le\sigma_{\max}^2(\J{\vtheta})$, so monitoring $\lambda_{\min}(\K{\vtheta})$ and $\lambda_{\max}(\K{\vtheta})$ along training is a direct empirical proxy.

\paragraph{Setup.} We train a $\mu P$-parameterised MLP with two hidden layers and GELU activations~\citep{yang2021tuning, yang2020tensor3} on the 1-D regression toy dataset $\mathcal{D}=\{(x_i, y_i)\}_{i=1}^{16}$ with $x_i\in[-1,1]$ uniformly spaced, $y_i = \sin(\tfrac{\pi}{2} x_i) + \varepsilon_i$ for $\varepsilon_i\sim\mathcal{N}(0,\sigma^2)$ with $\sigma$ small. Training uses full-batch gradient descent (a discrete-time approximation of gradient flow) with no regularisation (Ridgeless), at hidden widths $d_{\mathrm{model}}\in\{64, 256, 1024, 4096, 16384\}$ across multiple seeds. At each epoch we record $\lambda_{\min}(\K{\vtheta})$, $\lambda_{\max}(\K{\vtheta})$, and the training loss; epochwise quantities are aggregated across seeds with shaded confidence regions denoting standard error.

\paragraph{Results.} Figure~\ref{fig:assumption-verification} shows the three quantities. Two qualitative observations support Assumption~\ref{ass:sigma-min}:
\begin{enumerate}[label=(\roman*),nosep]
    \item the minimum NTK eigenvalue $\lambda_{\min}(\K{\vtheta})$ remains bounded away from $0$ throughout training, with stable properties across width and growth to a stable maximum across training;
    \item the maximum NTK eigenvalue $\lambda_{\max}(\K{\vtheta})$ remains finite throughout training, with width-dependent magnitude that stabilises as $d_{\mathrm{model}}\to\infty$;
    \item the training loss decreases monotonically, consistent with exponential convergence of gradient flow to an interpolator under Assumption~\ref{ass:sigma-min} (Proposition~\ref{prop:gf-convergence}).
\end{enumerate}
The combination of (i) a strict positive lower bound on $\lambda_{\min}(\K{\vtheta})$ along the trajectory and (ii) a uniform upper bound on $\lambda_{\max}(\K{\vtheta})$ is exactly the spectral condition required by Assumption~\ref{ass:sigma-min}; (iii) verifies the trajectory remains in $U$ until convergence. Together these provide preliminary empirical evidence that Assumption~\ref{ass:sigma-min} holds across widths in the practical $\mu P$ regime, complementing the theoretical justification in Appendix~\ref{sec:assumptions} (A.4) and Remark~\ref{rmk:mup-reduction}. A more comprehensive verification across architectures and tasks remains a natural avenue for follow-up work.

\begin{figure}[h]
  \centering
  \begin{minipage}{0.49\textwidth}
    \centering
    \includegraphics[width=\linewidth]{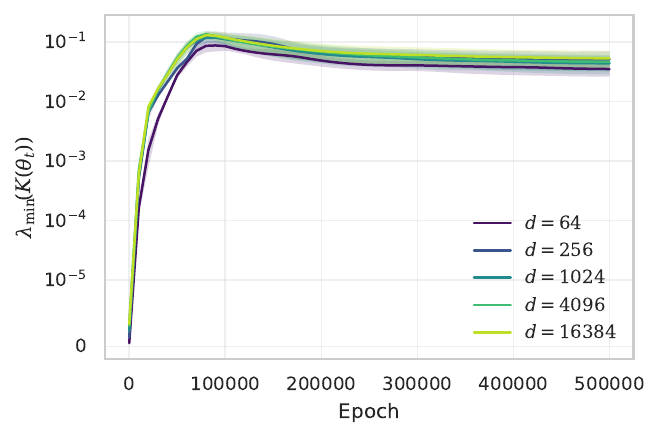}
  \end{minipage}
  \hfill
  \begin{minipage}{0.49\textwidth}
    \centering
    \includegraphics[width=\linewidth]{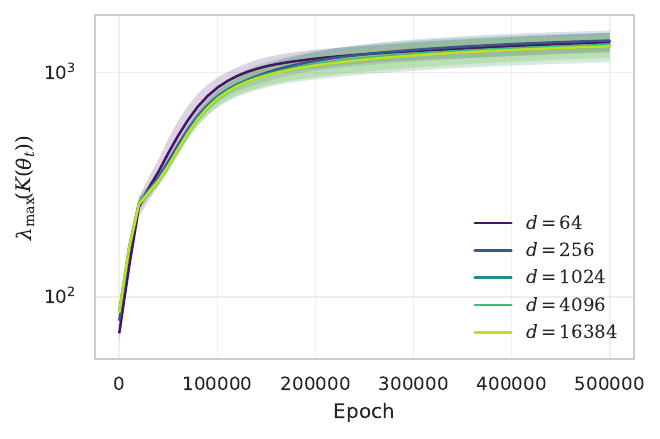}
  \end{minipage}
  \hfill
  \begin{minipage}{0.5\textwidth}
    \centering
    \includegraphics[width=\linewidth]{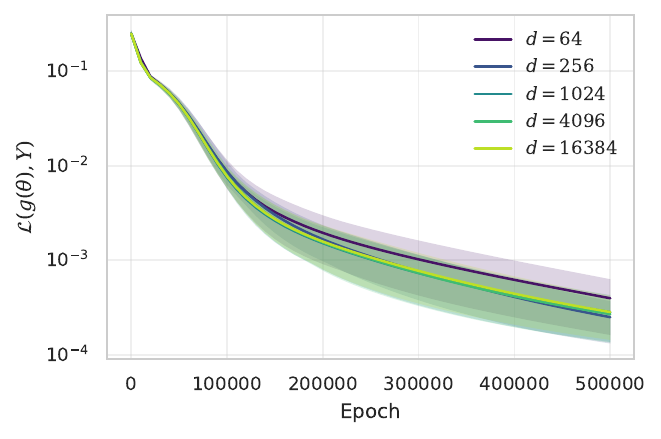}
  \end{minipage}
  \caption{Empirical verification of Assumption~\ref{ass:sigma-min} along gradient-flow trajectories of a 2-hidden-layer $\mu P$ MLP with GELU activations on a 16-point 1-D regression toy ($y = \sin(\tfrac{\pi}{2} x) + \varepsilon$), at widths $d_{\mathrm{model}}\in\{64, 256, 1024, 4096, 16384\}$. \emph{Left}: minimum NTK eigenvalue $\lambda_{\min}(\K{\vtheta})$ over training (corresponds to the lower spectral bound $2c^2$ in Assumption~\ref{ass:sigma-min}). \emph{Centre}: maximum NTK eigenvalue $\lambda_{\max}(\K{\vtheta})$ over training (corresponds to the upper spectral bound $2C^2$). \emph{Right}: training loss over training. Both spectral quantities remain bounded throughout training and across widths; the loss decreases monotonically, consistent with the exponential-convergence prediction of Proposition~\ref{prop:gf-convergence}.}
  \label{fig:assumption-verification}
\end{figure}



\end{document}